\newtheorem{theorem}{Theorem}[section]
\newtheorem{cor}[theorem]{Corollary}
\newtheorem{lem}[theorem]{Lemma}
\newtheorem{assumption}[theorem]{Assumption}
\newtheorem{problem}{Problem}
\newtheorem{definition}[theorem]{Definition}
\newtheorem{rem}[theorem]{Remark}
\newtheorem{ex}[theorem]{Example}
\newcommand{\set}[1]{\left\{#1\right\}}
\begin{document}
\title{Perception-based Temporal Logic Planning \\in Uncertain Semantic Maps}
\author{Yiannis Kantaros, Samarth Kalluraya, Qi Jin, and George J. Pappas\thanks{The authors are with the School of
Engineering and Applied Sciences, GRASP Lab, University of Pennsylvania,
Philadelphia, PA, 19104, USA. $\left\{\text{kantaros,samarth3,qijin1,pappasg}\right\}$@seas.upenn.edu.  
This material is based upon work supported by the Air Force Research Laboratory (AFRL) and the Defense Advanced Research Projects Agency (DARPA) under Contract No. FA8750-18-C-0090 and the ARL Grant DCIST CRA W911NF-17-2-0181.}}
\maketitle 
\begin{abstract}
This paper addresses a multi-robot planning problem in \textcolor{black}{environments with partially unknown semantics}. The environment is assumed to have known geometric structure (e.g., walls) and to be occupied by static labeled landmarks with uncertain positions and classes. This modeling approach gives rise to an uncertain semantic map generated by semantic SLAM algorithms. Our goal is to design control policies for robots equipped with noisy perception systems so that they can accomplish collaborative tasks captured by global temporal logic specifications. \textcolor{black}{To specify missions that account for environmental and perceptual uncertainty, we employ a fragment of Linear Temporal Logic (LTL), called co-safe LTL, defined over perception-based atomic predicates modeling probabilistic satisfaction requirements.} The perception-based LTL planning problem gives rise to an optimal control problem, solved by a novel sampling-based algorithm, that generates open-loop control policies that are updated online to adapt to a continuously learned semantic map. We provide extensive experiments to demonstrate the efficiency of the proposed planning architecture. 
\end{abstract}
\IEEEpeerreviewmaketitle
   
\section{Introduction} \label{sec:Intro}
\IEEEPARstart{A}{utonomous} robot navigation is a fundamental problem that has received considerable research attention \cite{lavalle2006planning}. The basic motion planning problem consists of generating robot trajectories that reach a known desired goal region starting from an initial configuration while avoiding known obstacles. More recently, a new class of planning approaches have been developed that can handle a richer class of tasks than the classical point-to-point navigation, and can capture temporal and Boolean specifications. Such tasks can be, e.g., sequencing or coverage \cite{fainekos2005temporal}, data gathering \cite{guo2017distributed}, or persistent surveillance \cite{leahy2016persistent}, and can be modeled using formal languages, such as Linear Temporal Logic (LTL) \cite{baier2008principles,clarke1999model}. Several control synthesis methods have been proposed to design correct-by-construction controllers that satisfy temporal logic specifications assuming robots with known dynamics that operate in known environments \cite{fainekos2005hybrid,kress2007s,smith2011optimal,tumova2016multi,chen2012formal,ulusoy2014optimal,shoukry2017linear,vasile2013sampling,kantaros2018text}.
As a result, these methods cannot be applied to scenarios where the environment is initially unknown and, therefore, online re-planning may be required as environmental maps are constructed;  resulting in limited applicability. To mitigate this issue, learning-based approaches have also been proposed that consider robots with unknown dynamics operating in fully or partially unknown environments. These approaches learn policies that directly map on-board sensor readings to control commands in a trial-and-error fashion; see e.g., \cite{dorsa2014learning,li2017reinforcement,jones,hasanbeig2019reinforcement}. However, learning-based approaches, in addition to being data inefficient, are specialized to the environment and the robotic platform they were trained on. 


\begin{figure}[t]
  \centering
\includegraphics[width=1\linewidth]{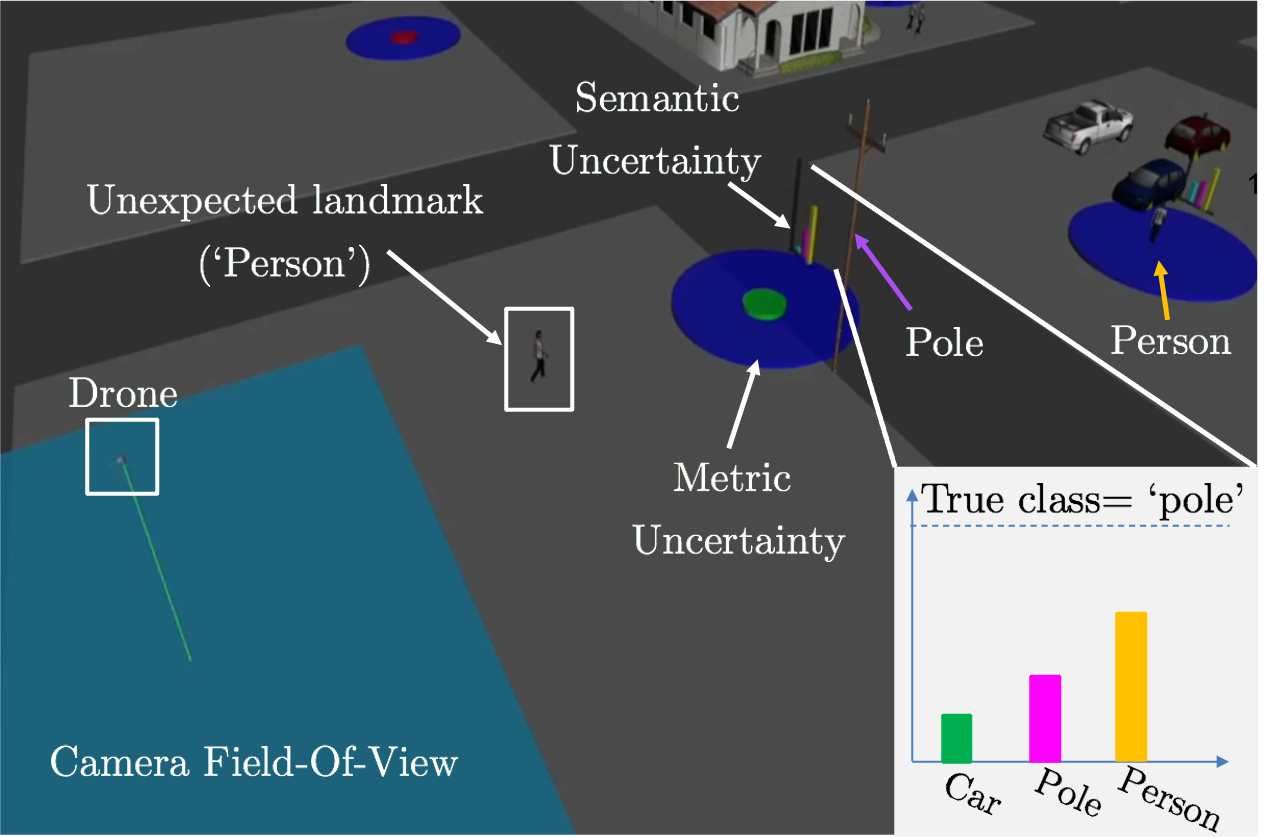}
\caption{\textcolor{black}{A drone equipped with a noisy downward facing camera (blue rectangle) and an object detector is responsible for delivering supplies to uncertain ground units (e.g., people and cars) in a specific order while always avoiding colliding with uncertain poles and flying over known residential areas. The environment is modeled as an uncertain semantic map with known geometry consisting of static labeled landmarks (e.g., poles, people, or parked cars) with uncertain positions (metric uncertainty) and labels (semantic uncertainty). Each landmark is associated with a prior Gaussian and a discrete distribution modeling its positional and class uncertainty, respectively while unknown landmarks without prior information may exist, as well.}}
  \label{fig:PF1}
\end{figure}


In this paper, we address a motion planning problem for a team of mobile sensing robots with known dynamics which are responsible for accomplishing collaborative tasks, expressed using a fragment of LTL, called co-safe LTL, in the presence of environmental and sensing uncertainty. 
Specifically, the environment is assumed to have known geometry while being occupied by static landmarks with uncertain labels/classes (\textit{semantic} uncertainty) located at uncertain positions (\textit{metric} uncertainty) giving rise to an uncertain semantic map with known geometry; see e.g., Figure \ref{fig:PF1}. The semantic map is determined by Gaussian and arbitrary discrete distributions over the positions and labels of the landmarks, respectively. The robots are equipped with noisy/imperfect perception systems (e.g., cameras and object detectors and classifiers) allowing them to take positional and class measurements about the landmarks. To define mission requirements that account for perceptual and environmental uncertainty, similar to \cite{jones2013distribution,haesaert2018temporal}, we specify temporal logic tasks defined over perception-based probabilistic predicates. Specifically, \textcolor{black}{we incorporate probabilistic satisfaction requirements} at the atomic-predicate level as opposed to probabilistic or metric/signal temporal logic languages that evaluate the probabilistic or the robust satisfaction of temporal logic formulas \cite{bianco1995model,donaldson2008monte,fainekos2009robustness,yoo2015control,lindemann2020barrier}. This way, we allow a user to put different weights on the individual components/requirements comprising an LTL formula (e.g., staying away from a hostile patrolling agent may be more important than reaching a goal destination, which cannot be captured by probabilistic temporal logic languages straightforwardly). Then, our goal is to design perception-based control policies so that the multi-robot system satisfies the co-safe LTL specification. 

The problem of synthesizing controllers in the presence of mapping or localization uncertainty typically gives rise to partially observable Markov decision processes (POMDPs) \cite{haesaert2018temporal,sun2020stochastic} that suffer from  the curse of dimensionality and history. To avoid the need for computationally expensive POMDP methods, we formulate the planning problem as a deterministic optimal control problem that designs open-loop perception-based control policies that satisfy the assigned specification. \textcolor{black}{To solve this problem, building upon existing sampling-based planners \cite{hsu1997path,sucan2011sampling,kantaros2019optimal}, we present a sampling-based approach} which, given an uncertain prior belief about the semantic map, explores the robot motion space, a (sub-)space of map uncertainty, and an automaton space corresponding to the assigned mission. During the offline control synthesis process, the robots explore the metric environmental uncertainty, i.e., predict how the metric uncertainty will change based on their future control actions under linearity and Gaussian assumptions on the sensor models \cite{atanasov2014information,atanasov2015decentralized}. Prediction of the evolution of semantic uncertainty in the offline control synthesis phase is not possible, as it requires label measurements (i.e., images) which are impossible to obtain a priori/offline.
To ensure that the proposed sampling-based approach can efficiently explore this large hybrid space, we adopt the biased sampling strategy, developed in our previous work  \cite{kantaros2019optimal}, that biases exploration towards regions that are expected to be informative and contribute to satisfaction of the assigned specification. We show that the proposed sampling-based algorithm is probabilistically complete and asymptotically optimal under Gaussian and linearity assumptions on the sensor models.

As the robots execute the designed control policies, they take measurements that allow them to update their belief about both the metric and the semantic uncertainty of the environment using recently proposed semantic Simultaneous Localization And Mapping (SLAM) algorithms \cite{bowman2017probabilistic, rosinol2020kimera}. To adapt to the online learned map, the robots design new paths by re-applying the sampling-based approach. Re-planning occurs only at time instants where the offline predicted map differs significantly from the online constructed map. Formally, the re-planning time instants are determined online by an automaton corresponding to the assigned co-safe LTL formula. The mission terminates when an accepting condition of this automaton is satisfied. The proposed planning architecture over continuously learned semantic maps is also summarized in Figure \ref{fig:overview}. 
%
Extensions of the proposed algorithm to account for nonlinear sensor models and environments with no prior metric/semantic  information are discussed, as well. 
We provide extensive experiments that corroborate the theoretical analysis and show that the proposed algorithm can address large-scale planning tasks under environmental and sensing uncertainty.
\begin{figure}[t]
  \centering
\includegraphics[width=1\linewidth]{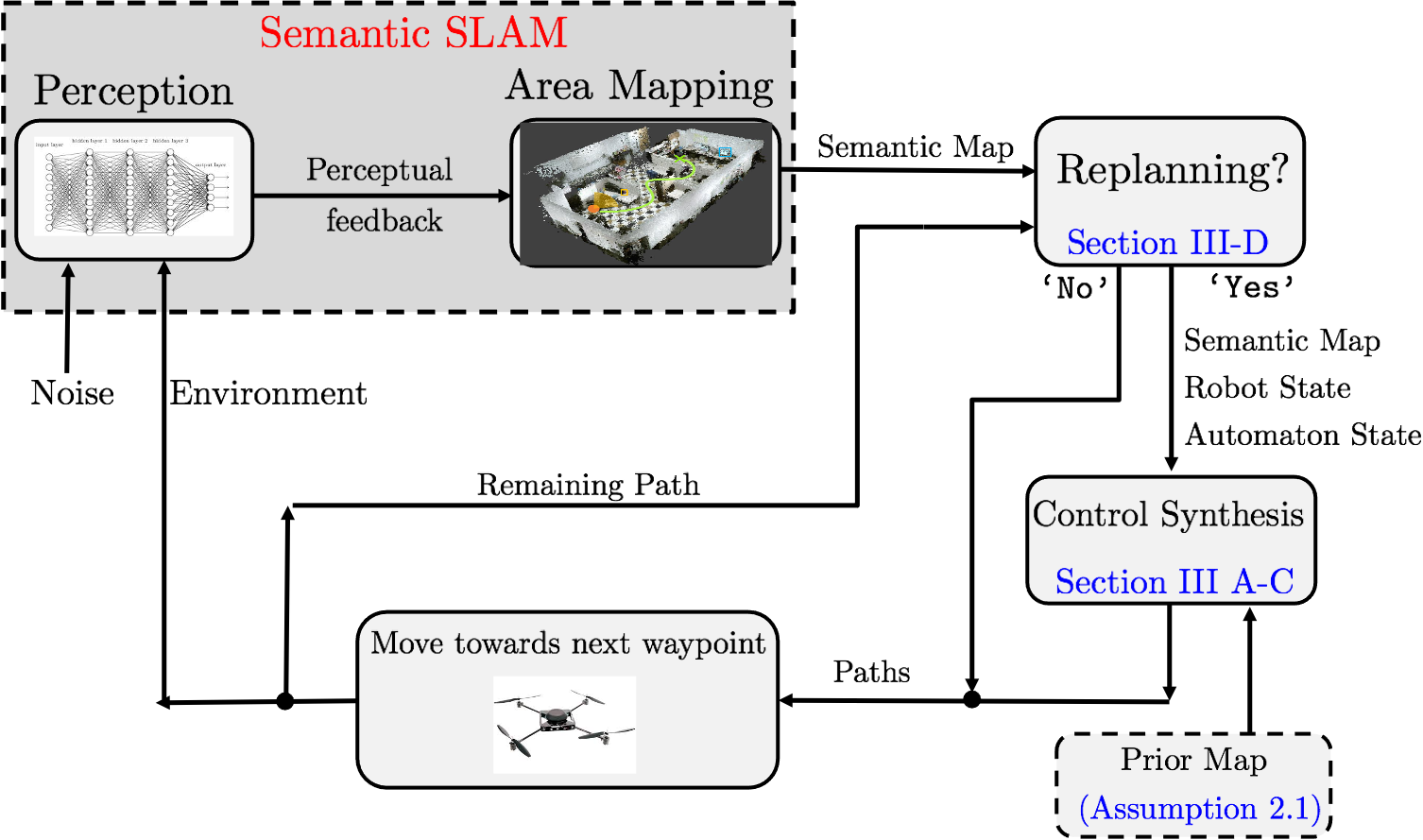}
\caption{Graphical depiction of the proposed  architecture for planning over continuously learned uncertain semantic maps. First, given a mission specification $\phi$ and an uncertain prior belief about the environment, initial paths are constructed. The robots follow the designed paths while updating the semantic map by fusing the collected measurements. Re-planning is triggered occasionally to adapt planning to the continuously learned map. } 
  \label{fig:overview}
\end{figure}
%
%
%
\subsection{Related Research}
\subsubsection{Sampling-based Temporal Logic Planning}
Sampling-based approaches for temporal logic planning have also been proposed in \cite{karaman2012sampling,vasile2013sampling,bhatia2010sampling,bhatia2011motion,luo2021abstraction,kantaros2018text} but they consider  \textit{known} environments. Typically, these works build upon sampling-based planning algorithms originally developed for reachability tasks \cite{karaman2011sampling}.
Specifically, in \cite{karaman2012sampling}, a sampling-based algorithm is proposed which incrementally builds a
Kripke structure until it is expressive enough to generate a path that satisfies a task specification expressed in deterministic $\mu$-calculus. In fact, \cite{karaman2012sampling} builds upon the RRG algorithm \cite{karaman2011sampling} to construct an expressive enough abstraction of the environment. To enhance scalability of \cite{karaman2012sampling} a sampling-based temporal logic path planning algorithm is proposed in \cite{vasile2013sampling}, that also builds upon the RRG algorithm, but constructs incrementally sparse graphs to build discrete abstractions of the workspace and the robot dynamics. These abstractions are then composed with an automaton, capturing the task specification, which yields a product automaton. Then, correct-by-construction paths are synthesized by applying graph search methods on the product automaton. However, similar to \cite{karaman2012sampling}, as the number of samples increases, the sparsity of the constructed graph is lost and, as a result, this method does not scale well to large planning problems either. 
Similar abstraction-free temporal logic planning algorithms have been presented in \cite{bhatia2010sampling,bhatia2011motion,luo2021abstraction}, as well.
A highly scalable sampling-based approach for LTL planning problems is also  presented in \cite{kantaros2018text} that can handle hundreds of robots which, unlike the previous works, requires user-specified discrete abstractions of the robot dynamics \cite{belta2005discrete}. 
Note that unlike these works, as discussed earlier, our sampling-based approach explores a map uncertainty space, as well, to account for environmental and sensing uncertainty.

\subsubsection{Temporal Logic Planning under Map Uncertainty}

Temporal logic planning in the presence of map uncertainty in terms of incomplete environment models is considered in \cite{guo2013revising,guo2015multi,maly2013iterative,lahijanian2016iterative,livingston2012backtracking,livingston2013patching,kantaros2020reactive}. 
Common in these works is that they consider static environments with unknown geometry but with known metric/semantic structure. As a result, in these works, the regions of interest over which the LTL missions are defined are a priori known. Additionally, to model these partially unknown environments, transition systems \cite{baier2008principles} or occupancy grid maps \cite{grisetti2007improved} are used that are continuously updated assuming perfect sensor feedback; as a result, LTL tasks are defined using deterministic atomic predicates. To the contrary, in this work we consider environments with known geometry but with uncertain metric and semantic structure, i.e., the locations and the labels of regions/objects of interest are uncertain. To account for this novel environmental uncertainty, we model the environment as an uncertain semantic map - which is continuously learned through uncertain perceptual
feedback - and we introduce probabilistic atomic predicates over the uncertain map to define robot missions. 
For instance, \cite{guo2013revising,guo2015multi} model the environment as transition systems which are partially known. Then discrete controllers are designed by applying graph search methods on a product automaton. As the environment, i.e., the transition system, is updated assuming perfect sensor feedback, the product automaton is locally updated, as well, and new paths are re-designed by applying graph search approaches on the revised automaton. 
Common in all these works is that, unlike our approach, they rely on discrete abstractions of the robot dynamics  \cite{belta2005discrete,pola2008approximately}. Thus, correctness of the generated motion plans is guaranteed with respect to the discrete abstractions resulting in a gap between  the generated discrete motion plans and their physical low-level implementation \cite{desaisoter}. An abstraction-free reactive method for temporal logic planning over continuously learned occupancy grid maps is proposed in \cite{kantaros2020reactive}. Note that \cite{kantaros2020reactive} can handle sensing and environmental uncertainty, as shown experimentally, but completeness guarantees are provided assuming perfect sensors, which is not the case in this paper. 

The most relevant works are presented by the authors in \cite{fu2016optimal, kantaros2019optimal} that also address temporal logic motion planning problems in  uncertain environments modeled as uncertain semantic maps. In particular, \cite{fu2016optimal} considers sequencing tasks, captured by co-safe LTL formulas, under the assumption of landmarks with uncertain locations but a priori known labels. This work is extended in \cite{kantaros2019optimal} by considering landmarks with uncertain locations and labels and task specifications captured by arbitrary co-safe temporal logic formulas. Common in both works is that control policies are designed based on a given semantic map - which is never updated - without considering the sensing capabilities of the robots. \textcolor{black}{In contrast, in this work, the proposed algorithm designs \textit{perception-based} control policies that actively decrease the metric uncertainty in the environment to satisfy probabilistic satisfaction requirements that are embedded into the mission specification. In other words, the proposed method may find feasible paths even if such paths do not exist for the initial uncertain semantic map.} Also, unlike \cite{fu2016optimal, kantaros2019optimal}, we show that the proposed algorithm can be extended to environments with no prior metric/semantic information.  Finally, we provide formal completeness and optimality guarantees that do not exist in \cite{fu2016optimal, kantaros2019optimal}. 


\subsection{Contribution}
The contribution of this paper can be summarized as follows. \textit{First}, we formulate the first perception-based LTL planning problem over uncertain semantic maps that are continuously learned by semantic SLAM methods. In fact, we propose the first formal bridge between temporal logic planning and semantic SLAM methods.
%
\textcolor{black}{\textit{Second}, we present a sampling-based approach that can synthesize perception-based control policies for robot teams that are tasked with accomplishing temporal logic missions in uncertain semantic environments.}
%
\textit{Third}, we show that the proposed sampling-based approach is probabilistically complete and asymptotically optimal under linearity and Gaussian assumptions. \textit{Fourth}, we show that the proposed algorithm can be applied to environments with no prior semantic/metric information. \textit{Fifth}, we provide extensive experiments that illustrate the efficiency of the proposed algorithm and corroborate its theoretical guarantees. 
\section{Problem Definition} \label{sec:PF}

\subsection{Multi-Robot System}
Consider $N$ mobile robots governed by the following dynamics: 
\begin{equation}\label{eq:rdynamics1}
\bbp_{j}(t+1)=\bbf_j(\bbp_{j}(t),\bbu_{j}(t)),
\end{equation}
for all $j\in\ccalN:=\{1,\dots,N\}$. In \eqref{eq:rdynamics1}, $\bbp_{j}(t)\in\mathbb{R}^n$ stands for the state (e.g., position and orientation) of robot $j$ in an environment $\Omega\subseteq\mathbb{R}^m$, $m\in\{2,3\}$, at discrete time $t$.\footnote{For simplicity of notation, hereafter, we also denote $\bbp_{j}(t)\in\Omega$.} Also, $\bbu_{j}(t) \in\ccalU_j$ stands for a control input applied at time $t$ by robot $j$, where $\ccalU_j$ denotes a \textit{finite} space of admissible control inputs. 
Hereafter, we compactly model the team of robots, each equipped with a finite library of motion primitives, as
\begin{equation}\label{eq:rdynamics}
\bbp(t+1)=\bbf(\bbp(t),\bbu(t)),
\end{equation}
where $\bbp(t)\in \Omega^N$, $\forall t\geq 0$, and $\bbu(t)\in\ccalU:=\ccalU_1\times\dots\times\ccalU_N$.
\begin{figure}[t]
  \centering
\includegraphics[width=1\linewidth]{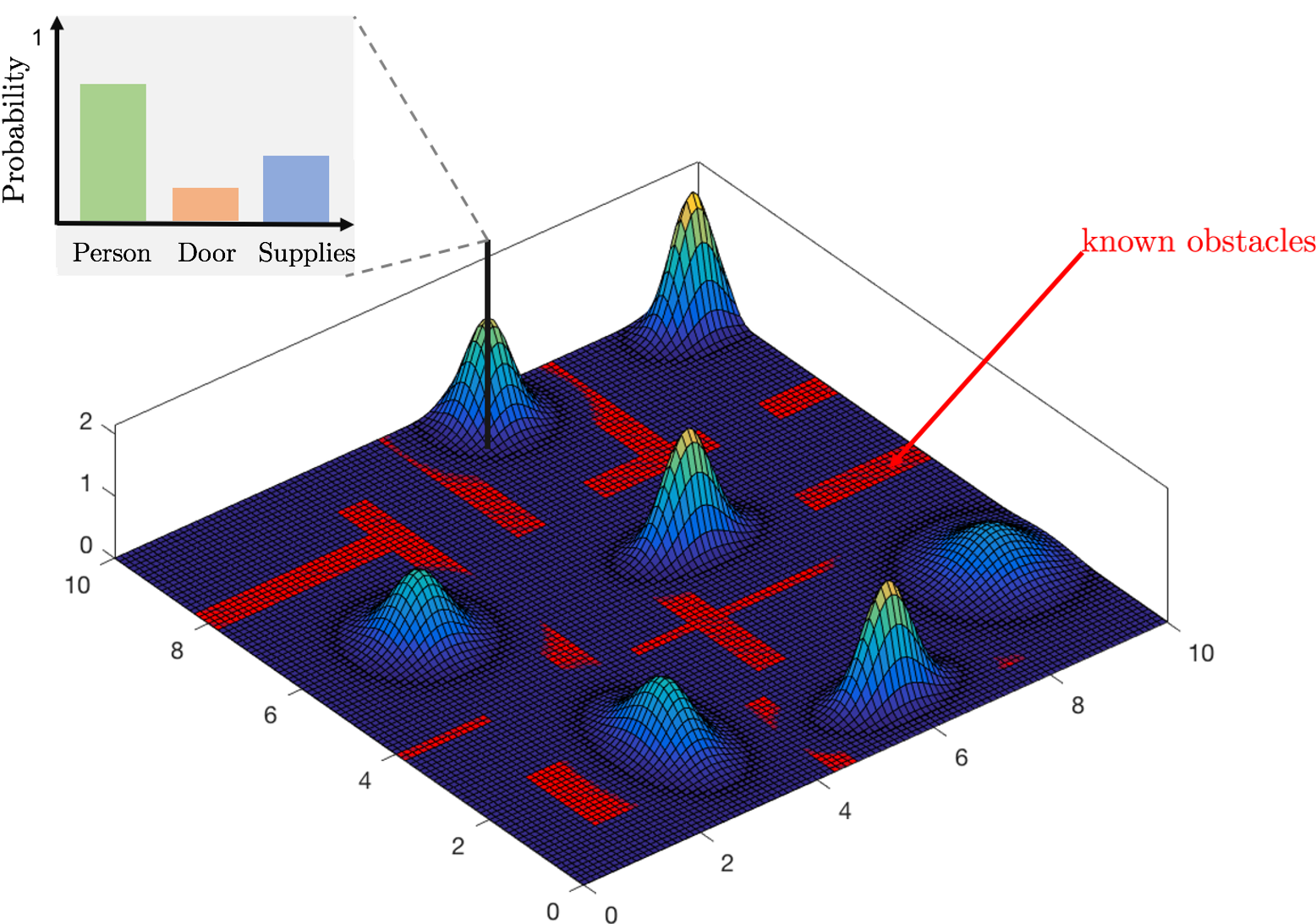}
\caption{An environment with known geometric structure modeled as a uncertain semantic map consisting of $M=7$ landmarks with a set of semantic classes defined as $\ccalC=\{\text{`person'}, \text{`door'}, \text{`supplies'}\}$. Each landmark is determined by a Gaussian distribution and a discrete distribution modeling uncertainty in its position and class, respectively. } 
  \label{fig:seMmap}
\end{figure}
\subsection{Uncertain Semantic Maps}
The robots operate in an environment with known geometric structure (e.g., walls) but with uncertain metric and semantic structure. Specifically, the obstacle-free space $\Omega_{\text{free}}\subseteq\Omega$ is assumed to be known while $\Omega_{\text{free}}$ is cluttered with $M>0$ uncertain, static, and labeled landmarks $\ell_i$ giving rise to an uncertain semantic map $\ccalM=\{\ell_1,\ell_2,\dots,\ell_M\}$. Each landmark $\ell_i=\{\bbx_i,c_i\}\in\ccalM$ is defined by its position $\bbx_i\in \Omega$ and its class $c_i\in\ccalC$, where $\ccalC$ is a finite set of semantic classes. 
The robots do not know the true landmark positions and classes but instead they have access to a probability distribution over the space of all possible maps. Such a distribution can be produced by recently proposed semantic SLAM algorithms \cite{bowman2017probabilistic,rosinol2020kimera} and typically consists of a Gaussian distribution over the landmark positions, modeling metric uncertainty in the environment, and an arbitrary discrete distribution over the landmark classes modeling semantic uncertainty. 
%
Specifically, we assume that $\ccalM$ is determined by parameters $(\hat{\bbx}_i,\Sigma_i, d_i)$, for all landmarks $\ell_i$, such that $\bbx_i\sim\ccalN(\hat{\bbx}_i,\Sigma_i)$, where $\hat{\bbx}_i$ and $\Sigma_i$ denote the mean and covariance matrix of the position of landmark $\ell_i$, and $c_i\sim d_i$, where $d_i:\ccalC\rightarrow [0,1]$ is an arbitrary discrete distribution over the finite set of possible classes. Hereafter, we compactly denote by $(\hat{\bbx},\Sigma, d)$ the parameters that determine all landmarks in the semantic map $\ccalM$. 

In what follows, we assume that an uncertain prior belief about the metric/semantic structure of the environment is available by a user to the robots which can be updated online using SLAM methods. A discussion on relaxing this assumption is provided in Section \ref{sec:extensions}.
\begin{assumption}[Uncertain Prior Map]\label{as:prior}
We assume that initially the robots have access to Gaussian and discrete distributions denoted by $\bbx_i\sim\ccalN(\hat{\bbx}_i(0),\Sigma_i(0))$ and $c_i\sim d_i(0)$, for all landmarks $\ell_i\in\ccalM$, where the number of landmarks and the set of available classes is a priori known.
\end{assumption}
\begin{ex}[Semantic Map]
An example of an uncertain semantic map is illustrated in Figure \ref{fig:seMmap} that consists of $M=7$ landmarks with set of classes $\ccalC=\{\text{`person'}, \text{`door'}, \text{`supplies'}\}$. Uncertain semantic maps provide a novel means to define mission objectives in terms of meaningful semantic objects in uncertain environments. For instance, in Figure \ref{fig:seMmap}, consider a task that requires the robots to collect all available supplies and deliver them in a specific order to injured people. 
\end{ex}

\subsection{Perceptual Capabilities}
The robots are equipped with sensors (e.g., cameras) to collect measurements associated with the landmark positions $\bbx$ as per the following \textit{linear observation model}: $\bby_j(t) = \bbM_j(\bbp_j(t))\bbx + \bbv_j(t)$, where $\bby_j(t)$ is the measurement signal at discrete time $t$ taken by robot $j\in\ccalN$. For instance, linear observation models with respect to the hidden state have been considered in \cite{freundlich2018distributed}. Also, $\bbv_j(t) \sim \ccalN(\bb0, \bbR_j)$ is sensor-state-dependent Gaussian noise with covariance $\bbR_j$. 
Hereafter, we compactly denote the observation models of all robots as 
\begin{equation}\label{eq:measModelR}
\bby(t)= \bbM(\bbp(t))\bbx+\bbv(t),~\bbv(t) \sim \ccalN(\bb0, \bbR),
\end{equation}
where $\bby(t)$ collects measurements taken at time $t$ by all robots  associated with any landmark, and $\bbM$ and $\bbR$ are diagonal matrices where their $j$-th diagonal block is $\bbM_j$ and $\bbR_j$, respectively.\footnote{Note that the observation model \eqref{eq:measModelR} is assumed to be linear with respect to the landmark positions as this allows us to employ separation principles developed in estimation theory; see Section \ref{sec:updateMap}. Nonlinear models (e.g., range sensors) are considered in Section \ref{sec:extensions}.}

Moreover, we assume that the robots are equipped with object recognition systems that allow them to collect measurements associated with the landmark classes $\bbc$. These semantic measurements typically consist of label measurements along with label probabilities \cite{redmon2016you,ren2016faster, guo2017calibration}. Since such algorithms are scale- and orientation-invariant, the class measurement is independent of the robot/sensor position (assuming that the landmark is within the sensing range of the robot). Thus, once a landmark $\ell_i$ is detected, a semantic measurement is generated by the following observation model: $[\bby_{j}^c, \bbs_{j}^c]=\bbg_j(L_i)$ (see also \cite{atanasov2016localization}) where $\bby_{j}^c$ and $\bbs_{j}^c$ represent a class/label measurement and the corresponding probability scores over all available classes, respectively, and $L_i$ stands for the true class of the detected landmark $\ell_i$. More details about modeling sensors generating semantic measurements can be found in \cite{atanasov2016localization}; see also Remark \ref{rem:objRec}. Hereafter, we compactly denote the object recognition model of all robots as  
\begin{equation}\label{eq:measModelC}
[\bby^c(t), \bbs^c(t)]=\bbg(\bbL),
\end{equation}
where $\bbL$ denote the true classes of all landmarks detected at time $t$.
%
\subsection{Separation Principle: Offline Update of Semantic Maps}\label{sec:updateMap}
Given measurements generated by the sensors \eqref{eq:measModelR}-\eqref{eq:measModelC}  until a time instant $t\geq0$, the semantic map can be updated online by fusing the collected measurements 
using recently proposed semantic SLAM algorithms \cite{bowman2017probabilistic,rosinol2020kimera}. Note that updating the map requires knowledge of the measurements that the robots may collect in the field if they apply a sequence of control actions, denoted $\bbu_{0:t}$, which is impossible to obtain during the offline control synthesis process. 

To mitigate this challenge and to design control policies while accounting for their effect on the map uncertainty, we rely on separation principles that have recently been employed in estimation theory \cite{atanasov2014information,atanasov2015decentralized}. Specifically, given a linear observation model with respect to the landmark positions, as  defined in  \eqref{eq:measModelR}, and given a sequence of control inputs until a time instant $t$, denoted by $\bbu_{0:t}$, the a-posteriori covariance matrix can be computed offline using the Kalman filter Riccati equation, i.e., without requiring knowledge of any measurements \cite{atanasov2014information,atanasov2015decentralized}. 
Hereafter, with slight abuse of notation, we denote the Kalman filter Riccati map by $\Sigma(t+1)=\rho(\Sigma(t),\bbp(t+1))$ where $\Sigma(t+1)$ and $\Sigma(t)$ denote the a posteriori covariance matrix at consecutive time instants $t$ and $t+1$. Observe that $\Sigma(t+1)$ can be computed directly from $\Sigma(t)$ given the next robot position $\bbp(t+1)$ (or equivalently the control input $\bbu(t)$ applied at $\bbp(t)$).
Note that the a posteriori estimates $\hat{\bbx}(t)$ for the positions of the landmarks cannot be updated offline, i.e., without sensor measurements generated by \eqref{eq:measModelR}. Similarly, the semantic uncertainty, captured by the discrete distribution $d$, cannot be updated offline as it requires measurements (i.e., images) that will be processed by the object recognition algorithm \eqref{eq:measModelC}.

Finally, throughout the paper we make the following assumption that is required for applying a Kalman filter and, consequently, for offline computation of a posteriori covariance matrices. 
\begin{assumption} \label{as:assumption}
The observation model \eqref{eq:measModelR} and the covariance matrix $\bbR$ of the measurement noise are known.  
\end{assumption}

Assumption \ref{as:assumption} is common in related works and, in practice, it can be satisfied by learning the covariance matrix offline using training data \cite{atanasov2014information,atanasov2015decentralized,bowman2017probabilistic}.

\subsection{Temporal Logic Tasks over Uncertain Semantic Maps}\label{sec:task}
The goal of the robots is to accomplish a complex collaborative task captured by a global co-safe Linear Temporal Logic (LTL) specification $\phi$. To reason about both environmental and perceptual uncertainty, similar to \cite{jones2013distribution,haesaert2018temporal}, we define perception-based atomic predicates that depend on both the multi-robot state and the perceived uncertain semantic map over which the LTL formula is defined. 
To define such predicates, first, with slight abuse of notation, we denote by $\ccalM(t)$ the map distribution at time $t$ obtained after the robots apply a sequence of control actions $\bbu_{0:t}$; the offline construction of $\ccalM(t)$ was discussed in Section \ref{sec:updateMap}. 

\subsubsection{Perception-based Predicates}

We define perception-based predicates, denoted by $\pi_p(\bbp(t),\ccalM(t),\Delta)$, as follows:
\begin{equation}\label{eq:pip}
  \pi_p(\bbp(t),\ccalM(t),\Delta)=
    \begin{cases}
      \text{true}, & \text{if $p(\bbp(t),\ccalM(t),\Delta)\geq 0$}\\
      \text{false}, & \text{otherwise}
    \end{cases}       
\end{equation}
where $\Delta$ is set of user-specified and case-specific parameters (e.g., probabilistic thresholds or robots indices) and $p(\bbp(t),\ccalM(t),\Delta): \Omega^N\times\ccalM(t)\times\Delta\rightarrow\mathbb{R}$. Hereafter, when it is clear from the context, we simply denote a perception-based predicate by $\pi_p$.

In what follows, we define  functions $p(\bbp(t),\ccalM(t),\Delta)$ that will be used throughout this paper. 
First, we define the following two functions reasoning about the metric uncertainty of the map:
\begin{align}
&p(\bbp(t),\ccalM(t), \{j,\ell_i,r,\delta\})=\nonumber\\&\mathbb{P}(||\bbp_j(t)-\bbx_i||\leq r)- (1-\delta),\label{ap1} 
\end{align}
and
\begin{equation}\label{apS}  
p(\bbp(t),\ccalM(t),\{j,\ell_i,\delta\})= \delta - \det(\Sigma_i(t)).
\end{equation}
 %
The predicate associated with \eqref{ap1} is true at time $t$ if the probability of robot $j$ being within distance less than $r$ from landmark $\ell_i$ is greater than $1-\delta$, after applying control actions $\bbu_{0:t}$, for some user-specified parameters $r,\delta>0$. 
Similarly, \eqref{apS} is true if the determinant of the a-posteriori covariance matrix associated with landmark $\ell_i$ is less than a user-specified threshold $\delta$ after applying control actions $\bbu_{0:t}$. 
Note that in \eqref{apS}, robot $j$ is responsible for actively localizing landmark $\ell_i$, although the global multi-robot state $\bbp(t)$ is used to compute the a posteriori covariance matrix $\Sigma(t)$ using the Kalman Filter Ricatti map. The latter predicate allows us to reason about how confident the robots are about the position of landmark $\ell_i$; the smaller the $\det(\Sigma_i(t))$ is, the smaller the metric uncertainty about landmark $i$ is. 
%

In a similar way, we can define predicates that reason about both the metric and the semantic uncertainty. For instance, consider the following function:
\begin{align}
    &p(\bbp(t),\ccalM(t),\{j,r,\delta,c\})=\nonumber\\&\max_{\ell_i}[\mathbb{P}(||\bbp_j(t)-\bbx_i||\leq r)d_i(c)]-( 1-\delta),\label{apMS}
\end{align}
In words, the predicate associated with \eqref{apMS} is true at time $t$ if the probability of robot $j$ being within distance less than $r$ from at least one landmark with class $c$ is greater than $1-\delta$. Additional predicates can be defined using e.g., the entropy of the discrete distributions or risk measures defined over the semantic map. For simplicity, in what follows, we restrict our attention on predicates of the form \eqref{ap1}-\eqref{apMS}. We can also introduce atomic predicates defined over the robots states and the known geometric structure that have been widely used in the literature (e.g., atomic predicates that are true if a robot is within a known region of interest). For simplicity of presentation, we abstain from defining such atomic propositions. Finally, given a finite set of available parameters $\Delta$, we define atomic predicates of the form \eqref{ap1}-\eqref{apMS} that are collected in a set $\mathcal{AP}$.

\subsubsection{Syntax \& Semantics}
Given a set $\mathcal{AP}$ of perception-bases atomic predicates, we construct a mission specification expressed in co-safe LTL, which is satisfied by a finite-length robot trajectory. The syntax of co-safe LTL over sequences of robot states and uncertain semantic maps is defined as follows.

\begin{definition}[Syntax]\label{syntax}
The syntax of co-safe LTL over a sequence of multi-robot states $\bbp(t)$ and uncertain semantic maps $\ccalM(t)$ is defined as $\phi::=\text{true}~|~\pi_p~|~\neg\pi_p~|~\phi_1\wedge\phi_2~|~\phi_1\vee\phi_2~|~\phi_1~\mathcal{U}~\phi_2,$
where 
(i) $\pi_p$ is a perception-based predicate defined in \eqref{eq:pip} and (ii) $\wedge$, $\vee$, $\neg$, and $\mathcal{U}$, denote the conjunction, disjunction, negation, and until operator, respectively.
\end{definition}
Note that using the `until' operator, the eventually operator, denoted by $\Diamond$, can be defined, as well \cite{baier2008principles}. 

Next, we define the semantics of co-safe LTL over sequences of multi-robot states and semantic maps. We first define a labeling function $L:\Omega^N\times\ccalM(t)\rightarrow 2^{\mathcal{AP}}$ determining which atomic propositions are true given the current robot system state $\bbp(t)\in\Omega^N$ and the current map distribution $\ccalM(t)$ obtained after applying a sequence of control actions $\bbu_{0:t}$.

\begin{definition}[Semantics]\label{semantics}
Let $\sigma=\sigma(0),\sigma(1)\dots$ be an finite sequence of multi-robot states and uncertain semantic maps, where $\sigma(t)=L([\bbp(t),\ccalM(t)])$. The semantics of temporal logic over robot states and semantic maps is defined recursively as:
\begin{align}
&\sigma\models\text{true} \nonumber\\
&\sigma\models \pi_p(\bbp,\ccalM,\Delta) \Leftrightarrow p(\underbrace{\bbp(0),\ccalM(0)}_{\text{from~}\sigma(0)},\Delta)\geq 0 \nonumber\\
&\sigma\models \phi_1\wedge\phi_2 \Leftrightarrow (\sigma\models\phi_1)\wedge(\sigma\models\phi_2) \nonumber\\
&\sigma\models \phi_1\vee\phi_2 \Leftrightarrow (\sigma\models\phi_1)\vee(\sigma\models\phi_2) \nonumber\\
&\sigma\models \phi_1\ccalU\phi_2 \Leftrightarrow \exists~t\geq 0~\text{s.t.}~\{\sigma(t)\sigma(t+1)\dots\models\phi_2\}\nonumber\\& \wedge\{\sigma(t')\sigma(t'+1)\dots\models\phi_1\},~\forall 0\leq t'<t.\nonumber
\end{align}
\end{definition}

\begin{ex}[Mission Specification]\label{ex:LTL}
Consider a single-robot mission defined over a semantic map modeled as two landmarks with available classes $\ccalC=\{\text{`Pole'},\text{`Person'}\}$. The robot has to eventually reach a person of interest to deliver supplies while always avoiding colliding with a pole; see e.g., Figure \ref{fig:PF1}. This task can be captured by the following co-safe LTL formula:
\begin{align}\label{eq:taskEx}
    \phi = &[\Diamond\pi_p(\bbp(t),\ccalM(t),\{1,r_1,\delta_1,\text{`Person'}\})]\\&\wedge[\neg\pi_p(\bbp(t),\ccalM(t),\{1,r_2,\delta_2,\text{`Pole'}\})\nonumber\\&\ccalU \pi_p(\bbp(t),\ccalM(t),\{1,r_1,\delta_1,\text{`Person'}\})]\nonumber
\end{align}
where the atomic predicates are defined as in \eqref{apMS}.
\end{ex}
\subsection{Safe Planning over Uncertain Semantic Maps}
Given a task specification $\phi$, an uncertain prior belief about the semantic map (see Assumption \ref{as:prior}), the observation model \eqref{eq:measModelR}, and the robot dynamics \eqref{eq:rdynamics}, our goal is to select a stopping horizon $H$ and a sequence $\bbu_{0:H}$ of control inputs $\bbu(t)$, for all $t\in\set{0,\dots,H}$, that satisfy $\phi$ while minimizing a user-specified motion cost function. This gives rise to the following optimal control problem:
\begin{subequations}
\label{eq:Prob2}
\begin{align}
& \min_{\substack{H, \bbu_{0:H}}} \left[J(H,\bbu_{0:H}) = \sum_{t=0}^{H}  c(\bbp(t),\bbp(t+1)) \right] \label{obj2}\\
& \ \ \ \ \ \ \  [\bbp_{0:H},\ccalM_{0:H}]\models\phi  \label{constr1b} \\
& \ \ \ \ \ \ \   \bbp(t) \in \Omega_{\text{free}}^N, \label{obsFree}\\
& \ \ \ \ \ \ \   \bbp(t+1) = \bbf(\bbp(t),\bbu(t)) \label{constr3b}\\
& \ \ \ \ \ \ \ \hat{\bbx}(t+1)=\hat{\bbx}(0) \label{constr6b}\\
&  \ \ \ \ \ \ \  \Sigma(t+1)=\rho(\Sigma(t),\bbp(t+1)) \label{constr7b}\\
&  \ \ \ \ \ \ \  d(t+1)=d(0) \label{constr8}
\end{align}
\end{subequations}
where the constraints \eqref{constr3b}-\eqref{constr8} hold for all time instants $t\in[0,H]$. In \eqref{obj2}, any motion cost function $c(\bbp(t),\bbp(t+1))$ can be used associated with the transition cost from $\bbp(t)$ to $\bbp(t+1))$ as long as it is positive (e.g., traveled distance). If non-positive summands are selected, then \eqref{eq:Prob2} is not well-defined, since the  optimal terminal horizon $H$ is infinite. The constraints \eqref{constr1b}-\eqref{obsFree} require the robots to accomplish the mission specification $\phi$ and always avoid the known obstacles/walls respectively. With slight abuse of notation, in \eqref{constr1b}, $[\bbp_{0:H},\ccalM_{0:H}]$ denotes a finite sequence of length/horizon $H$ of multi-robot states and semantic maps while $[\bbp_{0:H},\ccalM_{0:H}]\models\phi$ means that the symbols generated along this finite sequence satisfy $\phi$ (see also Definition \ref{semantics}). The constraint \eqref{constr3b} requires the robots to move according to their known dynamics \eqref{eq:rdynamics}. Also, the constraints in \eqref{constr6b}-\eqref{constr7b} require the metric uncertainty to be updated as per the Kalman filter Ricatti map, discussed in Section \ref{sec:updateMap}, while the semantic uncertainty is not updated during the control synthesis process captured by \eqref{constr8}; instead, as it will be discussed in Section \ref{sec:samplingAlg}, it is updated online as semantic measurements are collected as per \eqref{eq:measModelC}.
Then, the problem addressed in this paper can be summarized as follows:
\begin{problem} \label{prob}
Given (i) an initial robot configuration $\bbp(0)$, (ii) a prior distribution of maps $\ccalM(0)$ (see Assumption \ref{as:prior}), (iii) a linear sensor model \eqref{eq:measModelR}, and (iv) a task  captured by a co-safe formula $\phi$, compute a horizon $H$ and compute control inputs $\bbu(t)$ for all time instants $t\in\{0,\dots,H\}$ as per \eqref{eq:Prob2}.
\end{problem}

In Section \ref{sec:extensions}, we provide a discussion on relaxing the assumptions of (i) a linear observation model \eqref{eq:measModelR}; and (ii) availability of prior information about the metric/semantic environmental uncertainty discussed in Assumption \ref{as:prior}.  

\begin{rem}[Online Re-planning \& Object Recognition]\label{rem:objRec}
The object recognition method modeled in \eqref{eq:measModelC} is not required to solve \eqref{eq:Prob2} as, therein, the semantic uncertainty is not updated. In fact, \eqref{eq:Prob2} is an offline problem yielding open-loop/offline robot paths that are agnostic to the object recognition model \eqref{eq:measModelC}. A sampling-based algorithm to solve \eqref{eq:Prob2} is presented in Sections \ref{sec:sampling}-\ref{sec:constructTree}. The online execution of these paths is discussed in Section \ref{sec:replan}. Specifically, in that section, we discuss when the offline synthesized paths may need to be revised online to adapt to the semantic map that is continuously learned using perceptual feedback captured by \eqref{eq:measModelR}-\eqref{eq:measModelC}; see Figure \ref{fig:overview}, as well.
\end{rem}

\begin{rem}[Arbitrary LTL formulas]
Recall that the optimal control problem \eqref{eq:Prob2} generates a finite sequence of control actions that satisfies a co-safe LTL specification. Arbitrary LTL formulas can be considered, as well, which, however, are satisfied by infinite-length paths.
%
Nevertheless, infinite-length paths cannot be manipulated in practice. Typically, to mitigate this issue, these paths are expressed in a prefix-suffix form, where the prefix path is executed once followed by the indefinite execution of the suffix path; see e.g., \cite{kantaros2018text}. Following this approach, to account for arbitrary LTL formulas, two separate optimal control problems, which have a similar form to \eqref{eq:Prob2}, can be defined where the first one generates a (finite) prefix path while the second one generates the corresponding (finite) suffix path. Both optimal control problems can be solved by the proposed sampling-based approach.
In this paper, for simplicity, we consider only co-safe LTL formulas that are satisfied by finite-horizon paths.
\end{rem}

\section{Safe Planning In Uncertain Semantic Maps}
\label{sec:samplingAlg}
In this section, first we show that \eqref{eq:Prob2} can be converted into a reachability problem defined over a hybrid and uncertain space; see Section \ref{sec:reach}. Due to the hybrid nature of the state space that needs to be explored, \eqref{eq:Prob2} cannot be solved in a straightforward manner by existing reachability-based planning methods such as \cite{karaman2011sampling,luders2010chance}. To address this challenge, in Sections \ref{sec:sampling}-\ref{sec:constructTree}, we present a sampling-based algorithm that can efficiently explore hybrid and uncertain state-spaces and solve Problem \ref{prob}. Then in Section \ref{sec:replan} we discuss the online execution of the synthesized paths and we present a re-planning method to adapt to the continuously learned semantic map; see also Figure \ref{fig:overview}.
\subsection{Reachability in Uncertain Hybrid Spaces}\label{sec:reach}
To convert \eqref{eq:Prob2} into a reachability problem, we first convert the specification $\phi$, defined over a set of atomic predicates $\mathcal{AP}$, into a Deterministic Finite state Automaton (DFA), defined as follows \cite{baier2008principles}. 

\begin{definition}[DFA]
A Deterministic Finite state Automaton (DFA) $D$ over $\Sigma=2^{\mathcal{AP}}$ is defined as a tuple $D=\left(\ccalQ_{D}, q_{D}^0,\Sigma,\delta_D,q_F\right)$, where $\ccalQ_{D}$ is the set of states, $q_{D}^0\in\ccalQ_{D}$ is the initial state, $\Sigma$ is an alphabet, $\delta_D:\ccalQ_D\times\Sigma\rightarrow\ccalQ_D$ is a deterministic transition relation, and $q_F\in\ccalQ_{D}$ is the accepting/final state. 
\end{definition}

Given a robot trajectory $\bbp_{0:H}$ and a corresponding sequence of maps $\ccalM_{0:H}$, we get the labeled sequence $L(\bbp_{0:H}, \ccalM_{0:H}) =L([\bbp(0), \ccalM(0)])\dots L([\bbp(H), \ccalM(H)])$. This labeled sequence satisfies the specification $\phi$, if starting from the initial state $q_D^0$, each symbol/element in $L(\bbp_{0:H}, \ccalM_{0:H})$ yields a DFA transition so that eventually -after $H$ DFA transitions- the final state $q_F$ is reached \cite{baier2008principles}. 
As a result, we can equivalently re-write \eqref{eq:Prob2} as follows:
\begin{subequations}
\label{eq:ProbR}
\begin{align}
& \min_{\substack{H, \bbu_{0:H}}} \left[J(H,\bbu_{0:H}) = \sum_{t=0}^{H}  c(\bbp(t),\bbp(t+1)) \right] \label{obj4}\\
& \ \ \ \ \ \ \  q_D(t+1) = \delta_D(q_D(t),\sigma(t)),\label{constr1c}\\
& \ \ \ \ \ \ \   \bbp(t+1) = \bbf(\bbp(t),\bbu(t)) \label{constr3c}\\
& \ \ \ \ \ \ \   \bbp(t) \in \Omega_{\text{free}}^N, \label{obsFree2}\\
& \ \ \ \ \ \ \ \hat{\bbx}(t+1)=\hat{\bbx}(0) \label{constr6c}\\
&  \ \ \ \ \ \ \  \Sigma(t+1)=\rho(\Sigma(t),\bbp(t+1)) \label{constr7c}\\
&  \ \ \ \ \ \ \  d(t+1)=d(0) \label{constr8c}\\
&  \ \ \ \ \ \ \  q_D(H)=q_F \label{constr9c}
\end{align}
\end{subequations}
where $q_D(0)=q_D^0$, $\sigma(t)=L([\bbp(t),\ccalM(t)])$ and $\ccalM(t)$ is determined by $\hat{\bbx}(t)$, $\Sigma(t)$, and $d(t)$. Note that the constraint in \eqref{constr1c} captures the automaton dynamics, i.e., the next DFA state that will be reached from the current DFA state under the observation/symbol $\sigma(t)$.
In words, \eqref{eq:ProbR}, is a reachability problem defined over a joint space consisting of the automaton state-space (see \eqref{constr1c}), the robot motion space (see \eqref{constr3c}-\eqref{obsFree2}), and the metric uncertainty space (see \eqref{constr6c}-\eqref{constr8c}) while the terminal constraint requires to reach the final automaton state (see \eqref{constr9c}).


\subsection{\textcolor{black}{Sampling-based Planning in Uncertain Semantic Maps}}\label{sec:sampling}

In this section, we propose a sampling-based algorithm to solve \eqref{eq:ProbR} which incrementally builds trees that explore simultaneously the robot motion space, the  space of covariance matrices (metric uncertainty), and the automaton space that corresponds to the assigned specification. The proposed algorithm is summarized in Algorithm \ref{alg:RRT}. 

In what follows, we provide some intuition for the steps of Algorithm \ref{alg:RRT}. First, we denote the constructed tree by $\mathcal{G}=\{\mathcal{V},\mathcal{E},J_{\ccalG}\}$, where $\ccalV$ is the set of nodes and $\ccalE\subseteq \ccalV\times\ccalV$ denotes the set of edges. The set of nodes $\mathcal{V}$ contains states of the form $\bbq(t)=[\bbp(t), \ccalM(t), q_D(t)]$, where $\bbp(t)\in\Omega$ and $q_D(t)\in\ccalQ_D$.\footnote{Throughout the paper, when it is clear from the context, we drop the dependence of $\bbq(t)$ on $t$.} The function $J_{\ccalG}:\ccalV\rightarrow\mathbb{R}_{+}$ assigns the cost of reaching node $\bbq\in\mathcal{V}$ from the root of the tree. The root of the tree, denoted by $\bbq(0)$, is constructed so that it matches the initial robot state $\bbp(0)$, the initial semantic map $\ccalM(0)$, and the initial DFA state, i.e., $\bbq(0)=[\bbp(0), \ccalM(0), q_D^0]$. 
By convention the cost of the root $\bbq(0)$ is $J_{\ccalG}(\bbq(0)) = 0$, 
while the cost of a node $\bbq(t+1)\in\ccalV$, given its parent node $\bbq(t)\in\ccalV$, is computed as 
\begin{equation}\label{eq:costUpd}
J_{\ccalG}(\bbq(t+1))= J_{\ccalG}(\bbq(t)) +  c(\bbp(t),\bbp(t+1)).
\end{equation}
Observe that by applying \eqref{eq:costUpd} recursively, we get that $J_{\ccalG}(\bbq(t+1)) = J(t+1,\bbu_{0:t+1})$ which is the objective function in \eqref{eq:Prob2}.

\begin{algorithm}[t]
\caption{Mission Planning in Probabilistic Maps}
\LinesNumbered
\label{alg:RRT}
\KwIn{ (i) maximum number of iterations $n_{\text{max}}$, (ii) dynamics \eqref{eq:rdynamics}, (iii) map distribution $\ccalM(0)$, (iv) initial robot configuration $\bbp(0)$, (v) task $\phi$;}
\KwOut{Terminal horizon $H$, and control inputs $\bbu_{0:H}$}
Convert $\phi$ into a DFA\label{rrt:dfa}\;
Initialize $\ccalV = \set{\bbq(0)}$, $\ccalE = \emptyset$, $\ccalV_{1}=\set{\bbq(0)}$, $K_1=1$, and $\ccalX_g = \emptyset$\;\label{rrt:init}
\For{ $n = 1, \dots, n_{\text{max}}$}{\label{rrt:forn}
	 Sample a subset $\ccalV_{k_{\text{rand}}}$ from $f_{\ccalV}$\;\label{rrt:samplekrand}
     \For{$\bbq_{\text{rand}}(t)=[\bbp_{\text{rand}}(t),\ccalM_{\text{rand}}(t),q_D]\in\ccalV_{k_{\text{rand}}}$}{\label{rrt:forq}
     Sample a control input $\bbu_{\text{new}}\in\ccalU$ from $f_{\ccalU}$\;\label{rrt:sampleu}
     $\bbp_{\text{new}}(t+1)=\bbf(\bbp_{\text{rand}}(t),\bbu_{\text{new}})$\;\label{rrt:pnew}
     \If{$\bbp_{\text{new}}(t+1)\in\Omega_{\text{free}}^N$}{\label{rrt:obsFree}
     $\hat{\bbx}_{\text{new}}(t+1)=\hat{\bbx}(0)$\;\label{rrt:xnew}
     $\Sigma_{\text{new}}(t+1)=\rho(\Sigma_{\text{rand}}(t),\bbp_{\text{new}}(t+1))$\;\label{rrt:Sigmanew}
     $d_{\text{new}}(t+1)=d(0)$\;\label{rrt:dnew}
     Construct map: $\ccalM_{\text{new}}(t+1) = (\hat{\bbx}_{\text{new}}(t+1),\Sigma_{\text{new}}(t+1), d_{\text{new}}(t+1))$\;\label{rrt:updmap}
     Compute $q_D^{\text{new}}=\delta_D(q_D^{\text{rand}},L([\bbp_{\text{rand}}(t),\ccalM_{\text{rand}}(t)]))$\label{rrt:feasTrans}\;
     \If{$\exists q_D^{\text{new}}$}{\label{rrt:feasTrans}
     Construct $\bbq_{\text{new}}=[\bbp_{\text{new}},\ccalM_{\text{new}},q_D^{\text{new}}]$\;\label{rrt:qnew}
     Update set of nodes: $\ccalV= \ccalV\cup\{\bbq_{\text{new}}\}$\;\label{rrt:updV}
     Update set of edges: $\ccalE = \ccalE\cup\{(\bbq_{\text{rand}},\bbq_{\text{new}})\}$\;\label{rrt:updE}
     Compute cost of new state: $J_{\ccalG}(\bbq_{\text{new}})=J_{\ccalG}(\bbq_{\text{rand}})+c(\bbp_{\text{rand}},\bbp_{\text{new}})$\;\label{rrt:updCost}
     \If{$q_D^{\text{new}}=q_F$}{\label{rrt:updXg1}
     $\ccalX_g=\ccalX_g\cup\{\bbq_{\text{new}}\}$\;\label{rrt:updXg2}
     }
     Update the sets $\ccalV_{k}$\;\label{rrt:updVk}
     }}}
}
Among all nodes in $\ccalX_g$, find $\bbq_{\text{end}}(t_{\text{end}})$ \; \label{rrt:node}
$H=t_{\text{end}}$ and recover $\bbu_{0:H}$ by computing the path $\bbq_{0:t_{\text{end}}}= \bbq(0), \dots, \bbq(t_{\text{end}}) $\;\label{rrt:solution}
\end{algorithm}

\normalsize
The tree $\ccalG$ is initialized so that $\ccalV=\{\bbq(0)\}$, $\ccalE=\emptyset$, and $J_{\ccalG}(\bbq(0)) = 0$ [line \ref{rrt:init}, Alg. \ref{alg:RRT}]. Also, the tree is built incrementally by adding new states $\bbq_\text{new}$ to $\ccalV$ and corresponding edges to $\ccalE$, at every iteration $n$ of Algorithm \ref{alg:RRT}, based on a \textit{sampling} [lines \ref{rrt:samplekrand}-\ref{rrt:sampleu}, Alg. \ref{alg:RRT}] and \textit{extending-the-tree} operation [lines \ref{rrt:pnew}-\ref{rrt:updVk}, Alg. \ref{alg:RRT}]. 
After taking $n_{\text{max}}\geq 0$ samples, where $n_{\text{max}}$ is user-specified, Algorithm \ref{alg:RRT} terminates and returns a feasible solution to \eqref{eq:ProbR} (if it has been found), i.e., a terminal horizon $H$ and a sequence of control inputs $\bbu_{0:H}$. 

To extract such a solution, we need first to define the set $\ccalX_g\subseteq\ccalV$ that collects all states $\bbq(t)=[\bbp(t), \ccalM(t), q_D(t)]\in\ccalV$ of the tree that satisfy 
$q_D(t)=q_F$, which captures the terminal constraint \eqref{constr9c} [lines \ref{rrt:updXg1}-\ref{rrt:updXg2}, Alg. \ref{alg:RRT}]. Then, among all nodes $\ccalX_g$, we select the node $\bbq(t)\in\ccalX_g$, with the smallest cost $J_{\ccalG}(\bbq(t))$, denoted by $\bbq(t_{\text{end}})$ [line \ref{rrt:node}, Alg. \ref{alg:RRT}]. 
Then, the terminal horizon is $H=t_{\text{end}}$, and the control inputs $\bbu_{0:H}$ are recovered by computing the path $\bbq_{0:t_{\text{end}}}$ in $\ccalG$ that connects $\bbq(t_{\text{end}})$ to the root $\bbq(0)$, i.e., $\bbq_{0:t_{\text{end}}}= \bbq(0), \dots, \bbq(t_{\text{end}})$ [line \ref{rrt:solution}, Alg. \ref{alg:RRT}]; see also Figure \ref{fig:overview}.
Note that satisfaction of the constraints in \eqref{eq:ProbR} is guaranteed by construction of $\ccalG$; see Section \ref{sec:constructTree}. In what follows, we describe the core operations of Algorithm \ref{alg:RRT}, `\textit{sample}' and `\textit{extend}' that are used to construct the tree $\ccalG$.

\subsection{Incremental Construction of Trees}\label{sec:constructTree}
At every iteration $n$ of Algorithm \ref{alg:RRT}, a new state $\bbq_\text{new}(t+1) =[\bbp_{\text{new}}, \ccalM_{\text{new}}, , q_D^{\text{new}}]$ is sampled. The construction of the state $\bbq_\text{new}(t+1)$ relies on three steps. Specifically, first we sample a state $\bbp_{\text{new}}$; see Section \ref{sec:sample}. Second, given $\bbp_{\text{new}}$ and its parent node, we compute the corresponding new semantic map $\ccalM_{\text{new}}$. Third, we append to $\bbp_{\text{new}}$ and $\ccalM_{\text{new}}$ a DFA state $q_D^{\text{new}}$ giving rise to $\bbq_{\text{new}}(t+1)$ which is then added to the tree structure, if possible; see Section \ref{sec:extend}. 


\subsubsection{Sampling Strategy}\label{sec:sample} 
To construct the state $\bbp_{\text{new}}$, we first divide the set of nodes $\ccalV$ into a \textit{finite} number of sets, denoted by $\ccalV_{k}\subseteq\ccalV$, based on the robot state $\bbp$ and the DFA state $q_D$ that comprise the states $\bbq\in\ccalV$. Specifically, $\ccalV_{k}$ collects all states $\bbq\in\ccalV$ that share the same DFA state and the same robot state (or in practice, robot states that very close to each other). 
%
By construction of $\ccalV_{k}$, we get that $\ccalV = \cup_{k=1}^{K_n}\{\ccalV_{k}\}$, where $K_n$ is the number of subsets $\ccalV_{k}$ at iteration $n$.
Also, notice that $K_n$ is finite for all iterations $n$, due to the finite number of available control inputs $\bbu$ and the finite DFA state-space. 
At iteration $n=1$ of Algorithm \ref{alg:RRT}, it holds that $K_1=1$, $\ccalV_1=\ccalV$ [line \ref{rrt:init}, Alg. \ref{alg:RRT}]. 

Second, given the sets $\ccalV_k$, we first sample from a given discrete distribution $f_{\ccalV}(k|\ccalV):\set{1,\dots,K_n}\rightarrow[0,1]$ an index $k\in\set{1,\dots,K_n}$ that points to the set $\ccalV_{k}$ [line \ref{rrt:samplekrand}, Alg. \ref{alg:RRT}]. The mass function $f_{\ccalV}(k|\ccalV)$ defines the probability of selecting the set $\ccalV_{k}$ at iteration $n$ of Algorithm \ref{alg:RRT} given the set $\ccalV$. 


Next, given the set $\ccalV_{k_{\text{rand}}}$ sampled from $f_{\ccalV}$, we perform the following steps for all $\bbq\in \ccalV_{k_{\text{rand}}}$. Specifically, given a state $\bbq_{\text{rand}}$, we sample a control input $\bbu_{\text{new}}\in\ccalU$ from a discrete distribution $f_{\ccalU}(\bbu):\ccalU\rightarrow [0,1]$ [line \ref{rrt:sampleu}, Alg. \ref{alg:RRT}]. 
Given a control input $\bbu_{\text{new}}$ sampled from $f_{\ccalU}$, we construct the state $\bbp_{\text{new}}$ as $\bbp_{\text{new}}=\bbf(\bbp_{\text{rand}},\bbu_{\text{new}})$ [line \ref{rrt:pnew}, Alg. \ref{alg:RRT}]. If $\bbp_{\text{new}}$ belongs to the obstacle-free space, as required by \eqref{obsFree2}, then the `extend' operation follows [line \ref{rrt:obsFree}, Alg. \ref{alg:RRT}].

Any mass functions $f_{\ccalV}$ and $f_{\ccalU}$ can be used to generate $\bbp_{\text{new}}$ as long as they satisfy the following two assumptions. These assumptions will be used in Section \ref{sec:complOpt} to show completeness and optimality of the proposed sampling-based algorithm.
\begin{assumption}[Probability mass function $f_{\ccalV}$]\label{frand}
(i) The probability mass function $f_{\ccalV}(k|\ccalV):\set{1,\dots,K_n}\rightarrow[0,1]$ satisfies $f_{\ccalV}(k|\ccalV)\geq \epsilon$, $\forall$ $k\in\set{1,\dots,K_n}$ and for all $n\geq 0$, for some $\epsilon>0$ that remains constant across all iterations $n$. 
(ii) Independent samples $k_{\text{rand}}$ can be drawn from $f_{\ccalV}$.
\end{assumption}
%
\begin{assumption}[Probability mass function $f_{\ccalU}$]\label{fnew}
(i) The distribution $f_{\ccalU}(\bbu)$ satisfies $f_{\ccalU}(\bbu)\geq \zeta$, for all $\bbu\in\ccalU$, for some  $\zeta>0$ that remains constant across all iterations $n$. (ii) Independent samples $\bbu_{\text{new}}$ can be drawn from the probability mass function $f_{\ccalU}$.
\end{assumption}
\begin{rem}[Mass functions $f_{\ccalV}$ and $f_{\ccalU}$]
Assumptions \ref{frand}(i) and \ref{fnew}(i) also imply that the mass functions $f_{\ccalV}$ and $f_{\ccalU}$ are bounded away from zero on $\set{1,\dots,K_n}$ and $\ccalU$, respectively. Also, observe that 
Assumptions \ref{frand} and \ref{fnew} are very flexible, since they also allow $f_{\ccalV}$ and $f_{\ccalU}$ to change with iterations $n$ of Algorithm \ref{alg:RRT}, as the tree grows.
\end{rem}
\begin{rem}[Sampling Strategy]
An example of a distribution $f_{\ccalV}$ that satisfies Assumption \ref{frand} is the discrete uniform distribution $f_{\ccalV}(k|\ccalV )=\frac{1}{k}$, for all $k\in\set{1,\dots,K_n}$. Observe that the uniform mass function trivially satisfies Assumption \ref{frand}(ii). Also, observe that Assumption \ref{frand}(i) is also satisfied, since there exists a finite $\epsilon>0$ that satisfies Assumption \ref{frand}(i), which is $\epsilon =\frac{1}{|\ccalR|}$, where $\ccalR$ is a set that collects all robot configurations $\bbp$ that can be reached by the initial state $\bbp(0)$. Note that $\ccalR$ is a finite set, since the set $\ccalU$ of admissible control inputs is finite, by assumption. Similarly, uniform mass functions $f_{\ccalU}$ satisfy Assumption \ref{fnew}. Note that any functions $f_{\ccalV}$ and $f_{\ccalU}$ can be employed as long as they satisfy Assumptions \ref{frand} and \ref{fnew}. Nevertheless, the selection of $f_{\ccalV}$ and $f_{\ccalU}$ affects the performance of Algorithm \ref{alg:RRT}; see the numerical experiments in Section \ref{sec:Sim}. In Appendix \ref{sec:biasedSampling}, we design (nonuniform) mass functions $f_{\ccalV}$ and $f_{\ccalU}$ that bias the exploration towards locations that are expected to contribute to the satisfaction of the assigned mission specification. 
\end{rem}

\subsubsection{Extending the tree}\label{sec:extend}

To build incrementally a tree that explores both the robot motion space, the space of metric uncertainty, and the DFA state space, we need to append to $\bbp_\text{new}$ the corresponding semantic map $\ccalM_{\text{new}}$ determined by the parameters $(\hat{\bbx}_{\text{new}},\Sigma_{\text{new}},d_{\text{new}})$ and DFA state $q_D^{\text{new}}$. Particularly, $\ccalM_{\text{new}}$ is constructed so that $\hat{\bbx}_{\text{new}}=\hat{\bbx}(0)$,
as required in \eqref{constr6c}, and computing $\Sigma_{\text{new}}$ as per the Kalman filter Ricatti map, i.e.,
$\Sigma_{\text{new}}=\rho(\Sigma_{\text{rand}},\bbp_{\text{new}})$, as required in \eqref{constr7c}, and $d_{\text{new}}=d(0)$,
as required in \eqref{constr8c} [lines \ref{rrt:xnew}-\ref{rrt:updmap}, Alg. \ref{alg:RRT}]. 
Next, to construct the state $\bbq_{\text{new}}$ we append to $\bbp_\text{new}$ and $\ccalM_{\text{new}}$ the DFA state $q_D^{\text{new}}=\delta_D(q_D^{\text{rand}},L([\bbp_{\text{rand}},\ccalM_{\text{rand}}]))$,
as required by \eqref{constr1c}.
In words, $q_D^{\text{new}}$ is the automaton state that that can be reached from the parent automaton state $q_D^{\text{rand}}$ given the observation $L([\bbp_{\text{rand}},\ccalM_{\text{rand}}])$. 
If such a DFA state does not exist, then this means the observation $L([\bbp_{\text{rand}},\ccalM_{\text{rand}}])$ results in violating the LTL formula and this new sample is rejected. Otherwise, the state $\bbq_{\text{new}}=(\bbp_\text{new},\ccalM_{\text{new}},q_D^{\text{new}})$ is constructed [line \ref{rrt:qnew}, Alg. \ref{alg:RRT}] which is then added to the tree.

Given a state $\bbq_{\text{new}}$, we update the set of nodes and edges of the tree as $\ccalV = \ccalV\cup\{\bbq_{\text{new}}\}$ and $\ccalE = \ccalE\cup\{(\bbq_{\text{rand}},\bbq_{\text{new}})\}$, respectively [lines \ref{rrt:updV}-\ref{rrt:updE}, Alg. \ref{alg:RRT}]. The cost of the new node $\bbq_{\text{new}}$ is computed as in \eqref{eq:costUpd}, i.e., $J_{\ccalG}(\bbq_{\text{new}})=J_{\ccalG}(\bbq_{\text{rand}})+c(\bbp_{\text{rand}},\bbp_{\text{new}})$ [line \ref{rrt:updCost}, Alg. \ref{alg:RRT}]. Finally, the sets $\ccalV_{k}$ are updated, so that if there already exists a subset $\ccalV_k$ associated with both the DFA state  $\bbq_D^{\text{new}}$ and the robot state $\bbp_{\text{new}}$, then $\ccalV_{k}=\ccalV_{k}\cup\set{\bbq_{\text{new}}}$. Otherwise, a new set $\ccalV_{k}$ is created, i.e., $K_n = K_n +1$ and $\ccalV_{K_n}=\set{\bbq_{\text{new}}}$ [line \ref{rrt:updVk}, Alg. \ref{alg:RRT}]. Recall that this process is repeated for all states  $\bbq_{\text{rand}}\in\ccalV_{k_{\text{rand}}}$ [line \ref{rrt:forq}, Alg. \ref{alg:RRT}].

\begin{rem}[Finite Decomposition of $\ccalV$]\label{rem:finite}
In Section \ref{sec:sample}, we decomposed the set of nodes $\ccalV$ into $K_n$ finite sets, where $K_n$ remains finite as $n\to\infty$, by construction. The \textit{finite} decomposition of the set of nodes is required to ensure probabilistic completeness and optimality of the proposed sampling-based algorithm; see Section \ref{sec:complOpt}. Any alternative decomposition of $\ccalV$ that always results in a finite number of subsets $\ccalV_k$ can be employed without affecting correctness of Algorithm \ref{alg:RRT}; see also Section \ref{sec:scale}.
\end{rem}
\subsection{Online Execution and Re-planning}\label{sec:replan}
Algorithm \ref{alg:RRT} generates an open-loop sequence $\bbq_{0:H}=\bbq(0),\bbq(1),\dots,\bbq(H)$, where $\bbq(t)=[\bbp(t),\ccalM(t),q_D(t)]$, so that the resulting robot trajectory $\bbp_{0:H}$ and sequence of maps $\ccalM_{0:H}$ satisfy $\phi$. 
As the robots execute synchronously their respective paths in $\bbp_{0:H}$, synthesized by Algorithm \ref{alg:RRT}, they take measurements (see \eqref{eq:measModelR}-\eqref{eq:measModelC}) which are used to update both the metric and the semantic environmental uncertainty using a semantic SLAM method. In other words, every time the robots move towards their next waypoint, they take measurements which are then used to update (i) the a posteriori mean and covariance matrix associated with the landmark positions and (ii) the discrete distribution associated with the landmark classes yielding a new semantic map denoted by $\ccalM_{\text{online}}(t)$. \textcolor{black}{Note that the online constructed map $\ccalM_{\text{online}}(t)$ may be different from the corresponding offline map $\ccalM(t)$, as the latter is constructed by predicting only the metric environmental uncertainty by computing the a posteriori covariance matrices.}

As a result, given the new map, the previously designed paths may not be feasible and, therefore, re-planning may be required to adapt to the updated map $\ccalM_{\text{online}}(t)$; see also Figure \ref{fig:overview}. Specifically, re-planning is needed only if the observations/symbols that the robots generate online do not allow them to follow the sequence of DFA states that appears in $\bbq_{0:H}$. Formally, re-planning at time $t$ is required only if the observation $L([\bbp(t),\ccalM_{\text{online}}(t)])$ does not enable a transition from $q_D(t)$ to $q_D(t+1)$, where $q_D(t)$ and $q_D(t+1)$ are determined by $\bbq_{0:H}$ computed by Algorithm \ref{alg:RRT}. In this case, Algorithm \ref{alg:RRT} is initialized with the current robot state $\bbp(t)$, the current DFA state $q_D(t)$, and the current map $\ccalM_{\text{online}}(t)$ to generate a new path. This process is repeated until the accepting DFA state is reached. Once this happens, the LTL formula has been satisfied and the mission terminates. We note that it is possible that at time $t$, as the robots follow their paths, the LTL formula may get violated. This happens if $L([\bbp(t),\ccalM_{\text{online}}(t)])$ does not yield any DFA transition from the current DFA state $q_D(t)$ (or if it yields a transition to an absorbing DFA state, in case DFA is complete). In this scenario, least violating paths can be found as e.g., in \cite{lahijanian2016specification}.\footnote{Intuitively, violation of an LTL formula as the robots follow their paths depends on their sensing capabilities. For instance, an LTL formula may get violated during execution of the synthesized paths, if the robot sensing range is significantly smaller than then proximity requirements captured by the atomic predicates. Consider e.g., the specification in Example \ref{ex:LTL}; see also Fig. \ref{fig:PF1}. Assume that a robot has to always keep a distance of $2$m from the pole. If the sensing capabilities allow the robot to detect objects that are within $d<2$ m, then it is possible that the pole avoidance requirement may get violated, as the pole may get detected only when its distance from the robots is less than $d<2$ m.} 
%
%
\textcolor{black}{\begin{rem}[Prior Map \& Re-planning]
First, recall from Assumption \ref{as:prior} that we assume that there exists a known number of landmarks in the environment. This assumption can be relaxed due to the SLAM capability included in the proposed architecture; see Fig. \ref{fig:overview}. In fact, a prior semantic map is used to initialize Algorithm \ref{alg:RRT} and to guide the planning phase. Once the robots follow the designed paths, they may discover/sense new landmarks. Once this happens, the employed SLAM method will generate a probabilistic belief about these landmarks, which will then be included in the semantic map. Second, it is possible that the prior distribution may guide the robots towards locations in the environment where no landmarks exist. For instance, consider a case where, as per the prior distribution, a landmark of interest is expected to be in a certain room. Due to this prior distribution, Algorithm \ref{alg:RRT} will design paths towards this room. If no landmarks exist in this room, i.e., if this prior distribution is wrong, then it can be discarded from the current semantic map, so that design of mission paths is not guided by this wrong prior. For instance, a prior belief about a landmark can be considered wrong, if the robot approaches the expected position of the landmark within some distance $\eta>0$, and no landmarks are sensed. The parameter $\eta$ is user-specified and it can be associated with the sensing range of the robots. Both scenarios may trigger re-planning as discussed above to account for the new map. 
\end{rem}}
\begin{rem}[Re-planning Frequency]
The frequency with which re-planning is triggered depends on how different the offline predicted map is from the online learned map. In Section \ref{sec:Sim}, we illustrate through simulations that the more inaccurate the initial semantic map is, the more often re-planning is triggered. \textcolor{black}{A potential approach towards minimizing the re-planning frequency is to incorporate semantic map prediction methods into the offline planning process \cite{narasimhan2020seeing} or to design paths that are robust to uncertainty \cite{yel2020gp,lindemann2021robust}.} 
Also, throughout the paper, we have assumed that the geometric structure of the environment is known; note that the geometry of the environment can be learned using SLAM methods, as well \cite{grisetti2007improved}. This assumption can be relaxed by re-planning every time the robots detect new obstacles/walls that interfere with the paths designed by Algorithm \ref{alg:RRT}. Nevertheless, this may result in rather frequent re-planning, especially, in cluttered environments, increasing the online computational burden \cite{ryll2019efficient}. 
Our future work will focus on designing more efficient re-planning frameworks in the presence of both geometric and metric/semantic environmental uncertainty by e.g., locally patching the designed paths, or decomposing the global LTL task into smaller sub-tasks \cite{kantaros2020reactive}, or incorporating data-driven map prediction methods in the planning process \cite{elhafsi2020map}.
\end{rem}


\section{Discussion and Extensions}\label{sec:extensions}
In this section, we discuss how assumptions made in Section \ref{sec:PF} can be relaxed.  \textcolor{black}{Specifically, in Section \ref{sec:unexpl} we discuss how the robots should behave in case feasible - either initial or online revised - paths do not exist. This way, we can relax Assumption \ref{as:prior} requiring an informative enough prior  belief so that there exists a feasible solution to \eqref{eq:Prob2}.} Then, in  Section \ref{sec:nonlin}, we show how Algorithm \ref{alg:RRT} can account for sensor models \eqref{eq:measModelR} that may be nonlinear in the positions of landmarks (e.g., range sensors) \textcolor{black}{or subject to  noise of unknown distribution}.
\subsection{Reactive Planning in Unexplored Semantic Environments}\label{sec:unexpl}
\textcolor{black}{In this section, we discuss how the robots should behave in case feasible paths do not exist and, therefore, Alg.\ref{alg:RRT} cannot generate paths. 
%
%
In fact, initial feasible paths may not exist if a prior map is unavailable or highly uncertain; see Assumption \ref{as:prior}. The latter may hold for the online learned map when re-planning is triggered, as well. 
%
%
To mitigate this challenge, we couple Algorithm \ref{alg:RRT} with existing exploration and information gathering methods \cite{leung2012decentralized,atanasov2015decentralized, kantaros2019asymptotically}. 
The key idea is that the robots switch back and forth between an exploration and exploitation mode depending on the mission status. In particular, first the robots follow an exploration strategy aiming to spread in the environment to detect landmarks and decrease the environmental uncertainty. Once a user-specified condition about the exploration mode is satisfied, the robots switch to an exploitation mode by applying Algorithm \ref{alg:RRT} aiming to find paths satisfying the assigned mission by exploiting the semantic map constructed during exploration. If such paths still do not exist, the robots switch back to the exploration mode. This process is repeated until the robots accomplish their mission. }

In what follows, we describe an exploration strategy, adopted from \cite{leung2012decentralized,atanasov2015decentralized}, along with its interaction with Algorithm \ref{alg:RRT} in more detail. First, we define a grid $C=\{c_1,c_2,\dots,c_L\}$ over the environment, where $c_k$ corresponds to the $k$-th grid cell, and we split the grid cells into `explored' (i.e., cells that have been visited by the robots) and 'unexplored'. Then, we introduce dummy `exploration' landmarks at the frontiers of the explored grid map for which uniform discrete and Gaussian distributions are defined for their labels and positions, respectively.  Given the `exploration' and any already detected landmarks, the robots design informative open-loop/offline paths that aim to minimize the metric uncertainty of these landmarks within a pre-determined finite horizon $F>0$; more details about designing these paths can be found in  \cite{atanasov2015decentralized}. Note that the fake uncertainty in the exploration-landmark locations promises information gain to the robots motivating them to move towards the unexplored part of the environment and, therefore, detect new landmarks.
As the robots start following the designed paths while taking measurements and updating the semantic map, they may detect new landmarks. Once a  new landmark $\ell_i$ is detected by a robot $j$, a Gaussian and a discrete distribution associated with this landmark are initialized. 
Once all robots have finished the execution of their paths, they should determine whether they should keep exploring the environment or design mission-based paths over the constructed semantic map using Algorithm \ref{alg:RRT}. 
We require mission-based planning to be triggered once $K(t)\geq 0$ new landmarks are detected, where $K(t)$ is user-specified and it depends on the assigned mission. 
Also, additional requirements can be incorporated to trigger Algorithm \ref{alg:RRT} related to e.g., the map uncertainty.
Note that during the exploration phase, as the robots detect and localize landmarks, they may  satisfy certain atomic propositions enabling a transition from the current DFA state to a new DFA state; see also Remark \ref{rem:safeExpl}. Thus, when Algorithm \ref{alg:RRT} is executed, it is initialized with the DFA state that the exploration mode has ended up. 
\begin{rem}[Safe Exploration]\label{rem:safeExpl}
A particular challenge that arises in the proposed framework is that of safe exploration. In fact, as the robots explore the environment to detect and localize previously unknown landmarks they may satisfy atomic propositions that violate the assigned LTL specification. Specifically, let $q_D(t)$ be the automaton state the robots are at time $t$ and assume that the robots are currently in exploration mode. During exploration, the robots may generate a symbol that does not enable any transition originating from $q_D(t)$ resulting in violation of the LTL task. An empirical approach towards mitigating this issue is to ensure that the robots always satisfy the Boolean formula that enables them to remain in $q_D(t)$.\footnote{By construction of the DFA, there exists a Boolean formula defined over $\mathcal{AP}$ that needs to be satisfied to enable a DFA transition.} For instance, assume that the robots can remain in $q_D(t)$ if with probability greater than $0.9$, they keep a distance of at least $1$m from any landmark with class `table'. Then, we can compute $\epsilon$-confidence intervals, for some $\epsilon>0$, centered at the expected positions of detected landmarks whose most likely class is `table'; $\epsilon$-confidence intervals are then treated as virtual obstacles that should be avoided. 
If the negation operator is excluded from the syntax, then exploration is always safe as there are no observations that can violate the assigned LTL task. 
\end{rem}
\subsection{Nonlinear Sensor Models and Unknown Noise Distribution}\label{sec:nonlin}
In Section \ref{sec:PF}, we assumed that the sensor model \eqref{eq:measModelR} is linear with respect to the landmark positions $\bbx$ allowing us to compute offline the a-posteriori covariance matrix without the need of measurements. Nevertheless, this assumption may not hold in practice e.g., in case of range sensors. In this case, during the execution of Algorithm \ref{alg:RRT}, similar to \cite{atanasov2015decentralized}, we compute the a-posteriori covariance matrices $\Sigma_{\text{new}}$ using the linearized observation model about the estimated landmark positions [line \ref{rrt:Sigmanew}, Alg. \ref{alg:RRT}]. Such case studies are presented in Section \ref{sec:Sim}.

\textcolor{black}{Additionally, in Section \ref{sec:PF}, we assumed that the measurement noise follows a Gaussian distribution. This can be relaxed by leveraging recent distributionally robust planning methods \cite{summers2018distributionally,renganathan2020towards}. Specifically, we can consider measurement noise with unknown distribution assuming that it belongs to an ambiguity set with known mean and covariance matrix. Then, the Kalman filter Ricatti map can be used to update the covariance matrices offline \cite{renganathan2020towards}. A key challenge in this case is to define perception-based predicates capturing probabilistic requirements since the hidden state (e.g., the landmark positions) does not follow a known distribution. To mitigate this challenge, worst-case probabilistic requirements can be defined that are typically translated to linear constraints \cite{renganathan2020towards}.}



\section{Completeness \& Optimality }\label{sec:complOpt}
In this section, we examine, the correctness and optimality of Algorithm \ref{alg:RRT}. First, in Theorem \ref{thm:probCompl},  we show that Algorithm \ref{alg:RRT} is probabilistically complete. In other words, given an initial semantic map $\ccalM(0)$ and a sensor model as per \eqref{eq:measModelR}, if there exists a solution to \eqref{eq:Prob2}, then Algorithm \ref{alg:RRT} will eventually find it.
In Theorem \ref{thm:asOpt}, we also show that Algorithm \ref{alg:RRT} is asymptotically optimal, i.e., Algorithm \ref{alg:RRT} will eventually find the optimal solution to \eqref{eq:Prob2}, if it exists. The proofs can be found in Appendix \ref{sec:prop}.

To show completeness and optimality of Algorithm \ref{alg:RRT}, we assume that the probability mass functions $f_{\ccalV}$ and $f_{\ccalU}$ satisfy Assumptions \ref{frand} and \ref{fnew}, respectively.
\begin{theorem}[Probabilistic Completeness]\label{thm:probCompl}
If there exists a solution to \eqref{eq:Prob2}, then Algorithm \ref{alg:RRT} is probabilistically complete, i.e., the probability of finding a feasible solution, i.e., a feasible horizon $H$ and a feasible sequence of control inputs $\bbu_{0:H}$ for \eqref{eq:Prob2}, goes to $1$ as $n\to\infty$.
\end{theorem}


\begin{theorem}[Asymptotic Optimality]\label{thm:asOpt}
Assume that there exists an optimal solution to  \eqref{eq:Prob2}. Then, Algorithm \ref{alg:RRT} is asymptotically optimal, i.e., the optimal horizon $H$ and the optimal sequence of control inputs $\bbu_{0:H}$ will be found with probability $1$, as $n\to\infty$. In other words, the path generated by Algorithm \ref{alg:RRT} satisfies
$\mathbb{P}\left(\left\{\lim_{n\to\infty} J(H,\bbu_{0:H})=J^*\right\}\right)=1,$
where $J$ is the objective function of \eqref{eq:Prob2} and $J^*$ is the optimal cost.\footnote{Note that the horizon $H$ and $\bbu_{0:H}$ returned by Algorithm \ref{alg:RRT} depend on $n$. For simplicity of notation, we drop this dependence.}
\end{theorem}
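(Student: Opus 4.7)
The plan is to show that the specific optimal path $\bbq_{0:H^*}^{*}$, corresponding to an optimal control sequence $\bbu_{0:H^*-1}^{*}$ with cost $J^{*}$, is eventually reconstructed in the tree $\ccalG$ with probability one, and then to invoke the minimum-cost selection rule in line \ref{rrt:node} of Algorithm \ref{alg:RRT} to conclude. This refines the argument of Theorem \ref{thm:probCompl}: instead of showing that \emph{some} feasible trajectory is found, I would show that this \emph{particular} optimal trajectory is found.

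The first technical step is a single-step extension bound. Suppose that at some iteration $n$ the tree already contains the prefix $\bbq(0), \bbq^{*}(1), \dots, \bbq^{*}(t)$ for some $t<H^{*}$, so that $\bbq^{*}(t)$ belongs to some subset $\ccalV_{k^{*}}$ of the current decomposition. Then with probability at least $\epsilon$ the algorithm draws $k_{\text{rand}}=k^{*}$ (Assumption \ref{frand}), and conditional on that event, for the node $\bbq^{*}(t)\in\ccalV_{k^{*}}$ the control $\bbu^{*}(t)\in\ccalU$ is sampled with probability at least $\zeta$ (Assumption \ref{fnew}). Because the robot dynamics \eqref{eq:rdynamics}, the Kalman Riccati map \eqref{constr7c}, the deterministic predicates on $\hat{\bbx}$ and $d$ in \eqref{constr6c} and \eqref{constr8c}, and the DFA transition \eqref{constr1c} are all deterministic functions of the parent node and the sampled control, the new node $\bbq_{\text{new}}$ generated in lines \ref{rrt:pnew}--\ref{rrt:qnew} coincides exactly with $\bbq^{*}(t+1)$. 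Hence the one-step extension probability along the optimal path is at least $p := \epsilon\zeta > 0$, uniformly in $n$ by the constancy of $\epsilon$ and $\zeta$ stated in Assumptions \ref{frand}--\ref{fnew}.

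The second step chains these extensions. Since $H^{*}$ is finite and the lower bound $p$ does not depend on the iteration index, a standard geometric/Borel--Cantelli argument shows that the probability that the entire optimal path $\bbq_{0:H^{*}}^{*}$ is contained in $\ccalV$ after $n$ iterations tends to $1$ as $n\to\infty$. When this happens, $\bbq^{*}(H^{*})\in\ccalX_g$ because $q_D^{*}(H^{*})=q_F$, and by the recursive cost update \eqref{eq:costUpd} the cost stored at this node in the tree equals $J^{*}$. Finally, line \ref{rrt:node} returns the minimum-cost node in $\ccalX_g$, so the returned cost $J(H,\bbu_{0:H})$ is at most $J^{*}$; feasibility (guaranteed by construction of $\ccalG$, see Section \ref{sec:constructTree}) gives the reverse inequality $J(H,\bbu_{0:H})\ge J^{*}$. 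Almost-sure convergence to $J^{*}$ follows.

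The main obstacle I anticipate is handling the iteration-dependence of the decomposition $\{\ccalV_k\}_{k=1}^{K_n}$: the set $\ccalV_{k^{*}}$ that must be drawn in order to extend the optimal prefix may be a different index $k$ at different iterations, and $K_n$ itself grows. This is exactly why Assumption \ref{frand} is stated with a lower bound $\epsilon$ that is uniform over $n$ and over all indices $k\in\{1,\dots,K_n\}$, and why the remark on the finite decomposition of $\ccalV$ is crucial. A second subtlety is that the extension events at different iterations are not independent (they depend on the history of the tree), but this can be circumvented by considering only the sub-events in which each consecutive prefix is extended, and bounding the probability that none of the first $n$ iterations performs the required extension by $(1-p)^{\lfloor n/H^{*}\rfloor}$, which vanishes geometrically.
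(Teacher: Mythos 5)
Your proposal is correct, and its skeleton matches the paper's: both arguments rest entirely on the uniform lower bounds $\epsilon$ and $\zeta$ of Assumptions \ref{frand} and \ref{fnew} to show that the optimal path is eventually reconstructed in the tree, and both then conclude via the minimum-cost extraction over $\ccalX_g$. Where you differ is in the probabilistic tooling and in what is made explicit. The paper proves the stronger universal statement that \emph{every} path $\bbq_{0:H}$ with $\bbq(h)\in\ccalR(\bbq(h-1))$ is added with probability one (Lemmas \ref{lem:qrand}--\ref{lem:qnew} via Borel--Cantelli, then Corollaries \ref{cor:reach} and \ref{cor:paths} by contradiction and induction), and Theorem \ref{thm:asOpt} is dispatched in two lines as a special case of Theorem \ref{thm:probCompl}. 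You instead target the specific optimal trajectory directly, bounding the one-step extension probability below by $p=\epsilon\zeta$ and chaining geometrically; this is more elementary (no appeal to Borel--Cantelli), buys an explicit tail bound of order $(1-p)^{\lfloor n/H^*\rfloor}$ (up to a union-bound factor of $H^*$ that you omit but which is immaterial), and correctly handles the dependence across iterations by stochastic domination rather than by the independence clause of the assumptions. You also spell out the two steps the paper leaves implicit: that the deterministic maps \eqref{constr1c}--\eqref{constr8c} force $\bbq_{\text{new}}=\bbq^{*}(t+1)$ exactly, and the sandwich argument at line \ref{rrt:node} ($J\leq J^{*}$ once the optimal node enters $\ccalX_g$, $J\geq J^{*}$ by feasibility of every node in $\ccalX_g$). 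Your flagged subtlety about the index of $\ccalV_{k^{*}}$ drifting as $K_n$ grows is real and is resolved exactly as you say, by the uniformity of $\epsilon$ over $n$ and $k$.
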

%

Recall from Section \ref{sec:replan} that the synthesized paths may need to be revised online by re-running Algorithm \ref{alg:RRT}. Theorems \ref{thm:probCompl}-\ref{thm:asOpt} imply that if re-planning is required at any time $t$, a feasible and the optimal path will eventually be found, if they exist. Nevertheless, the proposed control architecture cannot guarantee recursive feasibility, i.e., existence of feasible paths at future time instants when re-planning is needed. In fact, existence of feasible paths when re-planning is triggered depends on the initialization of the corresponding optimal control problem \eqref{eq:Prob2}, i.e., the online constructed map and, consequently, on the perceptual feedback which is hard to predict a-priori.

\section{Experimental Validation} \label{sec:Sim}


\begin{figure}[t]
  \centering
  \includegraphics[width=0.75\linewidth]{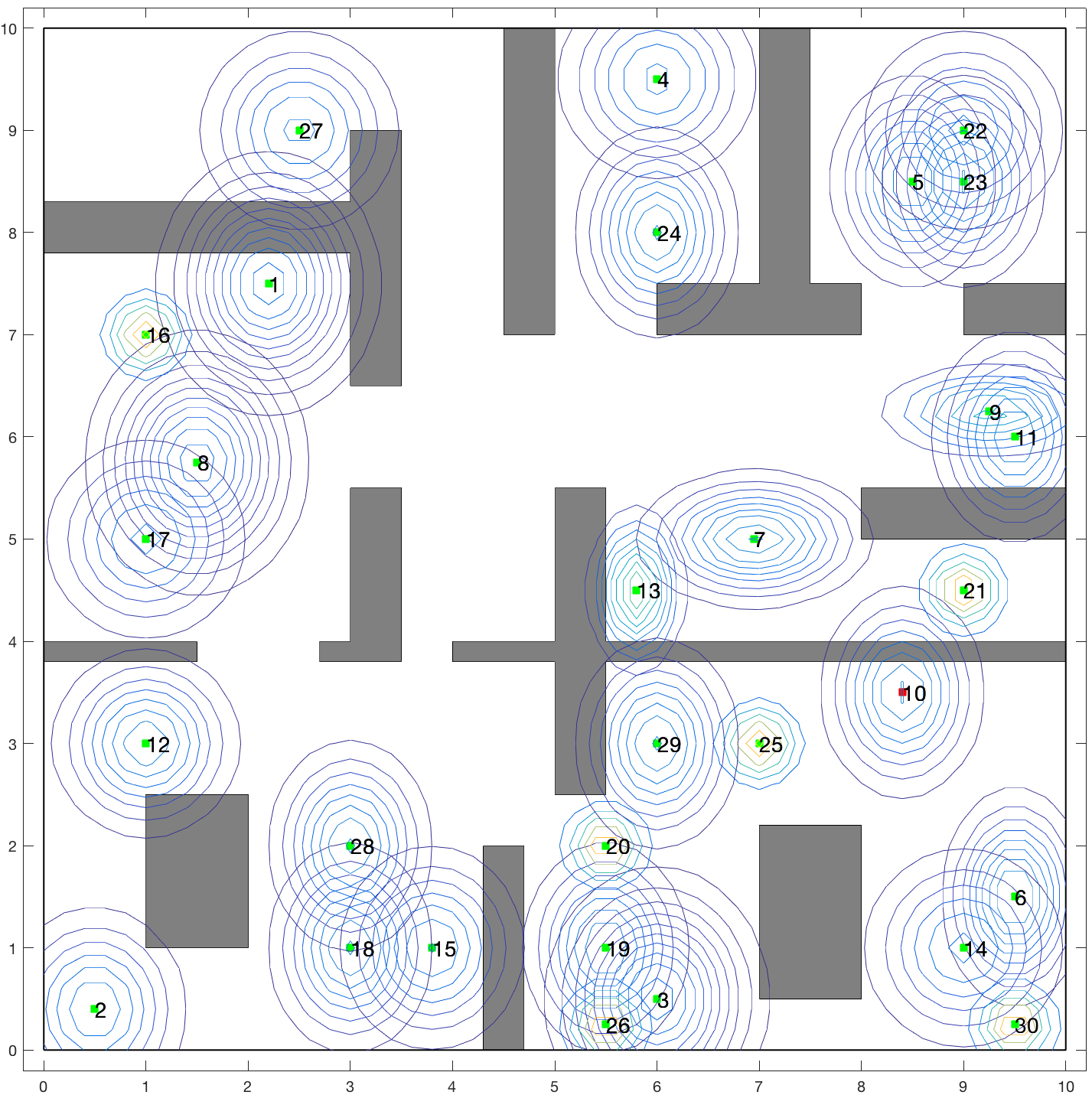}
\caption{Initial semantic map $\ccalM(0)$ with $30$ landmarks considered in Table \ref{tab:scaleSum}. The map with the $15$ landmarks considered in the same table consists of the landmarks $\ell_i$ with indices $i\in\{1,\dots,15\}$ shown in this figure.}
  \label{fig:maps_tables}
\end{figure}


In this section, we present experimental case studies that illustrate the performance of Algorithm \ref{alg:RRT} and show that it can solve large-scale planning tasks in uncertain environments. In Section \ref{sec:scale}, we demonstrate scalability of Algorithm \ref{alg:RRT} with respect to the number of landmarks and robots. In particular, given an initial semantic map and an initial multi-robot state, we report the time required by Algorithm \ref{alg:RRT} to find the first feasible solution along with the corresponding terminal horizon (without considering re-planning). \textcolor{black}{The scalability analysis has been conducted using MATLAB 2016b on a computer with Intel Core i7 3.1GHz and 16Gb RAM.}
In Section \ref{sec:replanSim}, we demonstrate in a Gazebo simulated semantic city the re-planning framework, discussed in Section \ref{sec:replan}, for various initial semantic maps. 
\textcolor{black}{These experiments have been conducted on Gazebo (ROS, python3) on a computer with Intel Core i5 - 8350U 1.7GHz and 16Gb RAM.}
Finally, in Section \ref{sec:noMap}, we show how Algorithm \ref{alg:RRT} can be applied in environments with no prior information about their metric/semantic structure, as per the discussion provided in Section \ref{sec:extensions}. The experimental videos can be found in \cite{SimSemMaps}.
%
%
%
\subsection{Scalability Analysis: Offline Planning over Prior Semantic Maps}\label{sec:scale}
In what follows, we examine the scalability performance of Algorithm \ref{alg:RRT} with respect to the number $N$ of robots, and the number $M$ of landmarks. We first describe the robot dynamics, the sensor model, the setup of the initial semantic map, the structure of the assigned LTL task, and then we conclude with the scalability analysis summarized in Tables \ref{tab:scaleSum}-\ref{tab:scale3}.
\normalsize
\subsubsection{Robot Dynamics} We consider robots with the following differential drive dynamics:

%
\footnotesize{
\begin{equation}\label{eq:nonlinRbt}
\begin{bmatrix}
   p_j^1(t+1) \\
  p_j^2(t+1)\\
  \theta_j(t+1)
  \end{bmatrix}=  
  \begin{bmatrix}
   p_j^1(t) \\
  p_j^2(t)\\
  \theta_j(t)
  \end{bmatrix}+    
   \begin{cases}
        \begin{bmatrix}
   \tau u \cos(\theta_j(t)+\tau\omega/2) \\
   \tau u \sin(\theta_j(t)+\tau\omega/2)\\
     \tau  \omega
     \end{bmatrix}, \text{if} ~\tau\omega<0.001\\
  \\
 \begin{bmatrix}
   \frac{u}{\omega} (\sin(\theta_j(t)+\tau\omega)-\sin(\theta_j(t))) \\
   \frac{u}{\omega} (\cos(\theta_j(t)) - \cos(\theta_j(t)+\tau\omega)) \\
  \tau \omega
  \end{bmatrix}, \text{else},
    \end{cases}      
\end{equation}}
\normalsize
where the robot state $\bbp_j(t)=[p_j^1(t),p_j^2(t),\theta_j]^T$ captures both the position $[p_j^1(t),p_j^2(t)]$ and the orientation $\theta_j(t)$ of robot $j$ and $\tau$ is the sampling period. The available motion primitives are $u\in\{0,1\}\text{m/s}$ and $\omega\in\set{0,\pm 1,\pm 2,\dots, \pm 180}\text{degree/s}$.

\subsubsection{Sensor Model \eqref{eq:measModelR}} All robots are equipped with omnidirectional, range-only, line-of-sight sensors with limited range of $1$m. Every robot can take noisy range measurements, i.e., measure its distance from all landmarks that lie within its sight and range. Specifically, the measurement associated with robot $j$ and landmark $i$ is given by 
\begin{equation}\label{eq:meassim}
y_{j,i} = \ell_{j,i}(t) + v(t) \ ~\mbox{if}~ \ (\ell_{j,i}(t) \leq R_j) \wedge (i\in\texttt{FOV}_j).
\end{equation}
where $\ell_{j,i}(t)$ is the distance between landmark $\ell_i$ and robot $j$, $R_j$ denotes the sensing range of robot $j$ which is equal to $1$m, $\texttt{FOV}_j$ denotes its corresponding field-of-view, and $v(t) \sim \ccalN(0,\sigma(\bbx_i(t),\bbp_j(t)))$ is the measurement noise. 
Also, we model the measurement noise so that $\sigma$ increases linearly with $\ell_{j,i}(t)$, with slope $0.5$, as long as $\ell_{j,i}(t)\leq R_j$; if $\ell_{j,i}(t)> R_j$, then $\sigma$ is infinite. This model captures the fact that range readings become more inaccurate as the distance increases \cite{schlotfeldt2018anytime}; nevertheless, any other model can be used, such as uniform noise within the sensing range. Observe that the considered observation model is nonlinear and, therefore, the a-posteriori covariance matrix, cannot be estimated optimally offline; see \eqref{constr7c}.
In this case, at every iteration of Algorithm \ref{alg:RRT},  we approximately compute the a-posteriori covariance matrices $\Sigma_{\text{new}}$ using the linearized observation model about the predicted positions of the landmarks [line \ref{rrt:Sigmanew}, Alg. \ref{alg:RRT}].\footnote{We avoid defining an object recognition method as per \eqref{eq:measModelC}, as the main focus of this section is to demonstrate scalability of Algorithm \ref{alg:RRT} in terms of solving \eqref{eq:Prob2} without considering re-planning. Recall that \eqref{eq:Prob2} does not depend on the object recognition method. In fact, the latter is needed only for learning online the semantic map and accordingly re-planning; see Section \ref{sec:replan}.}

\subsubsection{Sampling strategy} Hereafter, we employ the mass functions $f_{\ccalV}$ and $f_{\ccalU}$ designed in Appendix \ref{sec:biasedSampling} with $p_{\text{rand}}=0.9$ and $p_{\ccalU}=0.9$. Note that uniform sampling functions failed to solve the considered planning problems within three hours due to the large joint state space that needs to be explored. In fact, using uniform sampling strategies, Algorithm \ref{alg:RRT} was able to solve only small-scale problems e.g., single-agent planning problems in $4\times4$ environments with $3$ landmarks.
Also, notice that the computational complexity of Algorithm \ref{alg:RRT} per iteration depends linearly on the size of the selected subset $\ccalV_{k_{\text{rand}}}$ (see [line \ref{rrt:forq}, Alg. \ref{alg:RRT}]). Therefore, to speed up the construction of a feasible solution, a new set $\ccalV_k$ is constructed for every new sample, for the first $100$ iterations/samples (i.e., the first $100$ sets $\ccalV_k$ are singleton). After that, the sets $\ccalV_k$ are constructed as described in Section \ref{sec:samplingAlg}. This way the number of subsets $\ccalV_k$ remains finite as Algorithm \ref{alg:RRT}; see also Remark \ref{rem:finite}.

\subsubsection{Initial Semantic Map} 
We have evaluated the performance of Algorithm \ref{alg:RRT} on two different initial semantic maps consisting of $M=15$ and $M=30$ landmarks illustrated in Figure \ref{fig:maps_tables}. Particularly, this figure shows the Gaussian distributions determining the initial semantic map $\ccalM(0)$ along with the known geometric structure. We assume that the set of classes is $\ccalC=\set{1,2,\dots,M}$,
while the true class of each landmark is uncertain. In fact, the robots have access to a discrete distribution $d(0)$ over the labels that is designed by randomly selecting a probability within the range $[0.8,1]$ that a landmark has the correct label, and by randomly selecting probabilities for the remaining classes.

\subsubsection{Mission Specification} The robots are responsible for accomplishing a search-and-rescue task captured by the following LTL formula: $\phi=\Diamond\xi_{1}\wedge\Diamond\xi_{2}\wedge\neg \xi_1 \ccalU \xi_3\wedge\Diamond(\xi_{4}\wedge\Diamond(\xi_{5}\wedge(\Diamond \xi_{6})))\wedge\Diamond\xi_{7}$
where $\xi_i$ is a Boolean formula. Specifically, to define $\xi_i$, we decompose the robot team into smaller sub-groups where each sub-group is associated with a specific landmark. Each $\xi_i$ is associated with a finite number of pairs of robot sub-groups and an assigned landmark. We collect such pairs in a set denoted by $\ccalP_i$.
The Boolean formula $\xi_i$ is true if the following is true for all pairs in $\ccalP_i$: given a pair in $\ccalP_i$, all robots within a sub-group should be within $r=0.2$ meters from a specific landmark $\ell_k$ with probability greater than $1-\delta=0.75$ (see \eqref{ap1}) while the uncertainty about the position of this landmark is less than $\delta=0.01$ (see \eqref{apS}). 
%
%
%
In words, the specification $\phi$ requires (i) sub-groups of robots to eventually satisfy $\xi_1$ (see $\Diamond\xi_1$); (ii) sub-task (i) to never happen until certain sub-groups satisfy $\xi_3$ (see $(\neg \xi_1 \ccalU \xi_3)$); (iii) robot sub-groups to eventually satisfy $\xi_2$ (see $\Diamond\xi_2$); (iii) robot sub-groups to eventually satisfy $\xi_7$ (see ($\Diamond\xi_7$)); (iv) robot sub-groups  to eventually satisfy $\xi_4$, $\xi_5$, and $\xi_6$ in this order (see $\Diamond(\xi_{4}\wedge\Diamond(\xi_{5}\wedge(\Diamond \xi_{6})))$); 
The sub-tasks (i)-(iv) should be satisfied according to the probabilistic satisfaction requirements discussed above. 

Observe that the LTL task $\phi$ does not capture the requirement for obstacle avoidance. Recall that obstacle avoidance is considered during the offline path design in [line \ref{rrt:obsFree}, Alg. \ref{alg:RRT}]. Additional safety constraints can be added to line \ref{rrt:obsFree}. For instance, consider a safety property that requires all robots to always avoid approaching landmark with class $10$ within distance of $r=1$m with probability greater than $1-\delta=0.8$. This requirement can be captured by the following Boolean formula $\xi_{\text{safety}}=\wedge_{j\in\ccalN}\neg\pi_p(\bbp(t),\ccalM(t),\{j, 1, 0.2, `10'\})$, where the atomic predicate $\pi_p$ is defined as per \eqref{apMS}. Given such a safety property, every time a new sample $\bbp_{\text{new}}$ is generated, we can compute the corresponding semantic map $\ccalM_{\text{new}}$, and check if the current pair of the multi-robot state $\bbp_{\text{new}}$  and the semantic map $\ccalM_{\text{new}}$ satisfies the Boolean formula $\xi_{\text{safety}}$. If it does, the state $\bbq_{\text{new}}$ can be constructed as discussed in Algorithm \ref{alg:RRT}. Otherwise, the current sample is discarded. In this case study, we execute Alg. \ref{alg:RRT} for the specification $\phi$ along with the additional safety constraint $\xi_{\text{safety}}$ defined above.
\subsubsection{Scalability Results}
In what follows, we examine the performance of Algorithm \ref{alg:RRT} with respect to the number $N$ of robots, and the number $M$ of  landmarks along with the effect of robot velocities (parameter $\tau$ in \eqref{eq:nonlinRbt}). 
In this scalability analysis, for a fixed number $M$ of landmarks, as we increase the number of robots, we associate each $\xi_i$ in $\phi$ with more pairs of robot sub-teams and landmarks. 
The same holds as we increase the number of landmarks given a fixed number $N$ of robots, as well. Also, we note that the mission specification does not necessarily involve all landmarks that may exist in the environment. The results are summarized in Table \ref{tab:scaleSum} for $\tau=0.1$ and $\tau=0.5$ seconds. 
%
First, using available online toolboxes \cite{duret2004spot}, we convert $\phi$ into a deterministic automaton that has $48$ states and $540$ transitions. The pruning process finished in less than a second for all cases studies considered in Table \ref{tab:scaleSum}, which e.g., for the case study $N=100, M=30$, yielded a pruned automaton with $172$ edges.
Observe in Table \ref{tab:scaleSum}, that as the robot velocities increase, the terminal horizon and the runtime of Algorithm \ref{alg:RRT} decrease. Also, observe in Table \ref{tab:scale3} that for a fixed number of number of robots, as the number of landmarks increase, the total runtime of Algorithm \ref{alg:RRT} increases, as the robots have to adapt planning to a `larger' map.
Notice that Algorithm \ref{alg:RRT} can solve large-scale tasks that may involve hundreds of robots in a reasonable time due to the biased sampling process. Recall that the employed biased sampling strategy requires computing the geodesic distance between the position of robot $j$ to the expected position of a landmark, for all robots $j$. In our implementation, to compute the geodesic distances, we first discretize the workspace and, then, we use the Dijkstra algorithm to compute the corresponding shortest paths sequentially across the robots. As a result, the runtimes can significantly decrease if the geodesic distances are computed in parallel.


\begin{table}[]
\caption{Scalability Analysis - $\tau = 0.1$}
\label{tab:scaleSum}
\centering
\begin{tabular}{|l|l|l|l|l|l|}
\hline
\multirow{2}{*}{N} & \multirow{2}{*}{M} & \multicolumn{2}{c|}{\begin{tabular}[c]{@{}c@{}}Matlab Runtimes (mins)\\ (First feasible path)\end{tabular}}                                                                                          & \multicolumn{2}{c|}{H}                            \\ \cline{3-6} 
                   &                    & \multicolumn{1}{c|}{\begin{tabular}[c]{@{}c@{}}$\tau=0.1$\\ \end{tabular}} & \multicolumn{1}{c|}{\begin{tabular}[c]{@{}c@{}}$\tau=0.5$\\ 
                   \end{tabular}} & $\tau=0.1$ & $\tau=0.5$ \\ \hline
1                  & 15                 & 0.66                                                                                          & 0.12                                                                                          & 461                     & 105                     \\ \hline
1                  & 30                 & 0.88                                                                                          & 0.21                                                                                          & 465                     & 107                     \\ \hline
5                  & 15                 & 1.94                                                                                          & 0.48                                                                                          & 389                     & 94                      \\ \hline
5                  & 30                 & 2.68                                                                                          & 0.58                                                                                          & 389                     & 102                     \\ \hline
20                 & 15                 & 8.74                                                                                          & 2.33                                                                                          & 561                     & 132                     \\ \hline
20                 & 30                 & 10.32                                                                                         & 2.65                                                                                          & 565                     & 121                     \\ \hline
40                 & 15                 & 14.68                                                                                         & 7.77                                                                                          & 457                     & 151                     \\ \hline
40                 & 30                 & 16.43                                                                                         & 7.81                                                                                          & 456                     & 151                     \\ \hline
100                & 15                 & 33.32                                                                                         & 15.41                                                                                         & 533                     & 147                     \\ \hline
100                & 30                 & 38.42                                                                                         & 20.56                                                                                         & 543                     & 146                     \\ \hline
\end{tabular}
\end{table}



\begin{table}[t]
\caption{Scalability Analysis - $\tau = 0.5$}
\label{tab:scale3}
\centering
\begin{tabular}{|l|l|l|l|}
\hline
N  & M  & MatLab Runtime & H   \\ \hline
10 & 10 & 0.94 mins                 & 106 \\ \hline
10 & 45 & 2.98 mins                 & 146 \\ \hline
10 & 60 & 3.94 mins                 & 121 \\ \hline
10 & 75 & 5.41 mins                 & 133 \\ \hline
\end{tabular}
\end{table}

\begin{figure}[t]
 \centering
  \includegraphics[width=0.9\linewidth]{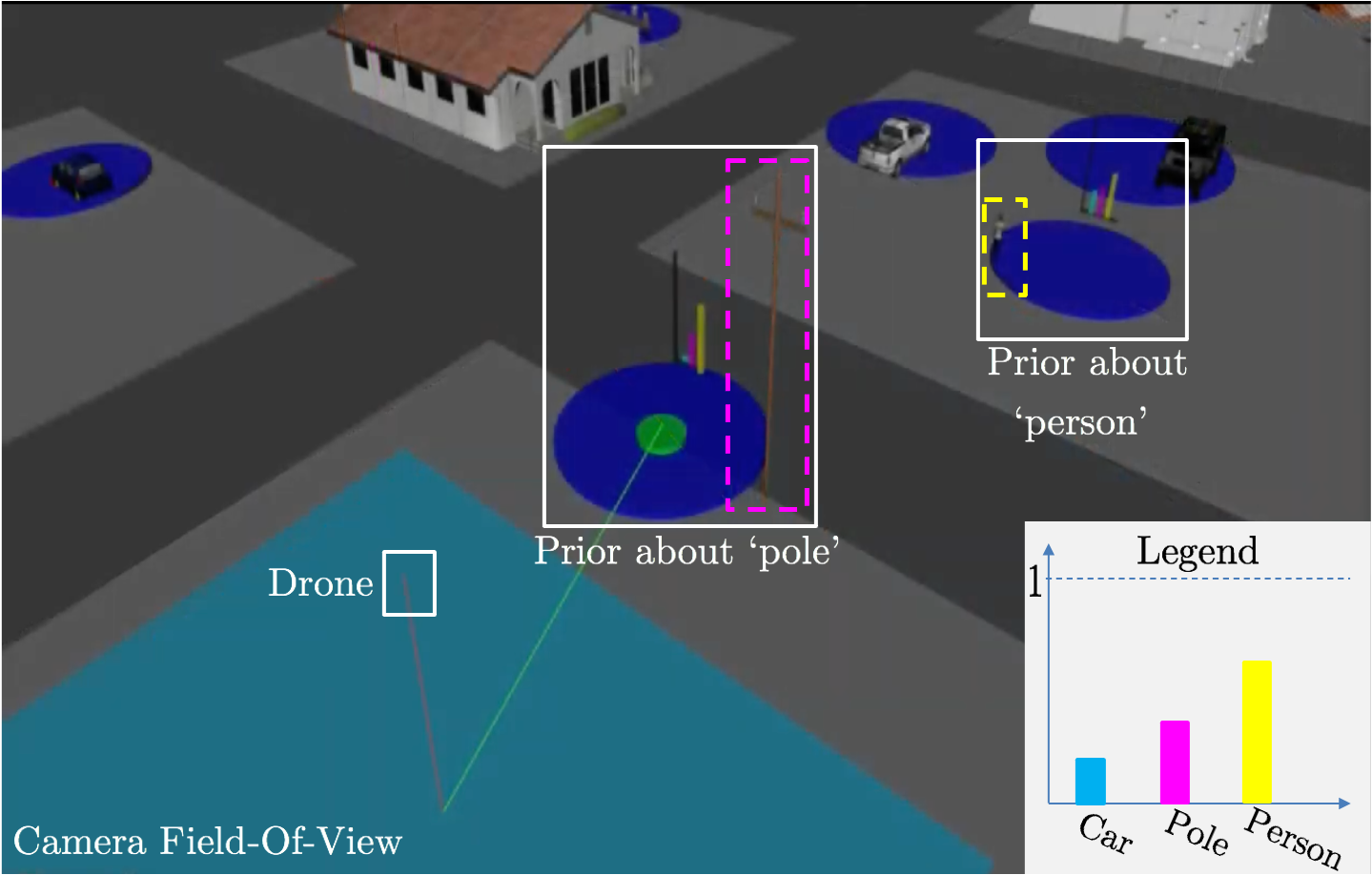}
\caption{A drone is responsible for delivering supplies to a landmark with class `person' while always avoiding hitting landmarks with class `pole' as per \eqref{eq:taskEx}. The prior information about two landmarks is also illustrated. Observe that the prior discrete distribution for the `pole' is inaccurate in the sense that, initially, the most likely class of this landmark is `person'. }
  \label{fig:EnvDrone1}
\end{figure}

\begin{figure}[t]
  \centering
%
  \subfigure[Heading towards the `pole']{
    \label{fig:2Pole}
  \includegraphics[width=0.47\linewidth]{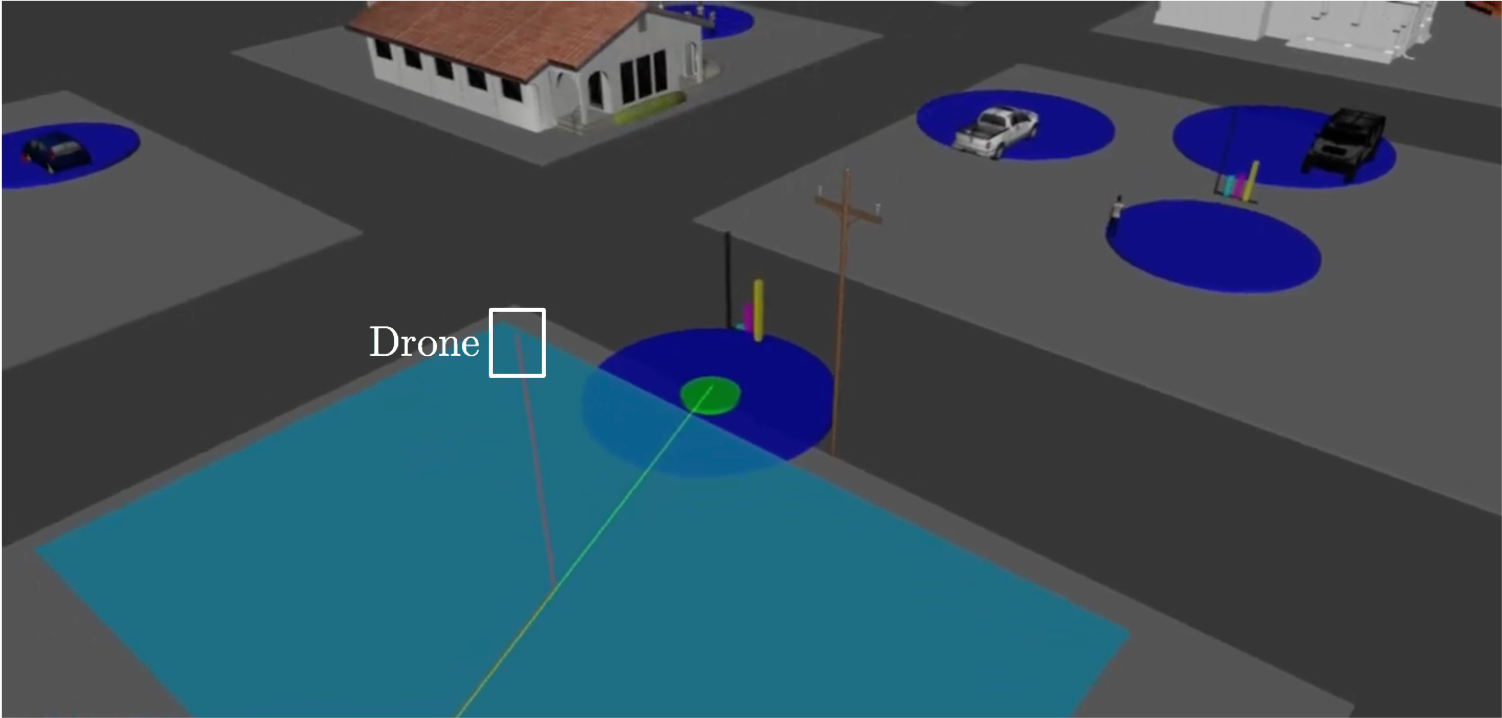}
  }
      \subfigure[Re-planning ]{
    \label{fig:t2drone1}
  \includegraphics[width=0.47\linewidth]{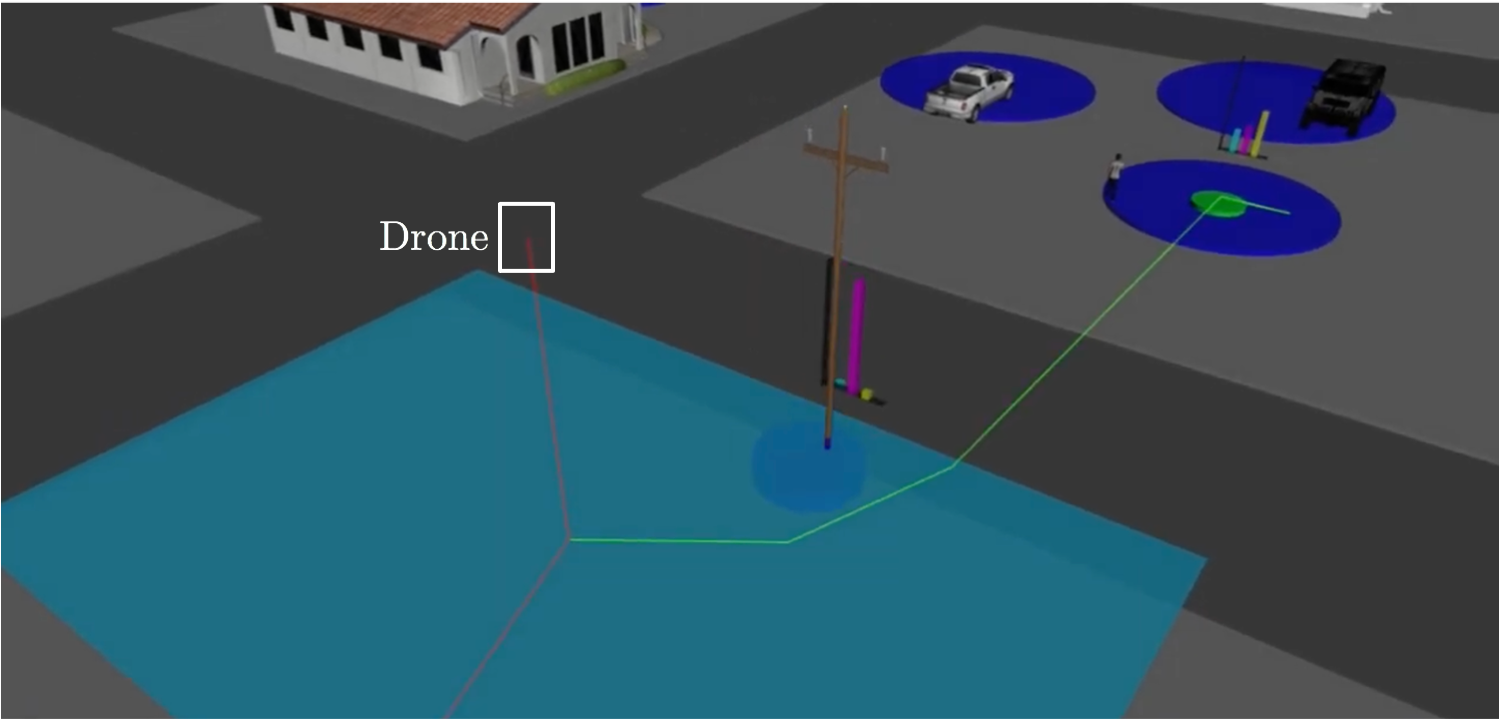}}
        \subfigure[Re-planning ]{
    \label{fig:t3drone1}
  \includegraphics[width=0.47\linewidth]{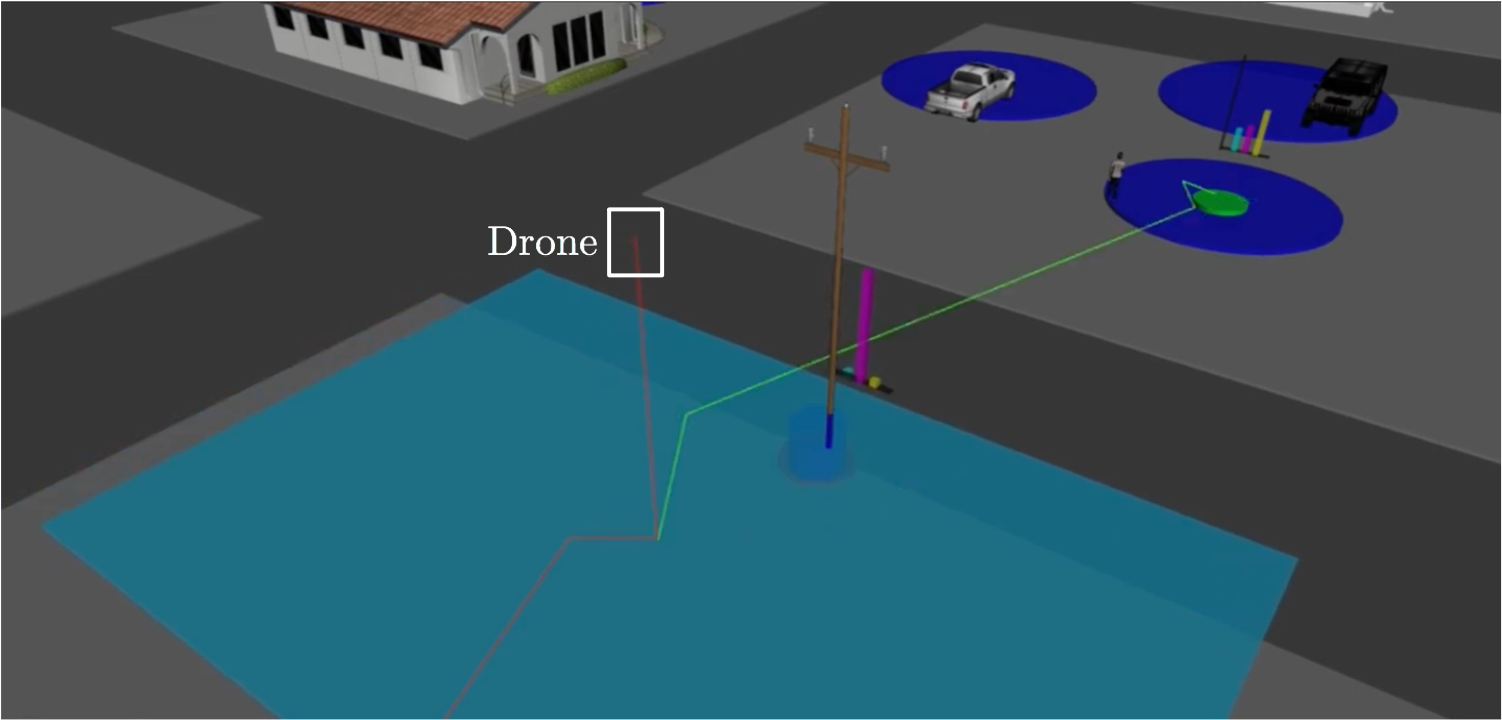}}
          \subfigure[Avoiding `pole' \& heading towards `person']{
    \label{fig:t3drone1replan}
  \includegraphics[width=0.47\linewidth]{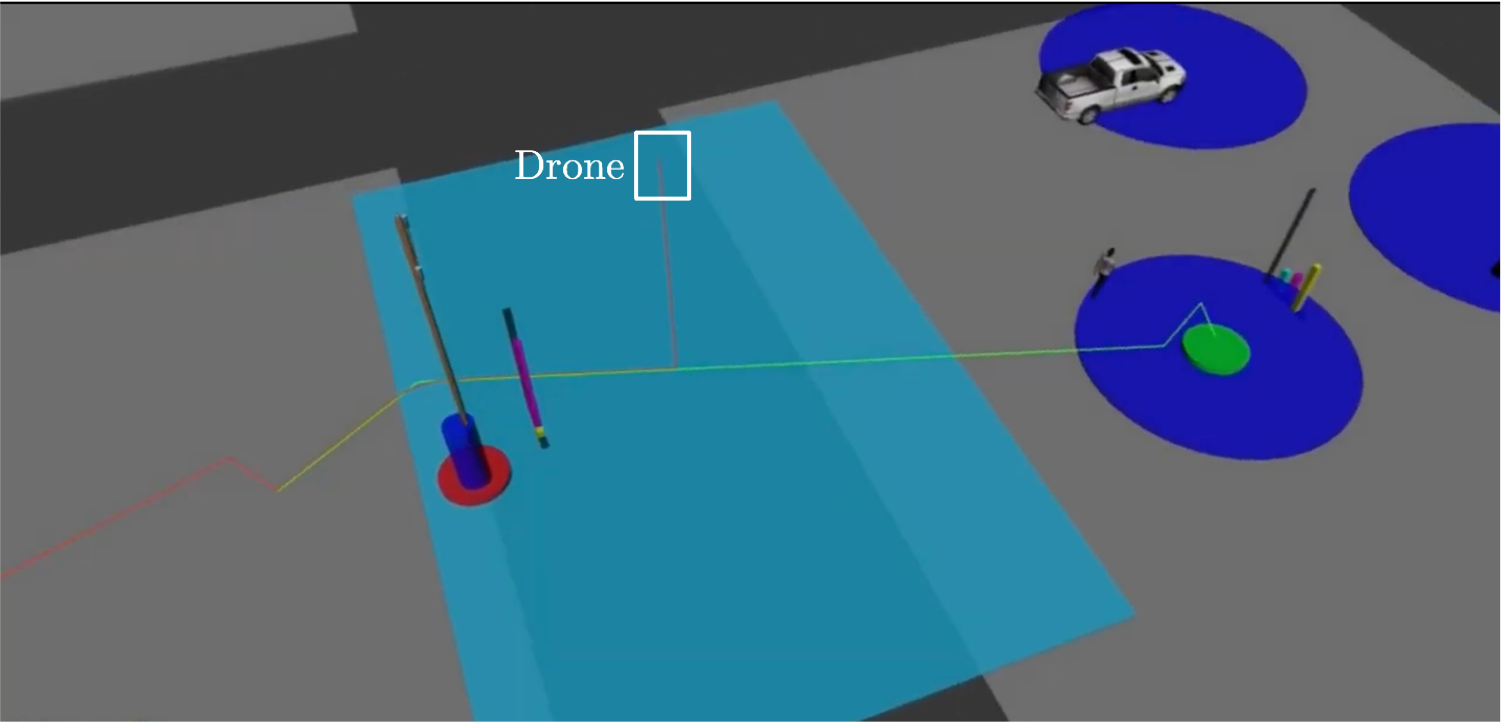}}
\subfigure[Heading towards the `person']{
    \label{fig:t3a}
  \includegraphics[width=0.47\linewidth]{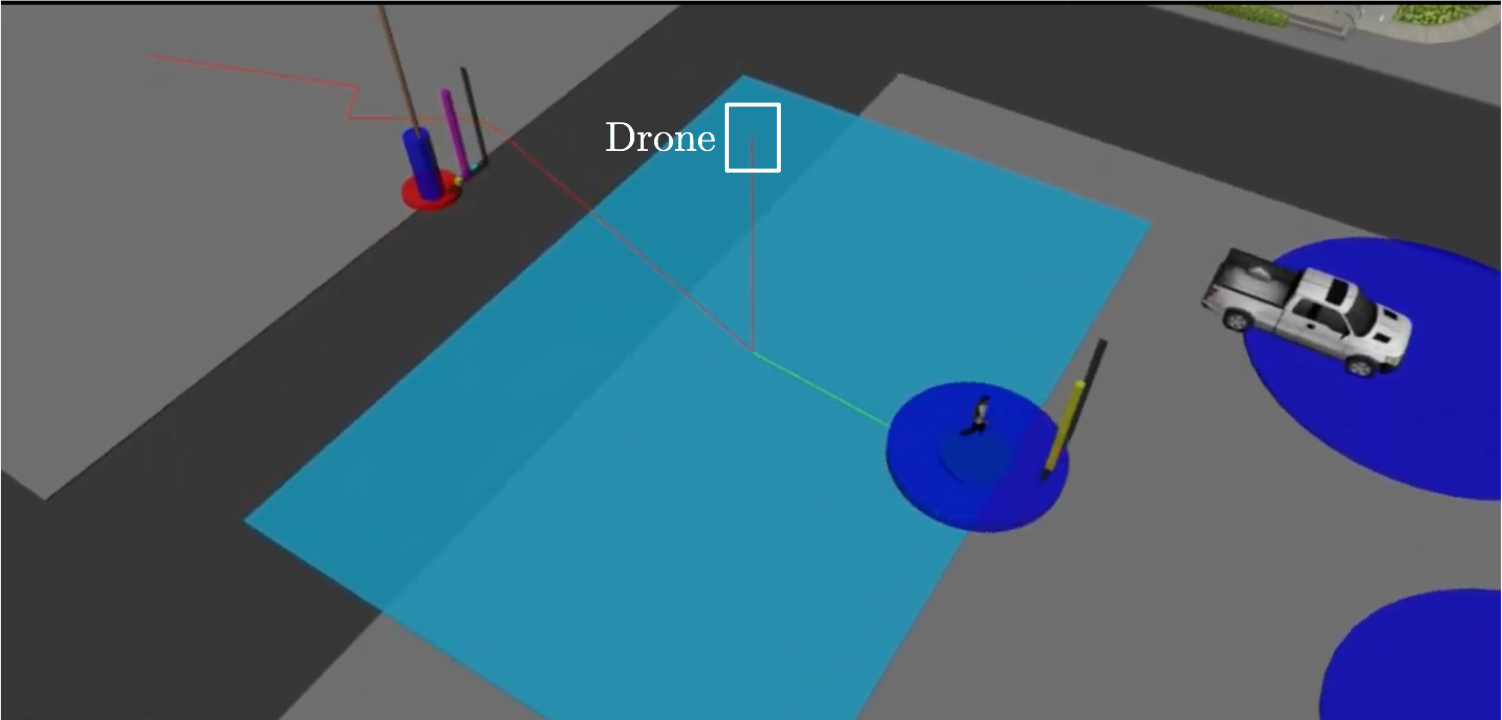}}
   \subfigure[Mission accomplished]{
    \label{fig:trajDrone1}
  \includegraphics[width=0.47\linewidth]{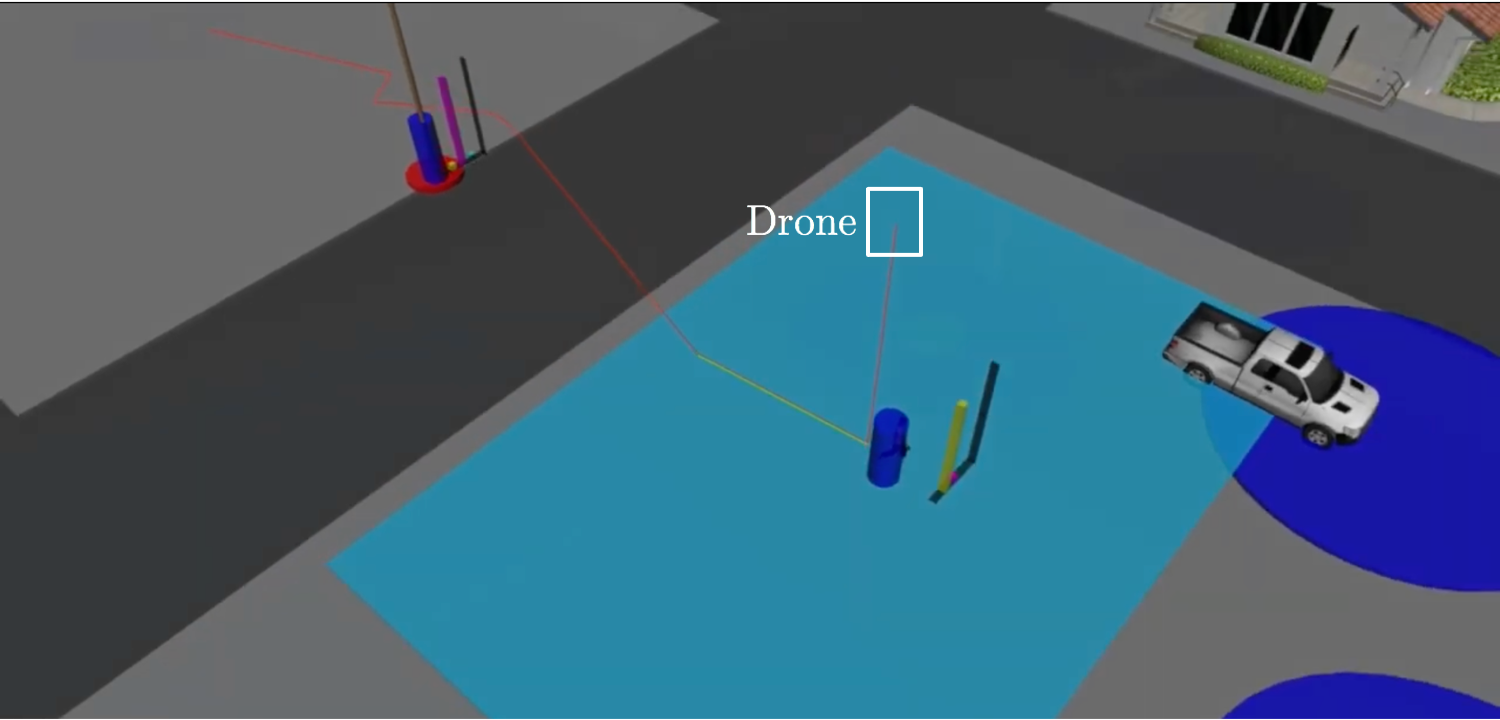}}  
 %
 \caption{Snapshots of a drone navigating an environment towards accomplishing the task in \eqref{eq:taskEx}. The green solid line shows the path the drone currently plans to follow. The drone initially heads towards the `pole' (Fig. \ref{fig:2Pole}) believing that this landmark is a `person' due to its inaccurate prior information (see also Fig. \ref{fig:EnvDrone1}). Once the `pole' is approached, the robot updates its belief about this landmark and it re-plans so that the `pole' is avoided while heading to another landmark that is (correctly) believed to be a `person' (Fig. \ref{fig:t2drone1}-\ref{fig:t3drone1}). Once the person is detected, the belief about the position and the class of this landmark is updated, the drone re-plans, and moves towards the person (Figs. \ref{fig:t3drone1replan}-\ref{fig:t3a}). Once the person is approached as per the imposed probabilistic requirements, the mission terminates (Fig. \ref{fig:trajDrone1}).}
  \label{fig:Drone1}
\end{figure}

\subsection{Effect of Prior Map Uncertainty on Replanning Frequency}\label{sec:replanSim}
In this section, we present experimental studies that involve AsTech Firefly Unmanned Aerial Vehicles (UAVs) that operate over a semantic city with dimensions $150\times 150$ m. The software simulation stack used is based on the Robot Operating System (ROS).

\subsubsection{Robot Dynamics} 
The AsTech Firefly UAV is governed by first order dynamics where the UAV state includes the position, velocity, orientation, and biases in the measured angular velocities and acceleration; more details can be found in \cite{Furrer2016}. 
In general, the more complex the robot dynamics is, the longer it takes for sampling-based methods to generate feasible paths as they have to explore a larger state- and control- space. 
To mitigate this issue, an approach that we investigate in this section, is to generate paths for simple robot dynamics that need to be followed by robots with more complex dynamics. Specifically, in what follows, we use Algorithm \ref{alg:RRT} to synthesize  paths considering the differential drive dynamics defined in \eqref{eq:nonlinRbt} that are simpler than the actual AsTech Firefly UAV dynamics. Given the waypoints, generated by Algorithm \ref{alg:RRT}, we compute minimum snap trajectories that smoothly transition through all waypoints every $T=2$ seconds \cite{mellinger2011minimum}.
%
The UAVs are controlled to follow the synthesized trajectories using the ROS package developed in \cite{Furrer2016}. 

\subsubsection{Perception System \eqref{eq:measModelR}-\eqref{eq:measModelC}}\label{sec:objRecDrone}
We assume that the drones are equipped with a downward facing sensor with rectangular field of view with dimensions $24\times16$ m that can take noisy positional measurements of landmarks falling inside its field-of-view (see e.g., Figure \ref{fig:EnvDrone1} and  \cite{freundlich2018distributed}), i.e., the measurement of landmark $\ell_i$ by robot $j$ is:
\begin{equation}\label{eq:measPosDrone}
    \bby_{j,i}=\bbx_j + \bbv,
\end{equation}
where $\bbv$ is Gaussian noise with covariance matrix with diagonal entries equal to $2$.

As for the object recognition method \eqref{eq:measModelC}, we assume that the robot is equipped with a neural network that is capable of detecting  classes of interest which are, for this case study, `car', `pole', and `person' and returning a discrete distribution over the available classes (see e.g. \cite{guo2017calibration}). Also, we model the confusion matrix of the object recognition method as follows:
\[ \left( \begin{array}{ccc}
0.8 & 0.23 & 0.06  \\
0.18  & 0.75 & 0.04 \\
0.02 & 0.02 & 0.9 \end{array} \right),\] 
where the first, second, and third row and column correspond to the classes `person', `car', and `pole', respectively. The entries in this confusion matrix capture the conditional probabilities $\mathbb{P}(\text{predicted class (row)}|\text{actual class (column)})$.
For instance, if the drone sees a person then with probability $0.8$, $0.18$, and $0.02$, it will generate a discrete distribution where the most likely class is `person', `car', and `pole', respectively. The confusion matrix can be learned offline based on training data. We note that any other object recognition/classification method can be used in place of the simulated one as this is used only to update the environmental belief.

\subsubsection{Mission Specifications \& Reacting to the Learned Map}

First, we consider a single drone that is responsible for accomplishing the task \eqref{eq:taskEx} described in Example \ref{ex:LTL} with parameters $r_1=r_2=0.2$ m, and $\delta_1=\delta_2=0.2$. The prior semantic semantic map that is available to the drone is illustrated in Figure \ref{fig:EnvDrone1}. \textcolor{black}{As per the prior Gaussian distribution, it holds that $||\hat{\bbx}_1(0)-\bbx_i||=4.9$m, $||\hat{\bbx}_2(0)-\bbx_i||= 5$m where $\ell_1$ and $\ell_2$ refer to the landmarks with true classes 'person' and 'pole', respectively. Also, the semantic prior is inaccurate as the most likely class for landmark $\ell_2$, i.e., the pole (that has to be avoided), is `person'  (who has to be approached). 
Also, the most likely class for landmark $\ell_1$, i.e., the person, is 'person'. 
}
Due to this (partially) wrong semantic prior, the  drone initially heads towards the `pole' (see Fig. \ref{fig:2Pole}). As the drone navigates the environment, it updates the semantic map and it accordingly re-plans so that eventually the person is approached and the pole is avoided as per the imposed probabilistic requirements (see Figs. \ref{fig:t2drone1}-\ref{fig:trajDrone1}).\footnote{In our implementation, when re-planning is triggered a new tree is constructed from scratch. A more efficient approach would be to exploit the previously constructed tree structure. For instance, given the new online learned map, the a-posteriori covariance matrices for all tree nodes that are reachable from the current one (where re-planning was triggered) can be computed along with the corresponding DFA states. Then re-planning may start using this sub-tree. This way previously sampled obstacle-free multi-robot states are not discarded.} In total re-planning was triggered $5$ times. Snapshots of the robot trajectory along with instances where re-planning was triggered are provided and discussed in Figure \ref{fig:Drone1}. Additional simulations for various initial semantic maps can be found in \cite{SimSemMaps} where it can be observed that the more inaccurate the initial semantic map is, the more often re-planning is triggered. \textcolor{black}{For instance, re-planning was never triggered when the prior means of the two landmarks (i.e., person and pole) were close enough to the actual positions, i.e., $||\hat{\bbx}_1(0)-\bbx_1||= 0.7$m and $||\hat{\bbx}_2(0)-\bbx_2||=1$m, and the most likely class, as per the prior distribution $d(0)$, for each landmark, is the true one. Similar observations were made for case studies involving a team of five drones; see Figure \ref{fig:multiDrone} and \cite{SimSemMaps}.}

\begin{figure}[t]
 \centering
  \includegraphics[width=1\linewidth]{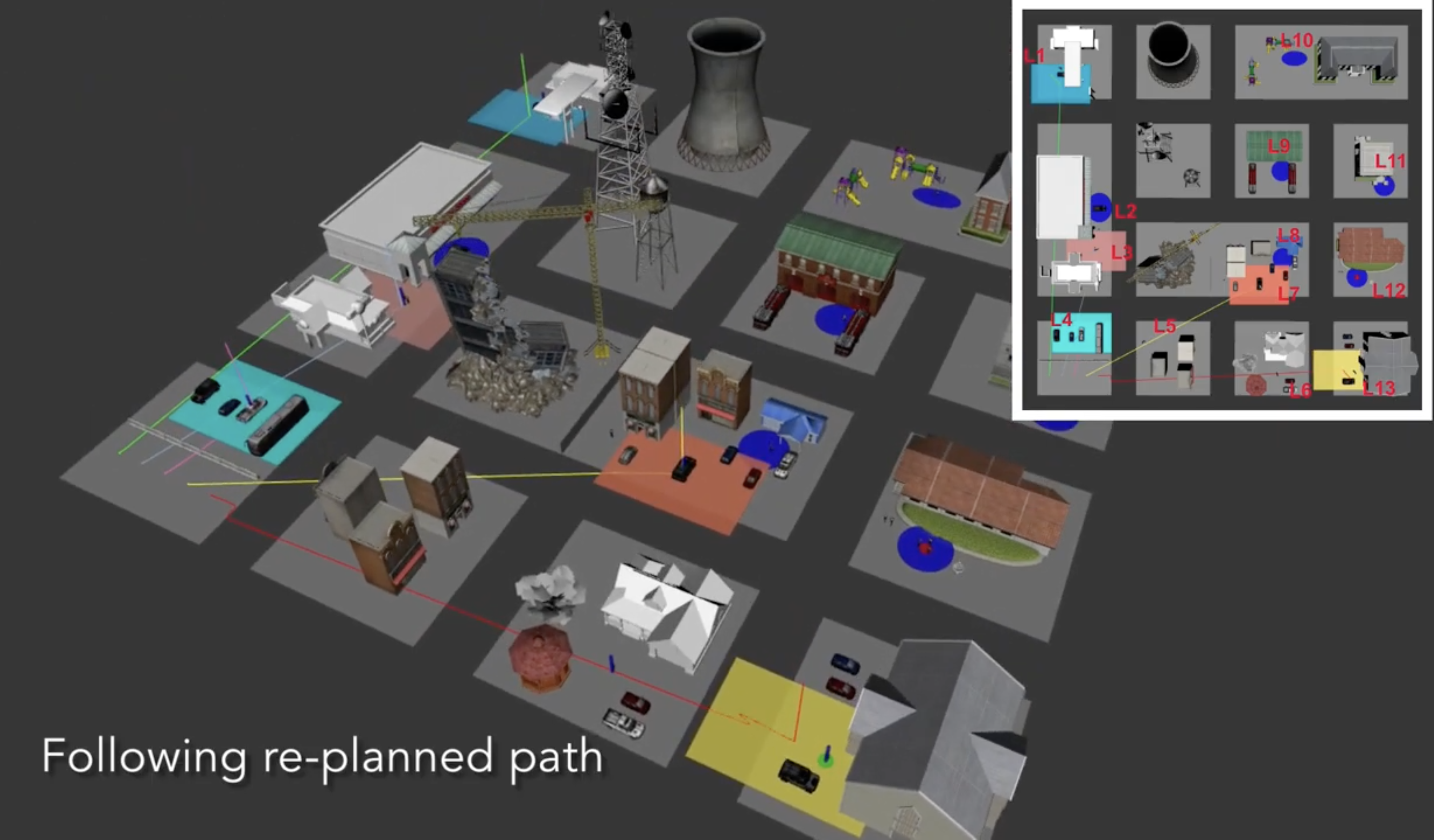}
\caption{A team of drones equipped with downward facing cameras (colored rectangles) is responsible for accomplishing a collaborative mission.}
  \label{fig:multiDrone}
\end{figure}

\textcolor{black}{\subsection{Effect of Map Density on Re-planning Frequency}\label{sec:ObsDensity}
In this section, we build upon the case study considered in Section \ref{sec:replanSim} and we examine how the density of uncertain obstacles/poles affect the re-planning frequency and runtimes. Specifically, we consider a single-drone with the dynamics, perceptual capabilities, and the mission specification discussed in Section \ref{sec:replanSim}. Recall that the assigned mission \eqref{eq:taskEx} requires the drone to eventually reach a person while always avoiding all poles with a user-specified probability; see also Figure \ref{fig:FourObstacles}. In Figure \ref{fig:replanRuntime}, we report the average (re-)planning runtimes as the number of the poles increases. Specifically, to design the initial or revised paths, we run Algorithm \ref{alg:RRT} five times and we report the average runtime. Observe in this figure, that as the number of uncertain poles increases, the re-planning frequency increases. Additionally, notice that as the mission progresses (i.e., as the re-planning counter increases), the runtimes tend to decrease, as the robot gets 'closer' to accomplishing the assigned task.}

\begin{figure}[t]
 \centering
   \subfigure[Initial Map]{
    \label{fig:InitMap4}
  \includegraphics[width=0.45\linewidth]{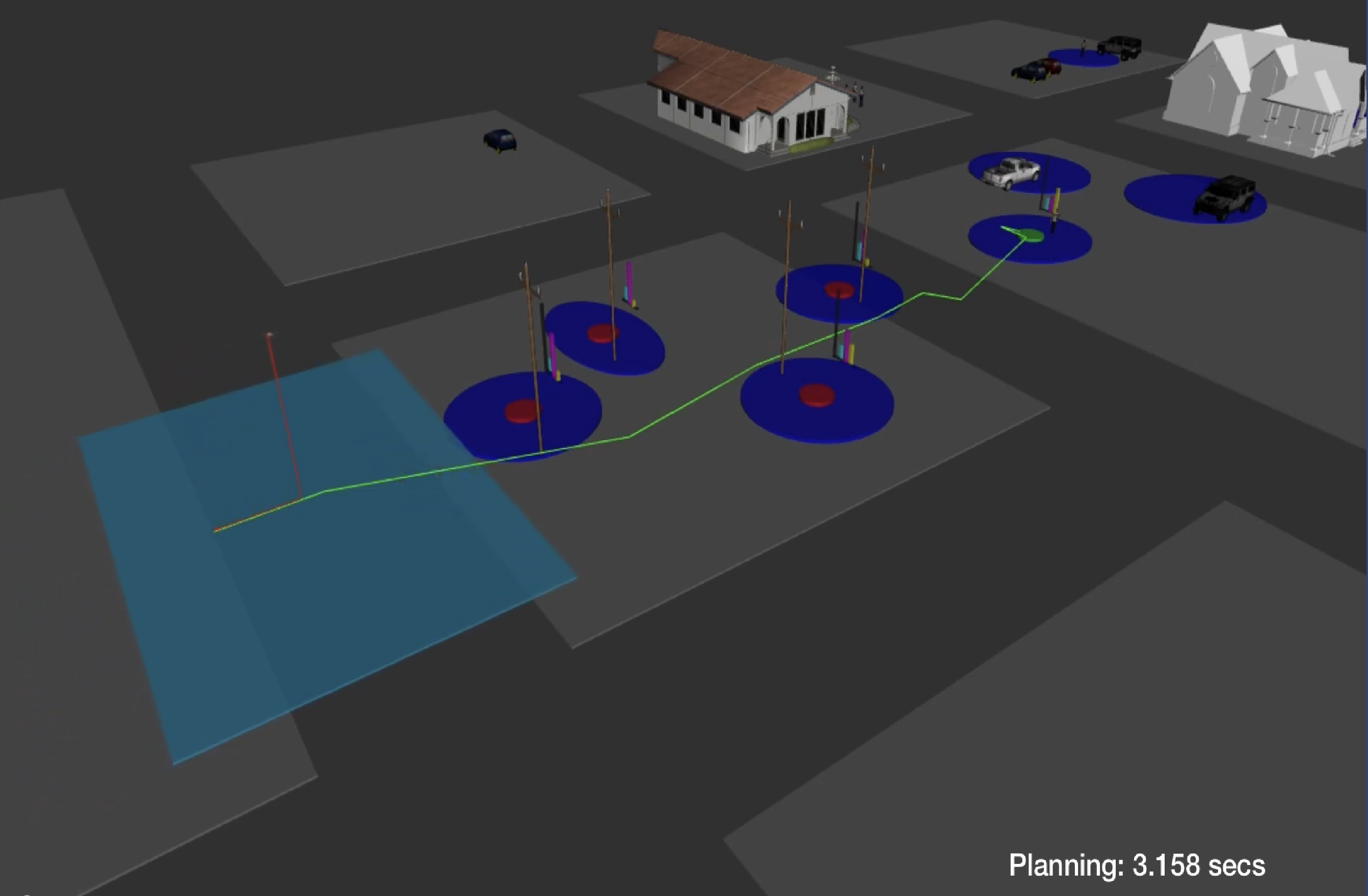}
  }
  \subfigure[Approaching the person ]{
    \label{fig:FinalMap44}
  \includegraphics[width=0.45\linewidth]{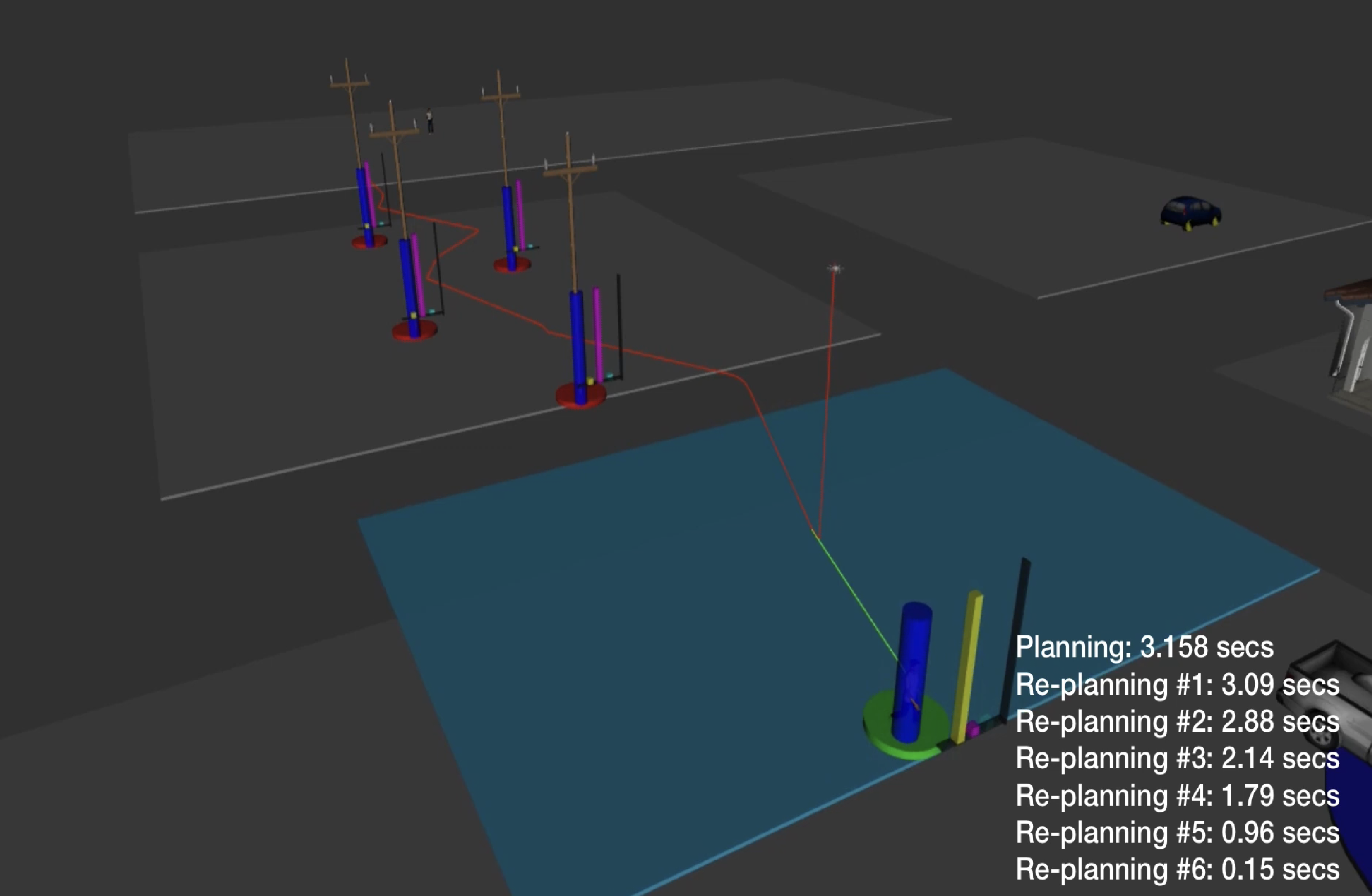}
  }
\caption{\textcolor{black}{A drone tasked with reaching a person while avoiding four poles.}}
  \label{fig:FourObstacles}
\end{figure}

\begin{figure}[t]
 \centering
  \includegraphics[width=0.7\linewidth]{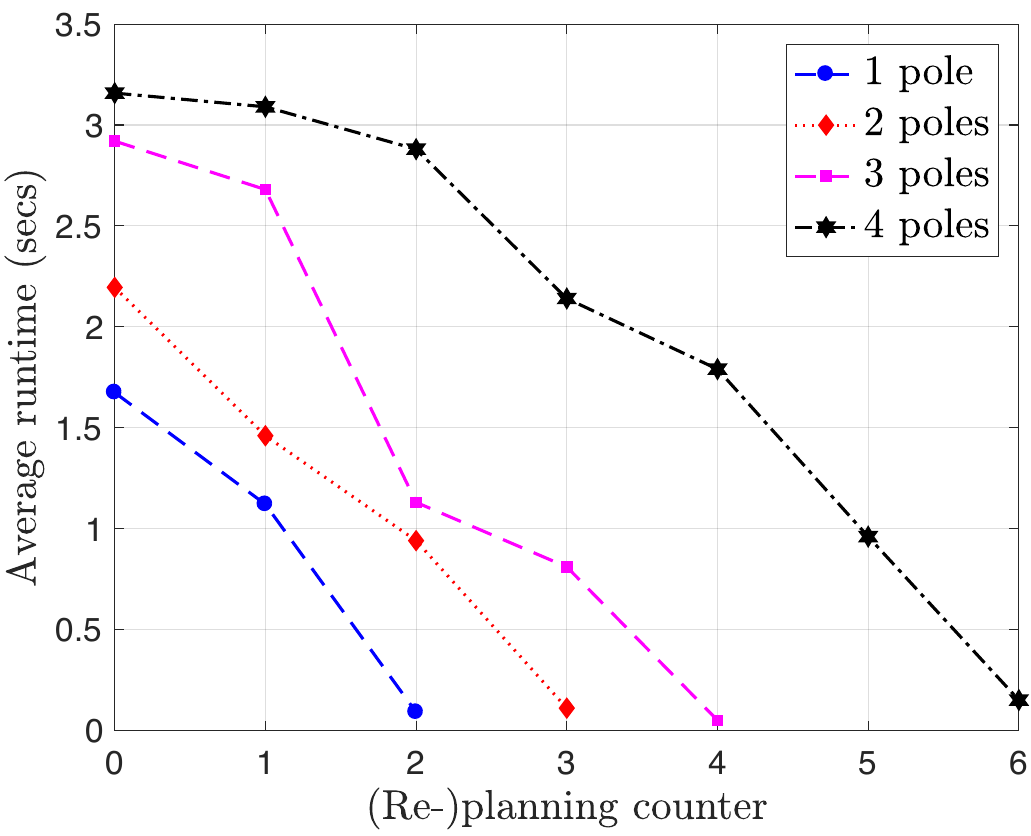}
\caption{\textcolor{black}{Graphical depiction of replanning frequency and average (re-)planning runtime. The runtime when the re-planning counter is zero refers to the runtime to design the initial paths. Each curve corresponds to a different number of poles. The markers along the curves correspond to re-planning events. The average total planning runtime for $1, 2, 3,$ and $4$ poles is $0.961, 1.176, 1.518,$ and $2.024$ secs. Note that these average runtimes are significantly smaller (between $0.1$ and $0.3$ secs) when Algorithm \ref{alg:RRT} is not executed in the Gazebo simulation environment.}}
  \label{fig:replanRuntime}
\end{figure}

\textcolor{black}{\subsection{Benefit of Predicting Offline the Metric Uncertainty}\label{sec:PredUnc}
In this section, we compare the proposed method against \cite{kantaros2019optimal}. Recall that Algorithm \ref{alg:RRT} updates the covariance matrices of the semantic map as per \eqref{constr7c} as opposed to \cite{kantaros2019optimal} where the map remains fixed during the synthesis phase. Particularly, we consider the case study discussed in Section \ref{sec:replanSim} evaluated on two semantic maps shown in Figures \ref{fig:InfoMap}-\ref{fig:UnInfoMap}. The only difference between these two maps is that the prior Gaussian distributions in Figure \ref{fig:InfoMap} are more informative (or less uncertain) than the ones in Figure \ref{fig:UnInfoMap}. When the informative map in Figure \ref{fig:InfoMap} is considered both \cite{kantaros2019optimal} and Alg. \ref{alg:RRT} generate feasible initial paths after $1.152$ and $1.548$ seconds, with costs $47$m and $46$m and terminal horizons for both equal to $13$, respectively, while re-planning was triggered three times for both algorithms. When the less informative map in Figure \ref{fig:UnInfoMap} is considered, Alg. \ref{alg:RRT} found a feasible path after $4.824$ secs. To the contrary, \cite{kantaros2019optimal} failed to design feasible initial paths as the imposed probabilistic requirements cannot be satisfied under the considered prior metric uncertainty. In other words, due to updating offline the covariance matrices, Alg. \ref{alg:RRT} can find feasible paths in prior maps with high metric uncertainty.}

\begin{figure}[t]
 \centering
    \subfigure[Informative Prior Map]{
    \label{fig:InfoMap}
  \includegraphics[width=0.46\linewidth]{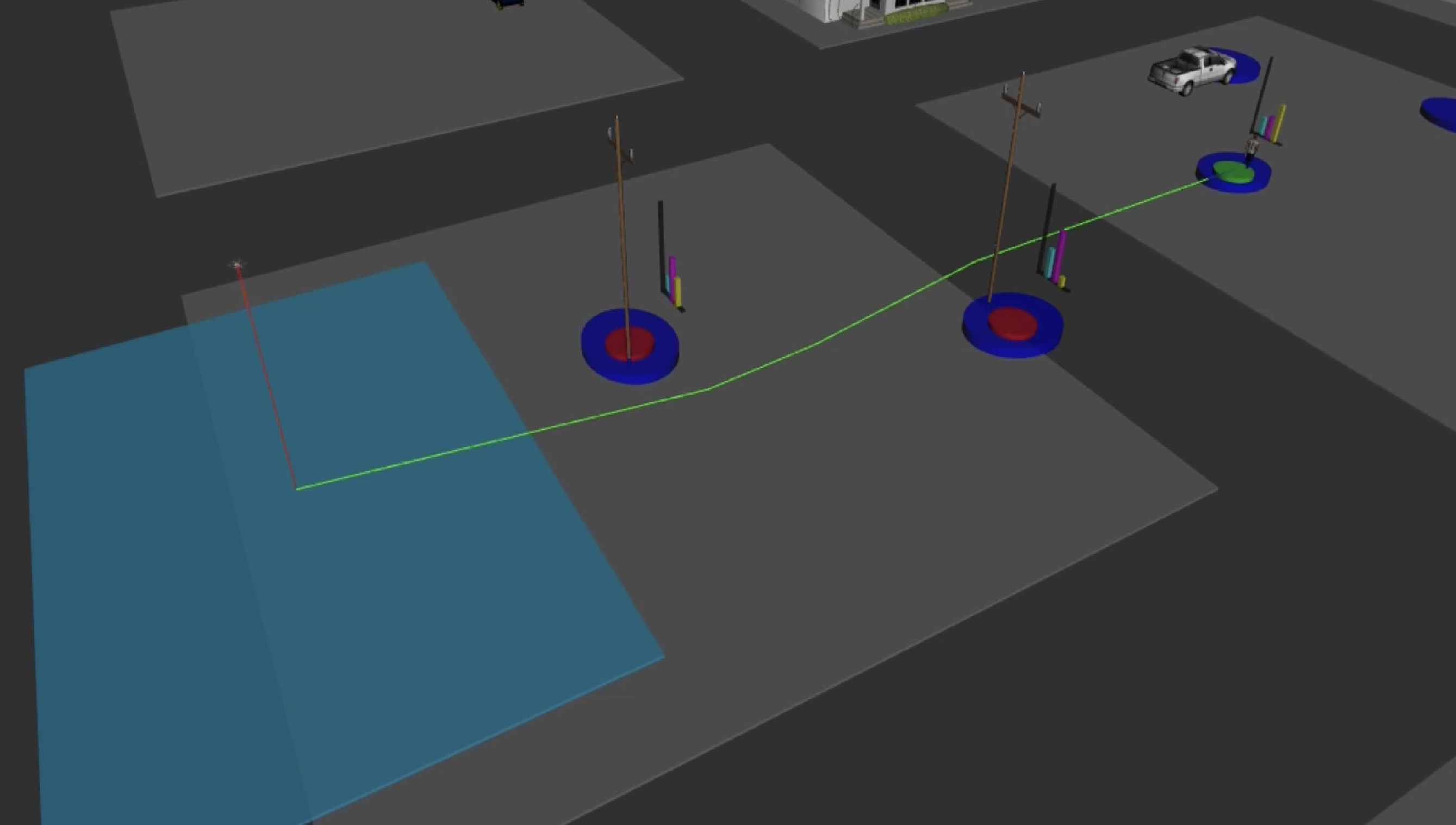}
  }
    \subfigure[Uninformative Prior Map ]{
    \label{fig:UnInfoMap}
  \includegraphics[width=0.46\linewidth]{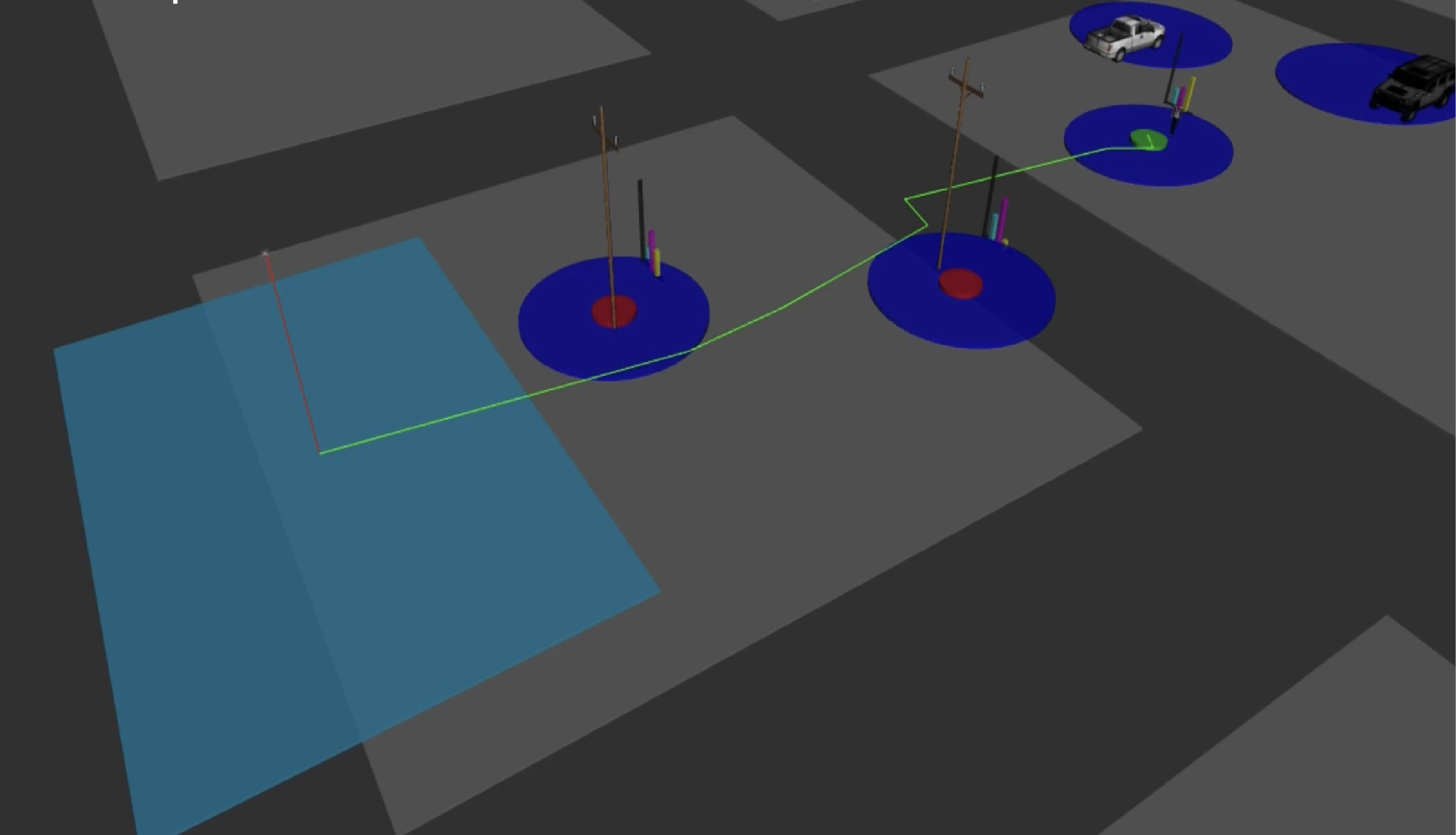}
  }
\caption{\textcolor{black}{Semantic maps of an environment with two poles and a person. Figure \ref{fig:InfoMap} shows a semantic map with an informative metric prior distribution while Figure \ref{fig:UnInfoMap} shows a less informative map; observe the flatter Gaussian distributions.}}
  \label{fig:OfflineUpd}
\end{figure}

\subsection{Safe Planning in Semantically Unexplored Environments }\label{sec:noMap}
In this section, we evaluate Algorithm \ref{alg:RRT} in environments with known geometric structure but with no prior semantic map, i.e., when Assumption \ref{as:prior} does not hold. Specifically, in this case, the number $M$ of landmarks is unknown while a prior uncertain semantic map $\ccalM(0)$ is not available. In this case, Algorithm \ref{alg:RRT} is applied as discussed in Section \ref{sec:extensions}.

In what follows, we consider a single ground robot with the dynamics defined in \eqref{eq:nonlinRbt} equipped with (i) a camera capable of taking noisy positional measurements (as in \eqref{eq:measPosDrone}) of all landmarks that are within range of $1$ m where the covariance matrix of the Guassian measurement noise has diagonal entries equal to $0.3$ and (ii) an object recognition method as defined in Section \ref{sec:objRecDrone}. The robot operates in an $10\times10$ m environment with the semantic and geometric structure presented in Figure \ref{fig:environmentUnexpl}. The robot is responsible for eventually finding a landmark with class `resource center' to pick up supplies, and then eventually find an `injured person' where the collected supplies will be delivered, while always avoiding known obstacles. Formally, this sequencing mission can be expressed by the following LTL formula
\begin{align}\label{eq:task1}
\phi = &\Diamond \{\pi_p(\bbp(t),\ccalM(t),\{1,0.2,0.1, \text{'resource center'}\})\wedge \nonumber\\ &\Diamond [\pi_p(\bbp(t),\ccalM(t),\{1,0.2,0.1, \text{'injured person'}\})]\},
\end{align}
where $\pi_p(\bbp(t),\ccalM(t),\{1,0.2,0.1, \text{'resource center'}\})$ and $\pi_p(\bbp(t),\ccalM(t),\{1,0.2,0.1, \text{'injured person'}\})$ are atomic predicates defined as per \eqref{apMS}. For instance, the latter requires the robot, with probability greater than $0.9$ to be within distance of $0.2$m from a landmark with class `injured person'. In this case study, we assume that landmarks $\ell_9$ and $\ell_{14}$ are the landmarks with classes `injured person' and 'resource center', respectively; see also Fig. \ref{fig:environmentUnexpl}.

We emphasize that initially the robot does not have any prior information about where the landmarks of interest are. As a result, the robot initially adopts an exploration mode aiming to detect landmarks of interest; see Fig. \ref{fig:t0}. Here, since the mission involves two distinct landmarks, we require the robot to switch from exploration to exploitation (i.e., application of Algorithm \ref{alg:RRT} over the currently learned semantic map) for the first time when the first two landmarks are detected. This happens at the time instant $t=225$ (see Fig. \ref{fig:t225}). Algorithm \ref{alg:RRT} fails to design a path satisfying the assigned task based on the currently learned map within $1000$ iterations. Thus, the robot keeps being in exploration mode and switches to exploitation mode every time a new landmark is detected. At $t=335$, both landmark $\ell_9$ and $\ell_5$ are detected and then Algorithm \ref{alg:RRT} is triggered generating feasible paths over the currently learned semantic map. Then the robot switches to an exploitation mode following the synthesized paths; see Fig. \ref{fig:ExploitationMode}. Observe in Figure \ref{fig:ExploitationMode} that the robot does not explore the whole environment but, instead, it only partially learns/explores the environment to accomplish the assigned mission. This simulation required in total $51.2$ \text{seconds}.

\begin{figure}[t]
 \centering
  \includegraphics[width=0.60\linewidth]{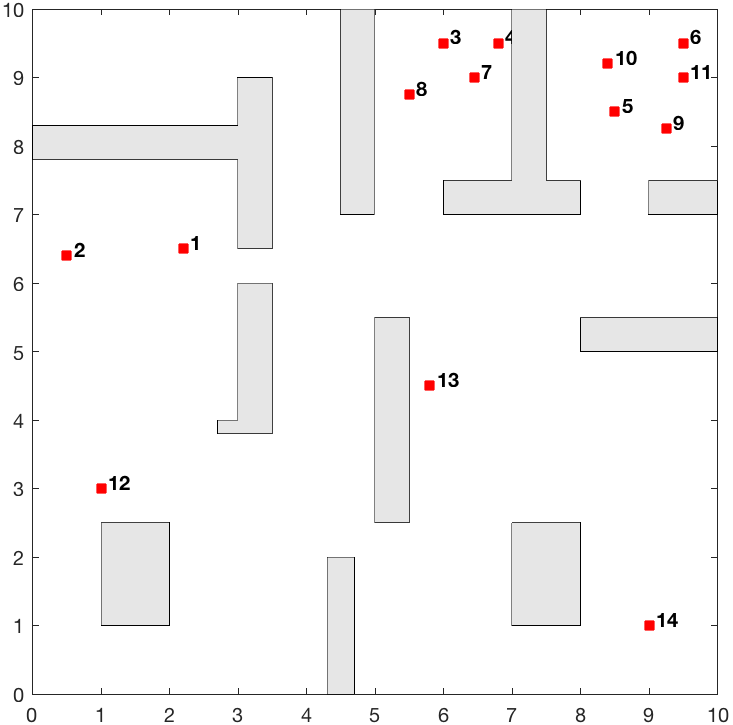}
\caption{Graphical depiction of the environment considered in Section \ref{sec:noMap}. The gray polygons represent known obstacles while the red squares denote the true positions of the landmarks. The numbers next to the landmarks denote their respective indices. Each index correspond to a different label, i.e., there are $|\ccalC|=14$ available classes. For instance, $14$ (bottom right corner) corresponds to 'resource center' while $9$ (top right corner) corresponds to 'injured person'. The robot does not know either the positions or the classes of the landmarks.}
  \label{fig:environmentUnexpl}
\end{figure}

\begin{figure}[t]
  \centering
%
\subfigure[$t=0$]{
    \label{fig:t0}
  \includegraphics[width=0.48\linewidth]{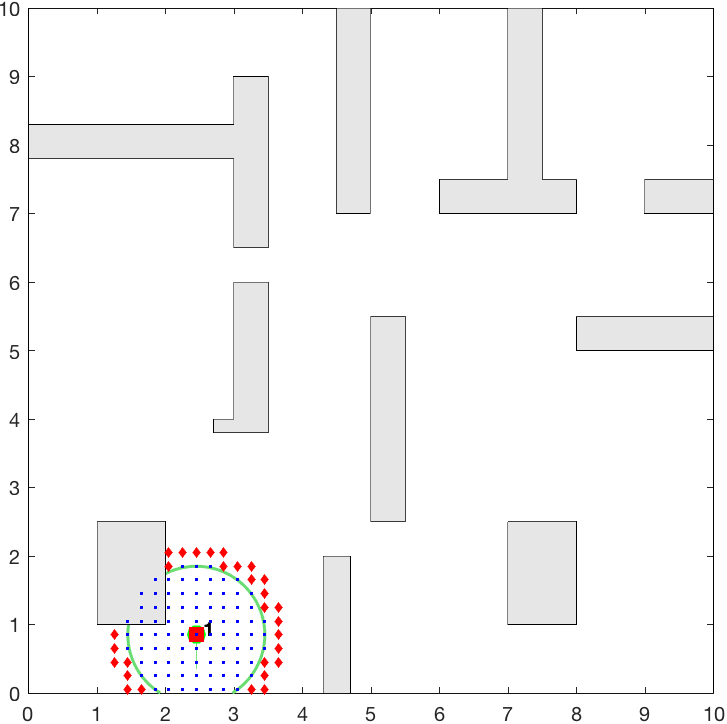}}
  \subfigure[$t=40$]{
    \label{fig:t40}
  \includegraphics[width=0.48\linewidth]{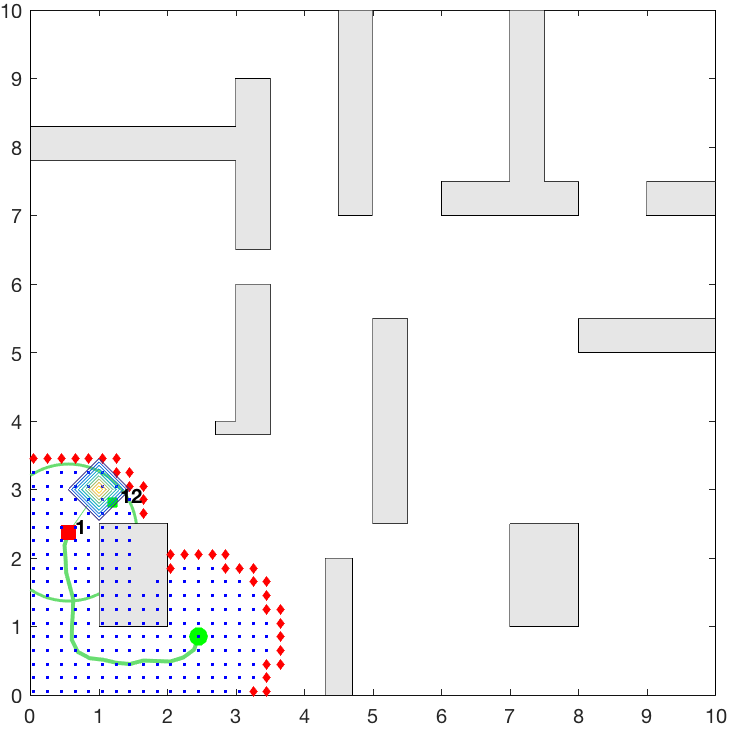}}
          \subfigure[$t=225$]{
    \label{fig:t225}
  \includegraphics[width=0.48\linewidth]{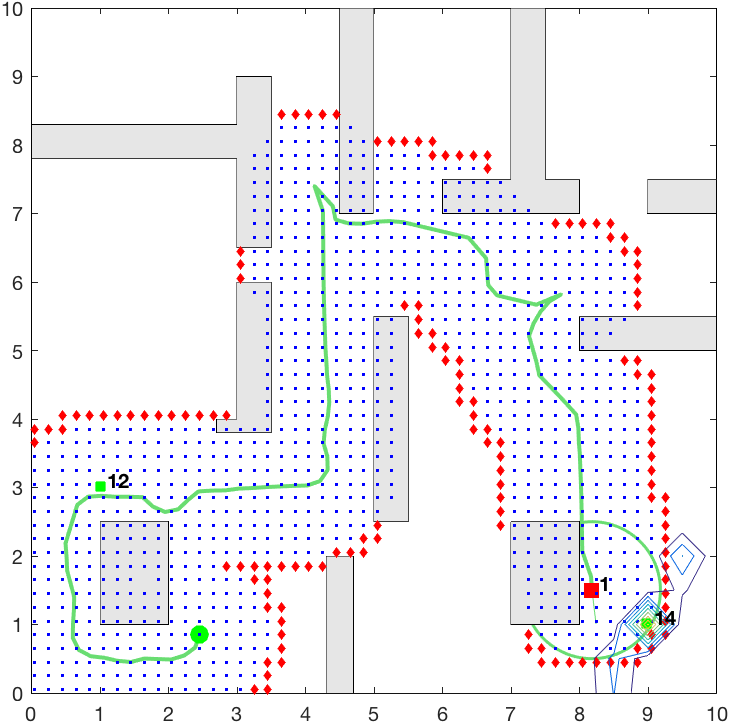}}
   \subfigure[$t=335$]{
    \label{fig:t335}
  \includegraphics[width=0.48\linewidth]{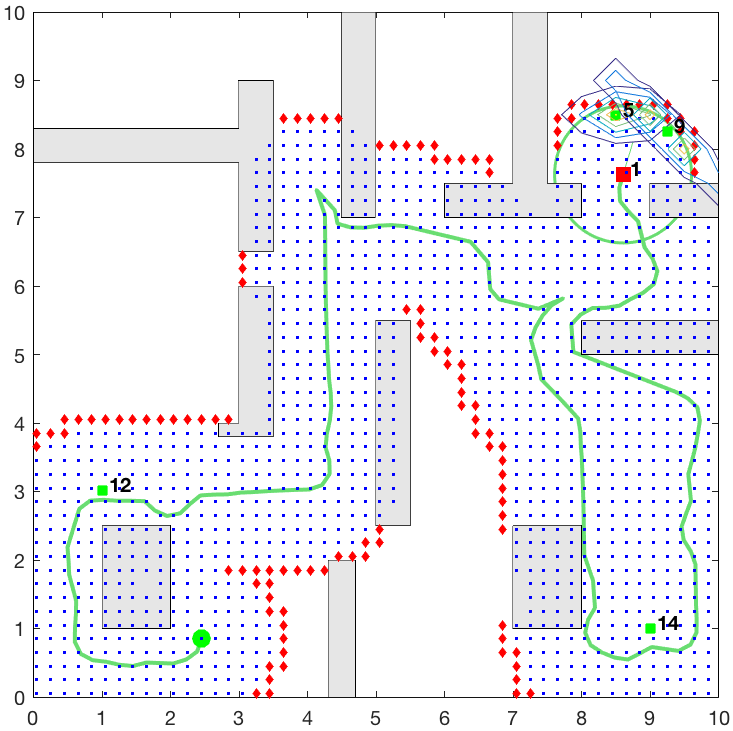}}  
 %
 \caption{Exploration Mode: Figures \ref{fig:t0}-\ref{fig:t335} illustrate the robot path towards exploring the environment. The green disk, red square, and the green solid line represent the initial robot location, the current robot location, and the followed trajectory, respectively. The green circle centered at the current robot position denotes the sensing range. The green squares denote the actual position of the detected landmarks. Gaussian distributions for landmarks $\ell_i$ with $\det\Sigma_i<10^{-5}$ are not shown. The numbers next to the landmarks show the most likely class. The blue dots represent the cells of the grid map that have been explored while the red diamonds show the `fake' exploration landmarks. At time instants $t=40$,  $t=225$, and $t=335$, the landmarks $\ell_{12}$, $\ell_{14}$ ('resource center'), and $\ell_5$ and $\ell_9$ ('injured person') are detected, respectively. Mission-based planning (Algorithm \ref{alg:RRT}) was triggered at the time instants $t=225$, and $t=335$. Feasible paths were found only at $t=335$ illustrated in Figure \ref{fig:ExploitationMode}.}
  \label{fig:ExplorationMode}
\end{figure}

\begin{figure}[t]
  \centering
%
\subfigure[$t=350$]{
    \label{fig:t350}
  \includegraphics[width=0.48\linewidth]{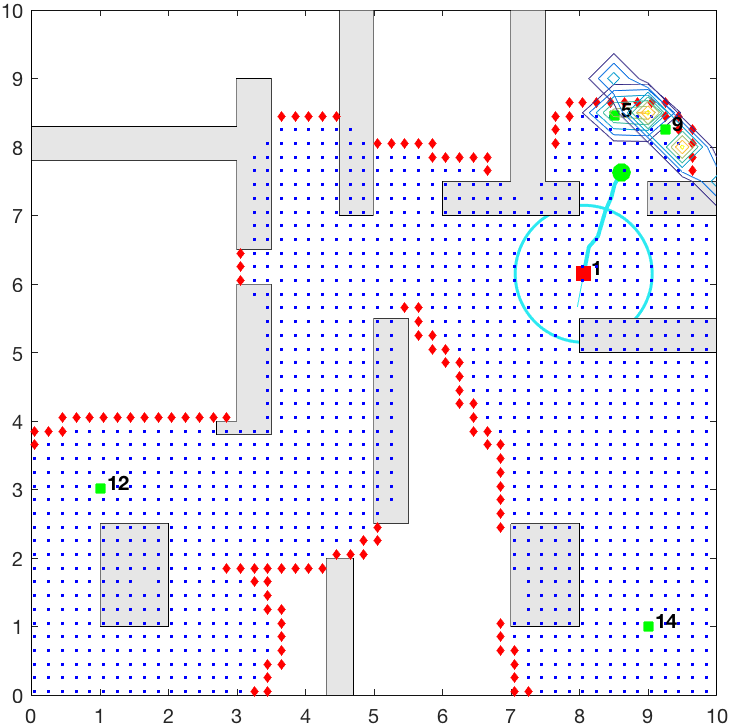}}
  \subfigure[$t=380$]{
    \label{fig:t380}
  \includegraphics[width=0.48\linewidth]{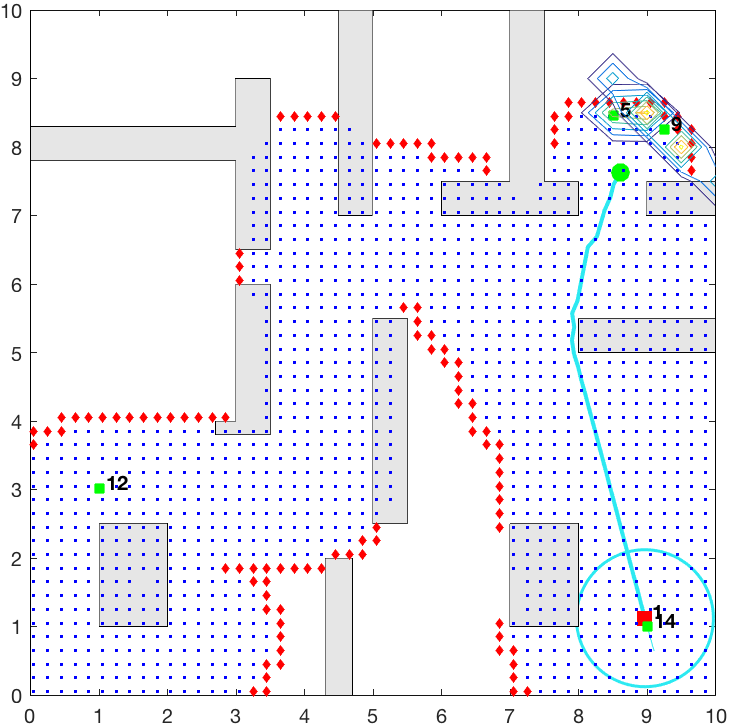}}
      \subfigure[$t=410$]{
    \label{fig:t410}
  \includegraphics[width=0.48\linewidth]{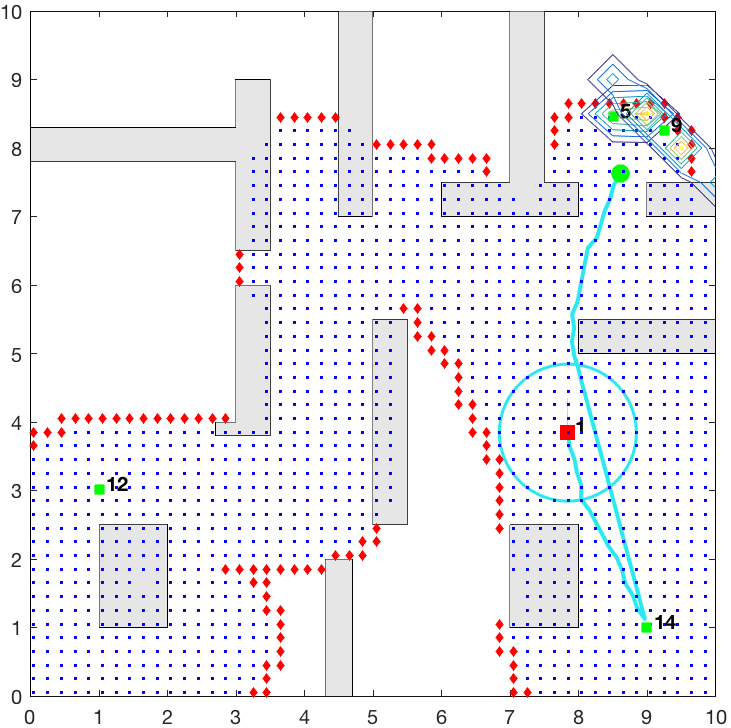}}
\subfigure[$t=435$]{
    \label{fig:t435}
  \includegraphics[width=0.48\linewidth]{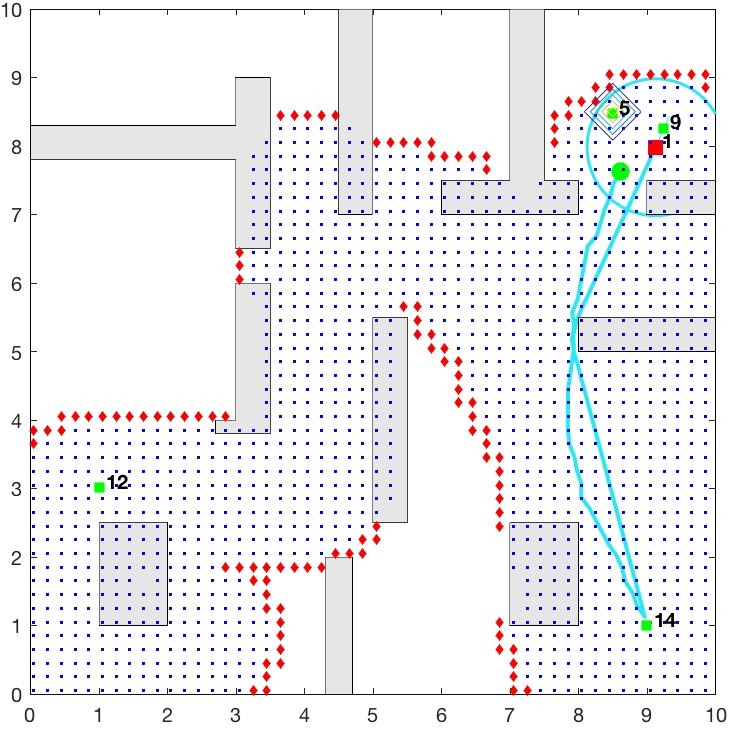}}
 \caption{Exploitation Mode: Figures \ref{fig:t350}-\ref{fig:t435} illustrate the robot path (cyan solid line) to accomplish the task \eqref{eq:task1} designed at time $t=335$ (green disk) when Algorithm \ref{alg:RRT} successfully designed feasible paths. 
 From $t=335$ until $t=380$, the robot moves towards landmark $\ell_{14}$ ('resource center'). Once this landmark is visited as per the imposed probabilistic requirement, the robot heads towards landmark $\ell_9$ ('injured person') which is visited at $t=435$ and, then, the mission terminates.} 
  \label{fig:ExploitationMode}
\end{figure}

\section{Conclusion} \label{sec:Concl}
In this paper, we addressed a multi-robot mission planning problem in \textcolor{black}{environments with partially unknown semantics}. In particular, the semantic environment was modeled as static labeled landmarks with uncertain positions and classes. We considered robots equipped with noisy perception systems that are tasked with accomplishing collaborative tasks with probabilistic requirements captured by co-safe LTL. We proposed a sampling-based algorithm to design  open-loop sensor-based control policies that can accomplish the assigned mission. The synthesized policies are updated online to adapt to the semantic map that is continuously learned using semantic SLAM methods. Theoretical and experimental analysis were provided to support the proposed method. 
Our future work will focus on designing (i) distributionally robust motion planning algorithms to account for un-modeled sensor noise; (ii) fast reactive planning frameworks that can quickly adapt to metric/semantic uncertainty and the potentially dynamic geometric structure of the environment.

\appendices
\section{Proof of Completeness, Optimality, \& Complexity}\label{sec:prop}

In what follows, we denote by $\mathcal{G}^n=\{\mathcal{V}^{n}, \mathcal{E}^{n}, \ccalJ_{\ccalG}\}$ the tree that has been built by Algorithm \ref{alg:RRT} by the $n$-th iteration. The same notation also extends to $f_{\ccalV}$, $f_{\ccalU}$, and $\bbu_{\text{new}}$. 
To prove Theorems \ref{thm:probCompl} and \ref{thm:asOpt}, it suffices to show that Algorithm \ref{alg:RRT} can find all possible paths $\bbq_{0:H}$, for any $H>0$, whether they are feasible or not, with probability $1$ as $n\to\infty$. Showing this result means that if there exists a feasible (including the optimal) path $\bbq_{0:H}$, then Algorithm \ref{alg:RRT} will find it with probability $1$ as $n\to\infty$. The key idea is to prove that once any node is added to the tree, then eventually this node will be extended towards all possible feasible directions as per the set of control inputs $\ccalU$. A key assumption to show this is that of finite control space $\ccalU$. 
To show this result, we first need to state the following results. 


\begin{lem}[Sampling $\ccalV_{k_{\text{rand}}}^n$]
Consider any subset $\ccalV_{k}^n$ and any fixed iteration index $n$ and any fixed $k\in\set{1,\dots,K_n}$. Then, there exists an infinite number of subsequent iterations $n+w$,  where $w\in\mathcal{W}\subseteq\mathbb{N}$ and $\ccalW$ is a subsequence of $\mathbb{N}$, at which the subset $\ccalV_{k}^n$ is selected to be the set $\ccalV_{k_{\text{rand}}}^{n+w}$. 
\label{lem:qrand}
\end{lem}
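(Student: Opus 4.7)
\textbf{Plan of proof for Lemma \ref{lem:qrand}.}
The plan is to recognize this as a direct application of the second Borel--Cantelli lemma to the sequence of sampling events at iterations $n, n+1, n+2, \dots$, exploiting the two parts of Assumption \ref{frand}.

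First I would establish a persistence property for the indexing of the subsets $\ccalV_k$. By the construction in Section \ref{sec:constructTree}, new subsets are only ever appended at the end (if a pair (DFA state, robot state) not yet represented is produced, the counter $K_n$ is incremented by one and the new bucket is assigned index $K_n$); existing subsets are only augmented, never deleted or re-indexed. Consequently, for every $w\ge 0$ we have $K_{n+w}\ge K_n$ and $\ccalV_k^{n+w}\supseteq \ccalV_k^n$, so the index $k\in\{1,\dots,K_n\}$ remains a valid index at every subsequent iteration $n+w$. In particular, $f_{\ccalV}^{n+w}(k\mid \ccalV^{n+w})$ is well defined for all $w\ge 0$.

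Next I would define, for each $w\ge 0$, the event
\[
A_{n+w} \;=\; \bigl\{\, k_{\text{rand}}^{\,n+w} = k \,\bigr\},
\]
i.e.\ the event that at iteration $n+w$ the index drawn from $f_{\ccalV}^{\,n+w}$ equals our fixed $k$. By Assumption \ref{frand}(i), there exists a constant $\epsilon>0$, independent of the iteration, such that
\[
\mathbb{P}(A_{n+w}) \;=\; f_{\ccalV}^{\,n+w}(k\mid\ccalV^{n+w}) \;\ge\; \epsilon
\qquad\text{for every } w\ge 0.
\]
By Assumption \ref{frand}(ii), the draws $k_{\text{rand}}^{\,n+w}$ are independent across $w$, and therefore so are the events $\{A_{n+w}\}_{w\ge 0}$.

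Finally I would invoke the second Borel--Cantelli lemma. Since
\[
\sum_{w=0}^{\infty} \mathbb{P}(A_{n+w}) \;\ge\; \sum_{w=0}^{\infty} \epsilon \;=\; \infty,
\]
and the events are mutually independent, it follows that $\mathbb{P}\bigl(A_{n+w}\text{ occurs for infinitely many } w\bigr)=1$. Let $\ccalW\subseteq\mathbb{N}$ denote the (random) set of indices $w$ at which $A_{n+w}$ occurs; then $\ccalW$ is almost surely infinite, and at every $w\in\ccalW$ the subset selected as $\ccalV_{k_{\text{rand}}}^{\,n+w}$ has index $k$ and therefore contains $\ccalV_k^n$, as claimed. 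There is no real obstacle here beyond making the persistence-of-indexing observation explicit so that the Borel--Cantelli bound $\mathbb{P}(A_{n+w})\ge\epsilon$ applies uniformly across all iterations; Assumption \ref{frand} is tailored precisely to this argument.
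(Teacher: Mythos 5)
Your proof is correct and follows essentially the same route as the paper: define the events $\{k_{\text{rand}}^{\,n+w}=k\}$, lower-bound their probabilities by the uniform constant $\epsilon$ from Assumption \ref{frand}(i) so the series diverges, and conclude via the second Borel--Cantelli lemma using the independence from Assumption \ref{frand}(ii). Your explicit persistence-of-indexing observation is a small but welcome addition that the paper leaves implicit.
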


\begin{proof}
Let $A^{\text{rand},n+w}(k)=\{\ccalV_{k_{\text{rand}}}^{n+w}=\ccalV_{k}^n\}$, with $w\in\mathbb{N}$, denote the event that at iteration $n+w$ of Algorithm \ref{alg:RRT} the subset $\ccalV_{k}^n\subseteq\ccalV^n$ is selected by the sampling operation to be the set $\ccalV_{k_{\text{rand}}}^{n+w}$ [line \ref{rrt:samplekrand}, Alg. \ref{alg:RRT}]. Also, let $\mathbb{P}(A^{\text{rand},n+w}(k))$ denote the probability of this event, i.e., $\mathbb{P}(A^{\text{rand},n+w}(k))=f_{\ccalV}^{n+w}(k)$. 

Next, define the infinite sequence of events $A^{\text{rand}}=\{A^{\text{rand},n+w}(k)\}_{w=0}^{\infty}$, for a given subset $\ccalV_{k}^n\subseteq\ccalV^n$. In what follows, we show that the series $\sum_{w=0}^{\infty}\mathbb{P}(A^{\text{rand},n+w}(k))$ diverges and then we complete the proof by applying the Borel-Cantelli lemma \cite{grimmett2001probability}. 
%

Recall that by Assumption \ref{frand}(i) we have that  given any subset $\ccalV_{k}^n\subseteq\ccalV^n$, the probability $f_{\ccalV}^n(k|\ccalV^n)$ satisfies $f_{\ccalV}^n(k|\ccalV^n)\geq \epsilon$, for any iteration $n$. Thus we have that $\mathbb{P}(A^{\text{rand},n+w}(k))=f_{\ccalV}^{n+w}(k|\ccalV^{n+w})\geq \epsilon>0, 
$
for all $w\in\mathbb{N}$. Note that this result holds for any $k\in\set{1,\dots,K_{n+w}}$ due to Assumption \ref{frand}(i). 
%
Therefore, we have that $\sum_{w=0}^{\infty}\mathbb{P}(A^{\text{rand},n+w}(k))\geq\sum_{w=0}^{\infty}\epsilon.$
Since $\epsilon$ is a strictly positive constant, we have that $\sum_{w=0}^{\infty}\epsilon$ diverges. Then, 
we conclude that $\sum_{w=0}^{\infty}\mathbb{P}(A^{\text{rand},n+w}(k))=\infty.$
%
Combining this result with the fact that the events $A^{\text{rand},n+w}(k)$ are independent by Assumption \ref{frand}(ii), we get that $\mathbb{P}(\limsup_{k\to\infty} A^{\text{rand},n+w}(k))=1,$ by the Borel-Cantelli lemma. In other words, the events $A^{\text{rand},n+w}(k)$ occur infinitely often, for all $k\in\{1,\dots,K_n\}$. This equivalently means that for every subset $\ccalV_{k}^n\subseteq\ccalV^n$, for all $n\in\mathbb{N}_{+}$, there exists an infinite subsequence $\mathcal{W}\subseteq \mathbb{N}$ so that for all $w\in\mathcal{W}$ it holds $\ccalV_{k_{\text{rand}}}^{n+w}=\ccalV^{n}$, completing the proof.
\end{proof}

\begin{lem}[Sampling $\bbu_{\text{new}}$]
Consider any  subset $\ccalV_{k_{\text{rand}}}^n$ selected by $f_{\ccalV}$ and any fixed iteration index $n$. Then, for any given control input $\bbu\in\ccalU$, there exists an infinite number of subsequent iterations $n+w$, where $w\in\mathcal{W}'$ and $\mathcal{W}'\subseteq\mathcal{W}$ is a subsequence of the sequence of $\mathcal{W}$ defined in Lemma \ref{lem:qrand}, at which the control input $\bbu\in\ccalU$ is selected to be $\bbu_{\text{new}}^{n+w}$.
\label{lem:qnew}
\end{lem}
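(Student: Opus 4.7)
The plan is to mirror the Borel–Cantelli argument used in Lemma \ref{lem:qrand}, but now restricted to the infinite subsequence $\mathcal{W}\subseteq\mathbb{N}$ already produced by that lemma, and using Assumption \ref{fnew} in place of Assumption \ref{frand}. Specifically, fix the subset $\ccalV_k^n$ and let $\mathcal{W}$ be the infinite set of indices such that $\ccalV_{k_{\text{rand}}}^{n+w}=\ccalV_k^n$ for all $w\in\mathcal{W}$, as guaranteed by Lemma \ref{lem:qrand}. For a fixed control input $\bbu\in\ccalU$, I would introduce the event
\[
A^{\text{new},n+w}(\bbu)=\{\bbu_{\text{new}}^{n+w}=\bbu\},\qquad w\in\mathcal{W},
\]
which is well-defined precisely because, at these iterations, the algorithm does execute the sampling step on line \ref{rrt:sampleu}.

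The next step is to lower-bound the probability of each such event. By Assumption \ref{fnew}(i), the density $f_{\ccalU}^{n+w}(\bbu)\ge\zeta>0$ uniformly in $n+w$ and in $\bbu\in\ccalU$, so $\mathbb{P}(A^{\text{new},n+w}(\bbu))\ge\zeta$ for every $w\in\mathcal{W}$. Because $\mathcal{W}$ is an infinite index set, the series $\sum_{w\in\mathcal{W}}\mathbb{P}(A^{\text{new},n+w}(\bbu))\ge\sum_{w\in\mathcal{W}}\zeta$ diverges.

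Finally, independence of the draws — which is Assumption \ref{fnew}(ii) — allows the second Borel–Cantelli lemma to be applied to the sequence $\{A^{\text{new},n+w}(\bbu)\}_{w\in\mathcal{W}}$, yielding $\mathbb{P}(\limsup_{w\in\mathcal{W}} A^{\text{new},n+w}(\bbu))=1$. Equivalently, the set $\mathcal{W}'=\{w\in\mathcal{W}:\bbu_{\text{new}}^{n+w}=\bbu\}$ is almost surely infinite, which is exactly the claim of the lemma.

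I do not expect any significant obstacle: the structure is a verbatim analogue of Lemma \ref{lem:qrand}, and the only subtlety is being careful that the draws of $\bbu_{\text{new}}^{n+w}$ need only be analyzed along $w\in\mathcal{W}$ (since it is only at those iterations that $\ccalV_k^n$ is chosen and, by the for-loop on line \ref{rrt:forq}, a control input is drawn from $f_{\ccalU}$ for each $\bbq_{\text{rand}}\in\ccalV_{k_{\text{rand}}}^{n+w}=\ccalV_k^n$). The independence required for Borel–Cantelli is supplied directly by Assumption \ref{fnew}(ii), and the uniform lower bound $\zeta$ ensures divergence of the probability series exactly as $\epsilon$ did in Lemma \ref{lem:qrand}.
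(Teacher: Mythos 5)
Your proposal is correct and follows essentially the same route as the paper's own proof: restrict attention to the iterations indexed by $\mathcal{W}$ from Lemma \ref{lem:qrand}, lower-bound each selection probability by $\zeta$ via Assumption \ref{fnew}(i) so the probability series diverges, and invoke the second Borel--Cantelli lemma using the independence granted by Assumption \ref{fnew}(ii). No gaps to report.
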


\begin{proof}
Define the infinite sequence of events $A^{\text{new}}=\{A^{\text{new},n+w}(\bbu)\}_{w=0}^{\infty}$, for $\bbu\in\ccalU$, where $A^{\text{new},n+w}(\bbu)=\{\bbu_{\text{new}}^{n+w}=\bbu\}$, for $w\in\mathbb{N}$, denotes the event that at iteration $n+w$ of Algorithm \ref{alg:RRT} the control input $\bbu\in\ccalU$ is selected by the sampling function to be the input $\bbu_{\text{new}}^{n+w}$, given the subset $\ccalV_{k_{\text{rand}}}^n\in\ccalV_{k_{\text{rand}}}^{n+w}$. Moreover, let $\mathbb{P}(A^{\text{new},n+w}(\bbu))$ denote the probability of this event, i.e., $\mathbb{P}(A^{\text{new},n+w}(\bbu))=f_{\ccalU}^{n+w}(\bbu| \ccalV_{k_{\text{rand}}}^{n+w})$. 
Now, consider those iterations $n+w$ with $w\in\mathcal{W}$ such that $k_{\text{rand}}^{n+w}=k_{\text{rand}}^n$ by Lemma \ref{lem:qrand}. We will show that the series $\sum_{w\in\ccalW}\mathbb{P}(A^{\text{new},n+w}(\bbu))$ diverges and then we will use the Borel-Cantelli lemma to show that any given $\bbu\in\ccalU$ will be selected infinitely often to be control input $\bbu_{\text{new}}^{n+w}$.
By Assumption \ref{fnew}(i)  we have that $\mathbb{P}(A^{\text{new},n+w}(\bbu))=f_{\ccalU}^{n+w}(\bbu| \ccalV_{k_{\text{rand}}}^{n+w})$ is bounded below by a strictly positive constant $\zeta>0$ for all $w\in\ccalW$. 
Therefore, we have that $\sum_{w\in\ccalW}\mathbb{P}(A^{\text{new},n+w}(\bbu))$ diverges, since it is an infinite sum of a strictly positive constant term. Using this result along with the fact that the events $A^{\text{new},n+w}(\bbu)$ are independent, by Assumption \ref{fnew}(ii), we get that $\mathbb{P}(\limsup_{w\to\infty} A^{\text{new},n+w}(\bbu))=1,$ due to the Borel-Cantelli lemma. In words, this means that the events $A^{\text{new},n+w}(\bbu)$ for $w\in\mathcal{W}$ occur infinitely often. Thus, given any subset $\ccalV_{k_{\text{rand}}}^n$, for every control input $\bbu$ and for all $n\in\mathbb{N}_{+}$, there exists an infinite subsequence $\mathcal{W}' \subseteq \mathcal{W}$ so that for all $w\in\mathcal{W}'$ it holds $\bbu_{\text{new}}^{n+w}=\bbu$. 
\end{proof}
Before stating the next result we first define the \textit{reachable} state-space of a state $\bbq(t)=[\bbp(t),\ccalM(t), q_D(t)]\in\ccalV_{k}^n$, denoted by $\ccalR(\bbq(t))$, that collects all states $\bbq(t+1)=[\bbp(t+1),\ccalM(t+1), q_D(t+1)]$ that can be reached within one time step from $\bbq(t)$.


\begin{cor}[Reachable set $\ccalR(\bbq)$]\label{cor:reach}
Given any state $\bbq=[\bbp,\ccalM,q_D]\in\ccalV_{k}^n$, for any $k\in\{1,\dots,K_n\}$, all states that belong to the reachable set $\ccalR(\bbq)$ will be added to $\ccalV^{n+w}$, with probability 1, as $w\to\infty$, i.e., $\lim_{w\rightarrow\infty} \mathbb{P}\left(\{\ccalR(\bbq)\subseteq\mathcal{V}^{n+w}\}\right)=1.$
Also, edges from $\bbq(t)$ to all reachable states $\bbq'\in\ccalR(\bbq)$ will  be added to $\ccalE^{n+w}$, with probability 1, as $w\to\infty$, i.e., $\lim_{w\rightarrow\infty} \mathbb{P}\left(\{\cup_{\bbq'\in\ccalR(\bbq)}(\bbq,\bbq')\subseteq\mathcal{E}^{n+w}\}\right)=1.$
\end{cor}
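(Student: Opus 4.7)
The plan is to combine Lemmas~\ref{lem:qrand} and \ref{lem:qnew} with the deterministic nature of the \emph{extend} step, and then apply a union bound over the (finite) reachable set $\ccalR(\bbq)$. First, I would observe that because $\ccalV^n$ only grows with $n$ and because the bucket to which a node is assigned is fixed at the moment it is created, once $\bbq \in \ccalV_k^n$, we have $\bbq \in \ccalV_k^{n+w}$ for every $w \geq 0$. Therefore, tracking a specific state $\bbq$ inside a specific subset $\ccalV_k$ across iterations is well defined.

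Next, fix an arbitrary reachable state $\bbq' = [\bbp', \ccalM', q_D'] \in \ccalR(\bbq)$. By definition of $\ccalR(\bbq)$, there exists a control input $\bbu \in \ccalU$ such that $\bbp' = \bbf(\bbp, \bbu)$, $\Sigma' = \rho(\Sigma, \bbp')$, $\hat{\bbx}' = \hat{\bbx}(0)$, $d' = d(0)$ (yielding $\ccalM'$ deterministically), and $q_D' = \delta_D(q_D, L([\bbp,\ccalM]))$ is well defined (i.e., the feasibility check at line~\ref{rrt:feasTrans} passes). The idea is that every time the sampling step picks the index $k$ and then picks $\bbu$, the deterministic \emph{extend} step reproduces exactly $\bbq'$, which is then appended to $\ccalV$ together with the edge $(\bbq, \bbq')$.

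To formalize this, I would invoke Lemma~\ref{lem:qrand} to obtain an infinite subsequence $\ccalW \subseteq \mathbb{N}$ of iterations $n+w$ at which $\ccalV_{k_{\text{rand}}}^{n+w} = \ccalV_k^n$; at each such iteration the inner for-loop of line~\ref{rrt:forq} iterates over all nodes of $\ccalV_{k_{\text{rand}}}^{n+w}$, and in particular processes $\bbq$. Then I would apply Lemma~\ref{lem:qnew} to obtain a further infinite sub-subsequence $\ccalW' \subseteq \ccalW$ at which $\bbu_{\text{new}}^{n+w} = \bbu$. At every $w \in \ccalW'$ the \emph{extend} operation deterministically constructs $\bbq'$ and, since the DFA transition is feasible by assumption on $\bbq' \in \ccalR(\bbq)$, the lines \ref{rrt:updV}--\ref{rrt:updE} of Algorithm~\ref{alg:RRT} add $\bbq'$ to $\ccalV$ and the edge $(\bbq,\bbq')$ to $\ccalE$. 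Because this happens for some $w \in \ccalW'$ with probability $1$, we obtain
\[
\lim_{w\to\infty} \mathbb{P}\bigl(\{\bbq' \in \ccalV^{n+w}\} \cap \{(\bbq,\bbq') \in \ccalE^{n+w}\}\bigr) = 1.
\]

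Finally, since the control set $\ccalU$ is finite and the maps from $(\bbq,\bbu)$ to $\bbq'$ are deterministic, the reachable set $\ccalR(\bbq)$ is finite. A union bound over the finitely many elements of $\ccalR(\bbq)$ yields
\[
\lim_{w\to\infty} \mathbb{P}\bigl(\{\ccalR(\bbq)\subseteq \ccalV^{n+w}\}\bigr) = 1,
\qquad
\lim_{w\to\infty} \mathbb{P}\bigl(\bigl\{\cup_{\bbq' \in \ccalR(\bbq)}(\bbq,\bbq') \subseteq \ccalE^{n+w}\bigr\}\bigr) = 1,
\]
which is exactly the claim. The only mildly delicate step is the bookkeeping verifying that $\bbq$ persists inside the same subset $\ccalV_k$ across iterations so that Lemma~\ref{lem:qrand} can be applied to the specific node $\bbq$, rather than merely to some node sharing $\bbq$'s DFA/robot state; this relies on the finite decomposition rule of Section~\ref{sec:sample}, whereby nodes are placed in a bucket at creation and never migrate.
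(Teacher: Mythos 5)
Your proposal is correct and rests on exactly the same ingredients as the paper's proof: Lemmas \ref{lem:qrand} and \ref{lem:qnew} guarantee that the pair $(\ccalV_k^n,\bbu)$ is processed infinitely often, and the deterministic extend step then produces $\bbq'$ and the edge $(\bbq,\bbq')$. The paper phrases this as a two-case contradiction while you argue directly and finish with a union bound over the finite set $\ccalR(\bbq)$, but this is a presentational difference, not a different route.
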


\begin{proof}
We show this result by contradiction. Let $\bbq\in\ccalV_{k}^n$ and $\bbq'\in\ccalR(\bbq)$. Since $\bbq'\in\ccalR(\bbq)$ we have that there exists a control input $\bbu$ so that $\bbq'$ can be reached from $\bbq$.
Assume that the edge from $\bbq$ to $\bbq'$ will never be added to the tree. This can happen in the following two cases. (i) There exists no subsequent iteration $n+w$, for some $w>0$, where the set $\ccalV_{k}^n$ will be selected to be $\ccalV_{{k_{\text{rand}}}}^{n+w}$ and, therefore, no edges starting from $\bbq$ will ever be constructed. However, this contradicts Lemma \ref{lem:qrand}. (ii) There exists no subsequent iteration $n+w$, for some $w>0$, where the control input $u$ will be selected to be $\bbu_{\text{new}}^{n+w}$ and, therefore, the edge from $\bbq$ to $\bbq'$ will never be considered to be added to the tree. However, this contradicts Lemma \ref{lem:qnew}. Therefore, we conclude that all states that belong to the reachable set $\ccalR(\bbq)$ along with edges from $\bbq$ to the states in $\ccalR(\bbq)$ will eventually be added to the tree structure. In mathematical terms this can be written as $\lim_{w\rightarrow\infty} \mathbb{P}\left(\{\ccalR(\bbq(t))\subseteq\mathcal{V}^{n+w}\}\right)=1$ and 
$\lim_{w\rightarrow\infty} \mathbb{P}\left(\{\cup_{\bbq'\in\ccalR(\bbq)}(\bbq,\bbq')\subseteq\mathcal{E}^{n+w}\}\right)=1$, completing the proof.
\end{proof}



\begin{cor}\label{cor:paths}
Consider any (feasible or not) path $\bbq_{0:H}=\bbq(0),\bbq(1),\dots,\bbq(H)$, where $\bbq(h)\in\ccalR(\bbq(h-1))$, for all $h\in\{1,\dots,H\}$. Then Algorithm \ref{alg:RRT} will find $\bbq_{0:H}$ with probability $1$ as $n\to\infty$.
\end{cor}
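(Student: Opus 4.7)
My plan is to prove the corollary by finite induction on the position $h \in \{0, 1, \dots, H\}$ along the target path, showing that with probability one Algorithm~\ref{alg:RRT} eventually constructs a branch of $\ccalG^n$ whose vertex sequence realizes $\bbq(0), \bbq(1), \dots, \bbq(H)$.

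For the base case $h = 0$, the state $\bbq(0)$ is inserted into $\ccalV^{0}$ during the initialization step (line~\ref{rrt:init}) and so is present in the tree for every $n$. For the inductive step, I would assume that with probability one there exists a finite (random) iteration $N_h$ at which a tree node corresponding to $\bbq(h)$ has been added to $\ccalV^{N_h}$ and connected to the root through a branch that realizes $\bbq(0), \dots, \bbq(h)$. Upon insertion, this node is assigned to one of the subsets $\ccalV_k$, for some $k \leq K_{N_h}$. Because $\bbq(h+1) \in \ccalR(\bbq(h))$ by hypothesis on the target path, Corollary~\ref{cor:reach} then gives that, with probability one, after finitely many additional iterations the reachable state $\bbq(h+1)$ is added as a child of this node, together with the edge $(\bbq(h), \bbq(h+1))$, extending the branch by one step and furnishing the next iteration index $N_{h+1}$.

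Since $H$ is finite, this reasoning chains together only finitely many probability-one events, and the finite intersection of probability-one events has probability one. Consequently the entire path $\bbq_{0:H}$ appears as a branch of $\ccalG^n$ with probability one as $n \to \infty$, which is the desired conclusion.

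The main subtlety I anticipate is in the inductive step: Corollary~\ref{cor:reach} is an asymptotic statement, and I must verify that its hypotheses remain valid when the process is observed from the random finite time $N_h$ onward, so that applying the corollary at a stopping time is legitimate. This is precisely where Lemmas~\ref{lem:qrand} and \ref{lem:qnew} play the decisive role, since they guarantee that \emph{every} subset $\ccalV_k$ and \emph{every} admissible control input is sampled infinitely often, independently of the particular finite iteration at which the node for $\bbq(h)$ first appears. Thus the Borel-Cantelli arguments underlying Corollary~\ref{cor:reach} still apply to the subset containing the newly inserted copy of $\bbq(h)$, and the induction carries through cleanly.
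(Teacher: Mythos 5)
Your proposal is correct and follows essentially the same route as the paper: an induction along the path that repeatedly invokes Corollary~\ref{cor:reach} to add each successive state $\bbq(h+1)\in\ccalR(\bbq(h))$ and its edge, starting from the root $\bbq(0)$ which is present at initialization. Your added discussion of why Corollary~\ref{cor:reach} can be applied from the random finite iteration $N_h$ onward (via the infinitely-often sampling guarantees of Lemmas~\ref{lem:qrand} and~\ref{lem:qnew}) is a welcome extra point of rigor that the paper's proof leaves implicit, but it does not change the underlying argument.
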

\begin{proof}
By construction of the path $\bbq_{0:H}$, it holds that $\bbq(h)\in\ccalR(\bbq(h-1))$, for all $h\in\{1,\dots,H\}$. Since $\bbq(0)\in\ccalV^1$, it holds that all states $\bbq\in\ccalR(\bbq(0))$, including the state $\bbq(1)$ (i.e., the second state in the path $\bbq_{0:H}$), will be added to $\ccalV^n$ with probability $1$, as $n\to\infty$, due to Corollary \ref{cor:reach}. 
Once this happens, the edge $(\bbq(0),\bbq(1))$ will be added to set of edges $\ccalE^n$ due to Corollary \ref{cor:reach}. 
Applying Corollary \ref{cor:reach} inductively, we get that $\lim_{n\rightarrow\infty} \mathbb{P}\left(\{\bbq(h)\in\mathcal{V}^{n}\}\right)=1$ and $\lim_{n\rightarrow\infty} \mathbb{P}\left(\{(\bbq(h-1), \bbq(h))\in\mathcal{E}^{n}\}\right)=1$, for all $h\in\{1,\dots,H\}$ meaning that the path $\bbq_{0:H}$ will be added to the tree $\ccalG^n$ with probability $1$ as $n\to\infty$.
\end{proof}

{\bf{Proof of Theorem \ref{thm:probCompl}}:}  
Probabilistic completeness directly follows from Corollary \ref{cor:paths}. Specifically, since Algorithm \ref{alg:RRT} can find all possible paths $\bbq_{0:H}$, it will also find any feasible path (if they exist), that satisfy $\phi$. 

{\bf{Proof of Theorem \ref{thm:asOpt}}:} The proof of this result follows from Theorem \ref{thm:probCompl}. Specifically, recall from Theorem \ref{thm:probCompl} that Algorithm \ref{alg:RRT} can find any feasible path and, therefore, the optimal path, as well, with probability $1$, as $n\to\infty$. 


\section{Designing Biased Sampling Functions}\label{sec:biasedSampling}
As discussed in Section \ref{sec:complOpt}, any mass functions $f_{\ccalV}$ and $f_{\ccalU}$ that satisfy Assumptions \ref{frand} and \ref{fnew}, respectively, can be employed in Algorithm \ref{alg:RRT} without sacrificing its completeness and optimality guarantees. 
In what follows, we describe a non-uniform/biased sampling strategy, originally developed in \cite{kantaros2019optimal}, allowing us to address large-scale planning tasks. 


\subsection{Pruning the DFA}\label{sec:prune}

We first prune the DFA by removing transitions that are unlikely to be enabled. Particularly, these are DFA transitions that are enabled when a robot $j$ has to satisfy atomic propositions that require it to be present in more than one location \textit{simultaneously}. Hereafter, these DFA transitions are called \textit{infeasible}. 
For instance, a DFA transition that is enabled if a robot is required to be simultaneously in more than one known disjoint region of interest, is classified as infeasible; see \cite{kantaros2018text} for more details. 
%
Determining infeasible transitions becomes more challenging for transitions that depend on probabilistic predicates as e.g., in \eqref{ap1}. The reason is that search over the space of distributions is required to determine if there exist robot positions that satisfy a Boolean formula defined over such predicates; recall that the map distribution is updated during the execution of Algorithm \ref{alg:RRT}. 
%
%
Hereafter, for simplicity, we classify such DFA transitions that require a robot to be close to more than one landmark at the same time as \textit{infeasible}.\footnote{Note that this is a conservative approach as transitions that are in fact feasible may be classified as infeasible transitions. Nevertheless, this does not affect the correctness of the proposed algorithm, since the biased sampling functions that will be constructed next still satisfy Assumptions \ref{frand} and \ref{fnew}. If in the pruned DFA, there is no path from the initial to the final state then this means that either the specification is infeasible or the task is feasible but the pruning was very conservative. In this case, a warning message can be returned to the user, and either uniform distributions  or the proposed biased distributions can be employed but using the original (and not the pruned) DFA.} 

Second, along the lines of \cite{kantaros2018text} we define a distance function $d: \ccalQ_D \times \ccalQ_D \rightarrow \mathbb{N}$ between any two DFA states, which captures the minimum number of transitions in the pruned DFA to reach a state $q_D'$ from a state $q_D$. This function is defined as follows
\begin{equation}
d(q_D,q_D')=\left\{
                \begin{array}{ll}
                  |SP_{q_D,q_D'}|, \mbox{if $SP_{q_D,q_D'}$ exists,}\\
                  \infty, ~~~~~~~~~\mbox{otherwise},
                \end{array}
              \right.
\end{equation}
where $SP_{q_D,q_D'}$ denotes the shortest path (in terms of hops) in the  pruned DFA from $q_D$ to $q_D'$ and $|SP_{q_D,q_D'}|$ stands for its cost (number of hops).
%
\subsection{Mass function $f_{\ccalV}$}\label{sec:fv}
%
To construct a non-uniform mass function $f_{\ccalV}$, first, we define the set $\ccalD_{\text{min}}$ that collects all tree nodes $\bbq=(\bbp,\ccalM,q_D)$ with the smallest distance $d(q_D,q_F)$, denoted by $d_{\text{min}}$, among all nodes in the tree, i.e., $\ccalD_{\text{min}}=\{\bbq=(\bbp,\ccalM,q_D)\in\ccalV~|~d(q_D,q_F)=d_{\text{min}} \}$. The set $\ccalD_{\text{min}}$ initially collects only the root and is updated (along with $d_{\text{min}}$) as new states are added to the tree. Given the set $\ccalD_{\text{min}}$, we define the set $\ccalK_{\text{min}}$ that collects the indices $k$ of the subsets $\ccalV_k$ that satisfy $\ccalV_k\cap\ccalD_{\text{min}}\neq\emptyset$.
Given the set $\ccalK_{\text{min}}$, the probability mass function $f_{\text{rand}}(k|\ccalV)$ is defined so that it is biased to select more often subsets $\ccalV_k\subseteq\ccalV$ that satisfy $k\in\ccalK_{\text{min}}$. Specifically, $f_{\text{rand}}(k|\ccalV)$ is defined as follows:
\begin{equation}\label{eq:frand}
f_{\text{rand}}(k|\ccalV)=\left\{
                \begin{array}{ll}
                  p_{\text{rand}}\frac{1}{|\ccalK_{\text{min}}|}, ~~~~~~~~~~~~\mbox{if}~k\in\ccalK_{\text{min}}\\
                  (1-p_{\text{rand}})\frac{1}{|\ccalV\setminus\ccalK_{\text{min}}|},~\mbox{otherwise},
                \end{array}
              \right.
\end{equation}
where $p_{\text{rand}}\in(0.5,1)$ stands for the probability of selecting \textit{any} subset $\ccalV_k$ that satisfies $k\in\ccalK_{\text{min}}$. Note that $p_{\text{rand}}$ can change with iterations $n$ but it should always satisfy $p_{\text{rand}}\in(0.5,1)$ to ensure that subsets $\ccalV_k$ with $k\in\ccalK_{\text{min}}$ are selected more often. 
\subsection{Mass function $f_{\ccalU}$}\label{sec:fu}
The sampling function $f_{\ccalU}(\bbu|k_{\text{rand}})$ is designed so that a state $\bbq=(\bbp,\ccalM,q_F)$ is reached by following the shortest path (in terms of hops) in the pruned DFA that connects $q_D^0$ to $q_F$.  
%
%
First, given  a state $\bbq_{\text{rand}}=(\bbp_{\text{rand}},\ccalM_{\text{rand}},q_D^{\text{rand}})$, we compute the next DFA state defined as $q_D^{\text{next}}=\delta_D(q_D^{\text{rand}},L(\bbp_{\text{rand}},\ccalM_{\text{rand}}))$. 
Next, we construct the reachable set $\ccalR_D(q_D^{\text{next}})$ that collects all states $q_D\in\ccalQ_D$ that can be reached in one hop in the \textit{pruned} DFA from $q_D^{\text{next}}$, for any observation/symbol $\sigma\in 2^{\mathcal{AP}}$, defined as $\ccalR_D(q_D^{\text{next}})=\{q_D\in\ccalQ_D|\exists  \sigma\in 2^{\mathcal{AP}}~\text{s.t.}~\delta_D(q_D^{\text{next}},\sigma)=q_D\}.$
%
Among all states in $\ccalR_D(q_D^{\text{next}})$ we select the state, denoted by $q_D^{\text{min}}$, with the minimum distance from $q_F$. 

\normalsize{Given $q_D^{\text{next}}$ and $q_D^{\text{min}}$, we select a symbol $\sigma$ that enables a transition from $q_D^{\text{next}}$ to $q_D^{\text{min}}$ in the pruned DFA. 
} 
Based on the selected symbol $\sigma$, we select locations that if every robot visits, then $\sigma$ is generated and transition from $q_D^{\text{next}}$ to $q_D^{\text{min}}$ is achieved.
To this end, first we select the atomic proposition that appears in $\sigma$ and is associated with robot $j$. Note that by definition of the feasible transitions and by construction of the pruned DFA, there is only one (if any) atomic proposition in $\sigma$ related to robot $j$. With slight abuse of notation, we denote this atomic proposition by $\pi_j\in\mathcal{AP}$.
%
For instance, the atomic proposition $\pi_j$ corresponding to the word $\sigma=\pi_p(\bbp,\ccalM,\{j, r_1,\delta_1,c_1\})\pi_p(\bbp,\ccalM,\{z,r_2,\delta_2,c_2\})$ is $\pi_j=\pi_p(\bbp,\ccalM,\{j,r_1,\delta_1,c_1\})$. Also, observe that there is no atomic proposition $\pi_s$ associated with any robot $s\neq j,z$.  
Next, given $\pi_j$, we select a landmark that if robot $j$ approaches/localizes, then $\pi_j$ may be satisfied. The selected landmark is denoted by $L_j$ and is used to design $f_{\text{new},i}(\bbu_i|k)$ so that control inputs that drive robot $j$ toward $L_j$ are selected more often than others. Note that for robots $j$ for which there is no predicate $\pi_j$ associated with them in $\sigma$, we have that $L_j=\varnothing$. In other words, the location of such robots $j$ does not play any role in generating the word $\sigma$. For instance, following the previous example, $L_j$ is selected to be the landmark that has the highest probability of having the label $c_1$, based on the discrete distribution $\bbd$, while for all other robots $s\neq j,z$ it holds $L_s=\varnothing$.

%
%

Given a landmark $L_j$, we construct the mass function $f_{\text{new},j}(\bbu_j|k)$ from which we sample a control input $\bbu_j$ as follows: 
%
\begin{equation}\label{eq:fnew}
f_{\text{new},j}(\bbu_j|k)=\left\{
                \begin{array}{ll}
                \frac{1}{|\ccalU_j|},~\mbox{if}~L_j=\varnothing,\\
                 p_{\text{new}},\mbox{if}~(L_j\neq\varnothing)\wedge
                   (\bbu_j=\bbu_j^*)\wedge (d_j\geq R_j)\\
                  (1-p_{\text{new}})\frac{1}{|\ccalU_j\setminus\set{\bbu_j^*}|},\mbox{if}~(L_j\neq\varnothing)\\ 
                  ~~~~~~~~~~~~~\wedge(\bbu_j\neq \bbu_j^*)\wedge (d_j\leq R_j),
                \end{array}
              \right.
\end{equation}
\normalsize
where (i) $d_j$ denotes the \textit{geodesic} distance between robot $j$ and the estimated position of landmark $L_j$, i.e., $d_j=\left\lVert\hat{\bbx}_{L_j}(t+1) - \bbp_j(t+1) \right\rVert_g$ \cite{kantaros2016global}, (ii) $R_j$ denotes the sensing range of robot $j$, and (iii) $\bbu_j^{*}\in\ccalU_j$ is the control input that minimizes the geodesic distance between $\bbp_j(t+1)$ and $\hat{\bbx}_i(t+1)$, i.e., $\bbu_j^*=\argmin_{\bbu_j\in\ccalU_j} \left\lVert \hat{\bbx}_i(t+1) - \bbp_j(t+1) \right\rVert_g,$
where $\left\lVert \cdot \right\rVert_g$ denotes the geodesic norm/distance. In words, in the first case of \eqref{eq:fnew}, a control input $\bbu_j$ is drawn from a uniform distribution if $L_j=\varnothing$. In the second case, the input $\bbu_j^{*}\in\ccalU_j$ is selected, if the (predicted) distance between robot $j$ and the assigned landmark $L_j$ is greater than the sensing range. In the third case, if the assigned landmark is  within the sensing range, random inputs are selected to get different views of the assigned landmark, which will contribute to decreasing the uncertainty of the assigned landmark. 
 
Finally, to compute the geodesic distance between $\bbp_j(t+1)$  and the estimated location of $L_j$, we first treat as `virtual/temporal' obstacles all landmarks that if visited, the Boolean condition under which transition from $q_D^{\text{next}}$ to $q_D^{\text{min}}$ is enabled, is false. For example, assume that transition from  $q_D^{\text{next}}$ to $q_D^{\text{min}}$ is possible if the following Boolean condition is true 
$\pi_p(\bbp,\ccalM,\{j, r_1,\delta_1,c_1\})\wedge\pi_p(\bbp,\ccalM,\{z,r_2,\delta_2,c_2\})\wedge[\neg \pi_p(\bbp,\ccalM,\{j, \ell_1, r_1,\delta_1\})]$
which is satisfied by the following symbol $\sigma=\pi_p(\bbp,\ccalM,\{j, r_1,\delta_1,c_1\})\pi_p(\bbp,\ccalM,\{z,r_2,\delta_2,c_2\})$ and is violated if $\pi_p(\bbp,\ccalM,\{j, \ell_1, r_1,\delta_1\}$ is observed, i.e., if robot $j$ approaches the landmark $\ell_1$ as per the probabilistic requirements captured by $\pi_p(\bbp,\ccalM,\{j, \ell_1, r_1,\delta_1\}$. For such `virtual' obstacles, we first compute a $\epsilon$-confidence interval around the mean position of landmark $\ell_1$, for some $\epsilon>0$ (e.g., $\epsilon=90\%$). This $\epsilon$-confidence interval is then treated as a (known) physical obstacle in the workspace. Then, we compute the \textit{geodesic} distance between $\bbp_j(t+1)$ and  such virtual obstacles. 
Essentially, this empirical approach minimizes the probability of generating an observation that violates the assigned task specification. Observe that the `virtual' obstacles are different for each robot while they may also depend on the global state $\bbp(t)$ (and not solely on individual robot states $\bbp_j(t)$). For instance, consider the Boolean condition 
$\pi_p(\bbp,\ccalM,\{j, r_1,\delta_1,c_1\})\wedge\pi_p(\bbp,\ccalM,\{z,r_2,\delta_2,c_2\})\wedge\{[\neg \pi_p(\bbp,\ccalM,\{j, \ell_1, r_1,\delta_1\})] \wedge [\neg \pi_p(\bbp,\ccalM,\{z, \ell_2, r_2,\delta_2\})]\}$
%
This is violated only if \textit{both} robots $j$ and $z$ are sufficiently close to landmarks $\ell_1$ and $\ell_2$, respectively. Such `virtual' obstacles are ignored, i.e., in this case, $\ell_1$ is not treated as an obstacle for robot $j$.

\bibliographystyle{IEEEtran}
\bibliography{YK_bib.bib}

\begin{thebibliography}{10}
\providecommand{\url}[1]{#1}
\csname url@samestyle\endcsname
\providecommand{\newblock}{\relax}
\providecommand{\bibinfo}[2]{#2}
\providecommand{\BIBentrySTDinterwordspacing}{\spaceskip=0pt\relax}
\providecommand{\BIBentryALTinterwordstretchfactor}{4}
\providecommand{\BIBentryALTinterwordspacing}{\spaceskip=\fontdimen2\font plus
\BIBentryALTinterwordstretchfactor\fontdimen3\font minus
  \fontdimen4\font\relax}
\providecommand{\BIBforeignlanguage}[2]{{%
\expandafter\ifx\csname l@#1\endcsname\relax
\typeout{** WARNING: IEEEtran.bst: No hyphenation pattern has been}%
\typeout{** loaded for the language `#1'. Using the pattern for}%
\typeout{** the default language instead.}%
\else
\language=\csname l@#1\endcsname
\fi
#2}}
\providecommand{\BIBdecl}{\relax}
\BIBdecl

\bibitem{lavalle2006planning}
S.~M. LaValle, \emph{Planning algorithms}.\hskip 1em plus 0.5em minus
  0.4em\relax Cambridge university press, 2006.

\bibitem{fainekos2005temporal}
G.~E. Fainekos, H.~Kress-Gazit, and G.~J. Pappas, ``Temporal logic motion
  planning for mobile robots,'' in \emph{IEEE International Conference on
  Robotics and Automation (ICRA)}, Barcelona, Spain, April 2005, pp.
  2020--2025.

\bibitem{guo2017distributed}
M.~Guo and M.~M. Zavlanos, ``Distributed data gathering with buffer constraints
  and intermittent communication,'' in \emph{IEEE International Conference on
  Robotics and Automation}, Singapore, 2017, pp. 279--284.

\bibitem{leahy2016persistent}
K.~Leahy, D.~Zhou, C.-I. Vasile, K.~Oikonomopoulos, M.~Schwager, and C.~Belta,
  ``Persistent surveillance for unmanned aerial vehicles subject to charging
  and temporal logic constraints,'' \emph{Autonomous Robots}, vol.~40, no.~8,
  pp. 1363--1378, 2016.

\bibitem{baier2008principles}
C.~Baier and J.-P. Katoen, \emph{Principles of model checking}.\hskip 1em plus
  0.5em minus 0.4em\relax MIT press Cambridge, 2008, vol. 26202649.

\bibitem{clarke1999model}
E.~M. Clarke, O.~Grumberg, and D.~Peled, \emph{Model checking}.\hskip 1em plus
  0.5em minus 0.4em\relax MIT press, 1999.

\bibitem{fainekos2005hybrid}
G.~E. Fainekos, H.~Kress-Gazit, and G.~J. Pappas, ``Hybrid controllers for path
  planning: A temporal logic approach,'' in \emph{44th IEEE Conference on
  Decision and Control, European Control Conference, (CDC-ECC)}, Seville,
  Spain, 2005, pp. 4885--4890.

\bibitem{kress2007s}
H.~Kress-Gazit, G.~E. Fainekos, and G.~J. Pappas, ``Where's waldo? sensor-based
  temporal logic motion planning,'' in \emph{Robotics and Automation, 2007 IEEE
  International Conference on}.\hskip 1em plus 0.5em minus 0.4em\relax IEEE,
  2007, pp. 3116--3121.

\bibitem{smith2011optimal}
S.~L. Smith, J.~Tumova, C.~Belta, and D.~Rus, ``Optimal path planning for
  surveillance with temporal-logic constraints,'' \emph{The International
  Journal of Robotics Research}, vol.~30, no.~14, pp. 1695--1708, 2011.

\bibitem{tumova2016multi}
J.~Tumova and D.~V. Dimarogonas, ``Multi-agent planning under local ltl
  specifications and event-based synchronization,'' \emph{Automatica}, vol.~70,
  pp. 239--248, 2016.

\bibitem{chen2012formal}
Y.~Chen, X.~C. Ding, A.~Stefanescu, and C.~Belta, ``Formal approach to the
  deployment of distributed robotic teams,'' \emph{IEEE Transactions on
  Robotics}, vol.~28, no.~1, pp. 158--171, 2012.

\bibitem{ulusoy2014optimal}
A.~Ulusoy, S.~L. Smith, and C.~Belta, ``Optimal multi-robot path planning with
  ltl constraints: guaranteeing correctness through synchronization,'' in
  \emph{Distributed Autonomous Robotic Systems}.\hskip 1em plus 0.5em minus
  0.4em\relax Springer, 2014, pp. 337--351.

\bibitem{shoukry2017linear}
Y.~Shoukry, P.~Nuzzo, A.~Balkan, I.~Saha, A.~L. Sangiovanni-Vincentelli, S.~A.
  Seshia, G.~J. Pappas, and P.~Tabuada, ``Linear temporal logic motion planning
  for teams of underactuated robots using satisfiability modulo convex
  programming,'' in \emph{IEEE 56th Annual Conference on Decision and Control
  (CDC)}, December 2017, pp. 1132--1137.

\bibitem{vasile2013sampling}
C.~I. Vasile and C.~Belta, ``Sampling-based temporal logic path planning,'' in
  \emph{IEEE/RSJ International Conference on Intelligent Robots and Systems},
  Tokyo, Japan, November 2013, pp. 4817--4822.

\bibitem{kantaros2018text}
Y.~Kantaros and M.~M. Zavlanos, ``Stylus*: A temporal logic optimal control
  synthesis algorithm for large-scale multi-robot systems,''
  \emph{International Journal of Robotics Research}, 2020.

\bibitem{dorsa2014learning}
D.~Sadigh, E.~S. Kim, S.~Coogan, S.~S. Sastry, and S.~A. Seshia, ``A learning
  based approach to control synthesis of {M}arkov decision processes for linear
  temporal logic specifications,'' in \emph{CDC}, 2014, pp. 1091--1096.

\bibitem{li2017reinforcement}
X.~Li, C.-I. Vasile, and C.~Belta, ``Reinforcement learning with temporal logic
  rewards,'' in \emph{IROS}.\hskip 1em plus 0.5em minus 0.4em\relax IEEE, 2017,
  pp. 3834--3839.

\bibitem{jones}
A.~Jones, D.~Aksaray, Z.~Kong, M.~Schwager, and C.~Belta, ``Robust satisfaction
  of temporal logic specifications via reinforcement learning,'' \emph{arXiv
  preprint arXiv:1510.06460}, 2015.

\bibitem{hasanbeig2019reinforcement}
M.~Hasanbeig, Y.~Kantaros, A.~Abate, D.~Kroening, G.~J. Pappas, and I.~Lee,
  ``Reinforcement learning for temporal logic control synthesis with
  probabilistic satisfaction guarantees,'' in \emph{Proc. of the 58th IEEE
  Conference on Decision and Control (CDC)}, Nice, France, December 2019.

\bibitem{jones2013distribution}
A.~Jones, M.~Schwager, and C.~Belta, ``Distribution temporal logic: Combining
  correctness with quality of estimation,'' in \emph{52nd IEEE Conference on
  Decision and Control}.\hskip 1em plus 0.5em minus 0.4em\relax IEEE, 2013, pp.
  4719--4724.

\bibitem{haesaert2018temporal}
S.~Haesaert, P.~Nilsson, C.~I. Vasile, R.~Thakker, A.-a. Agha-mohammadi, A.~D.
  Ames, and R.~M. Murray, ``Temporal logic control of pomdps via label-based
  stochastic simulation relations,'' \emph{IFAC-PapersOnLine}, vol.~51, no.~16,
  pp. 271--276, 2018.

\bibitem{bianco1995model}
A.~Bianco and L.~De~Alfaro, ``Model checking of probabilistic and
  nondeterministic systems,'' in \emph{Foundations of Software Technology and
  Theoretical Computer Science}.\hskip 1em plus 0.5em minus 0.4em\relax
  Springer, 1995, pp. 499--513.

\bibitem{donaldson2008monte}
R.~Donaldson and D.~Gilbert, ``A monte carlo model checker for probabilistic
  ltl with numerical constraints,'' \emph{Dept. of Computing Science,
  University of Glasgow, Research Report TR-2008-282}, 2008.

\bibitem{fainekos2009robustness}
G.~E. Fainekos and G.~J. Pappas, ``Robustness of temporal logic specifications
  for continuous-time signals,'' \emph{Theoretical Computer Science}, vol. 410,
  no.~42, pp. 4262--4291, 2009.

\bibitem{yoo2015control}
C.~Yoo and C.~Belta, ``Control with probabilistic signal temporal logic,''
  \emph{arXiv preprint arXiv:1510.08474}, 2015.

\bibitem{lindemann2020barrier}
L.~Lindemann and D.~V. Dimarogonas, ``Barrier function based collaborative
  control of multiple robots under signal temporal logic tasks,'' \emph{IEEE
  Transactions on Control of Network Systems}, vol.~7, no.~4, pp. 1916--1928,
  2020.

\bibitem{sun2020stochastic}
K.~Sun, B.~Schlotfeldt, G.~J. Pappas, and V.~Kumar, ``Stochastic motion
  planning under partial observability for mobile robots with continuous range
  measurements,'' \emph{IEEE Transactions on Robotics}, 2020.

\bibitem{hsu1997path}
D.~Hsu, J.-C. Latombe, and R.~Motwani, ``Path planning in expansive
  configuration spaces,'' in \emph{Proceedings of International Conference on
  Robotics and Automation}, vol.~3.\hskip 1em plus 0.5em minus 0.4em\relax
  IEEE, 1997, pp. 2719--2726.

\bibitem{sucan2011sampling}
I.~A. Sucan and L.~E. Kavraki, ``A sampling-based tree planner for systems with
  complex dynamics,'' \emph{IEEE Transactions on Robotics}, vol.~28, no.~1, pp.
  116--131, 2011.

\bibitem{kantaros2019optimal}
Y.~Kantaros and G.~J. Pappas, ``Optimal temporal logic planning for multi-robot
  systems in uncertain semantic maps,'' in \emph{IEEE/RSJ International
  Conference on Intelligent Robots and Systems (IROS)}, Macao, November 2019,
  pp. 4127--4132.

\bibitem{atanasov2014information}
\BIBentryALTinterwordspacing
N.~Atanasov, J.~Le~Ny, K.~Daniilidis, and G.~J. Pappas, ``Information
  acquisition with sensing robots: Algorithms and error bounds,'' in \emph{IEEE
  International Conference on Robotics and Automation}, Hong Kong, China, 2014,
  pp. 6447--6454. [Online]. Available:
  \url{https://ieeexplore.ieee.org/document/6907811}
\BIBentrySTDinterwordspacing

\bibitem{atanasov2015decentralized}
------, ``Decentralized active information acquisition: Theory and application
  to multi-robot {SLAM},'' in \emph{IEEE International Conference on Robotics
  and Automation}, Seattle, WA, 2015, pp. 4775--4782.

\bibitem{bowman2017probabilistic}
S.~L. Bowman, N.~Atanasov, K.~Daniilidis, and G.~J. Pappas, ``Probabilistic
  data association for semantic {SLAM},'' in \emph{IEEE International
  Conference on Robotics and Automation}, Singapore, May-June 2017, pp.
  1722--1729.

\bibitem{rosinol2020kimera}
A.~Rosinol, M.~Abate, Y.~Chang, and L.~Carlone, ``Kimera: an open-source
  library for real-time metric-semantic localization and mapping,'' in
  \emph{IEEE International Conference on Robotics and Automation (ICRA)},
  Paris, France, June 2020, pp. 1689--1696.

\bibitem{karaman2012sampling}
S.~Karaman and E.~Frazzoli, ``Sampling-based algorithms for optimal motion
  planning with deterministic $\mu$-calculus specifications,'' in
  \emph{American Control Conference (ACC)}, Montreal, Canada, June 2012, pp.
  735--742.

\bibitem{bhatia2010sampling}
A.~Bhatia, L.~E. Kavraki, and M.~Y. Vardi, ``Sampling-based motion planning
  with temporal goals,'' in \emph{International Conference on Robotics and
  Automation (ICRA)}, Anchorage, AL, 2010, pp. 2689--2696.

\bibitem{bhatia2011motion}
A.~Bhatia, M.~R. Maly, L.~E. Kavraki, and M.~Y. Vardi, ``Motion planning with
  complex goals,'' \emph{IEEE Robotics \& Automation Magazine}, vol.~18, no.~3,
  pp. 55--64, 2011.

\bibitem{luo2021abstraction}
X.~Luo, Y.~Kantaros, and M.~M. Zavlanos, ``An abstraction-free method for
  multirobot temporal logic optimal control synthesis,'' \emph{IEEE
  Transactions on Robotics}, 2021.

\bibitem{karaman2011sampling}
S.~Karaman and E.~Frazzoli, ``Sampling-based algorithms for optimal motion
  planning,'' \emph{The International Journal of Robotics Research}, vol.~30,
  no.~7, pp. 846--894, 2011.

\bibitem{belta2005discrete}
C.~Belta, V.~Isler, and G.~J. Pappas, ``Discrete abstractions for robot motion
  planning and control in polygonal environments,'' \emph{IEEE Transactions on
  Robotics}, vol.~21, no.~5, pp. 864--874, 2005.

\bibitem{guo2013revising}
M.~Guo, K.~H. Johansson, and D.~V. Dimarogonas, ``Revising motion planning
  under linear temporal logic specifications in partially known workspaces,''
  in \emph{IEEE International Conference on Robotics and Automation (ICRA)},
  Karlsruhe, Germany, 2013, pp. 5025--5032.

\bibitem{guo2015multi}
M.~Guo and D.~V. Dimarogonas, ``Multi-agent plan reconfiguration under local
  ltl specifications,'' \emph{The International Journal of Robotics Research},
  vol.~34, no.~2, pp. 218--235, 2015.

\bibitem{maly2013iterative}
M.~R. Maly, M.~Lahijanian, L.~E. Kavraki, H.~Kress-Gazit, and M.~Y. Vardi,
  ``Iterative temporal motion planning for hybrid systems in partially unknown
  environments,'' in \emph{Proceedings of the 16th international conference on
  Hybrid systems: computation and control}.\hskip 1em plus 0.5em minus
  0.4em\relax ACM, 2013, pp. 353--362.

\bibitem{lahijanian2016iterative}
M.~Lahijanian, M.~R. Maly, D.~Fried, L.~E. Kavraki, H.~Kress-Gazit, and M.~Y.
  Vardi, ``Iterative temporal planning in uncertain environments with partial
  satisfaction guarantees,'' \emph{IEEE Transactions on Robotics}, vol.~32,
  no.~3, pp. 583--599, 2016.

\bibitem{livingston2012backtracking}
S.~C. Livingston, R.~M. Murray, and J.~W. Burdick, ``Backtracking temporal
  logic synthesis for uncertain environments,'' in \emph{2012 IEEE
  International Conference on Robotics and Automation}.\hskip 1em plus 0.5em
  minus 0.4em\relax IEEE, 2012, pp. 5163--5170.

\bibitem{livingston2013patching}
S.~C. Livingston, P.~Prabhakar, A.~B. Jose, and R.~M. Murray, ``Patching
  task-level robot controllers based on a local $\mu$-calculus formula,'' in
  \emph{2013 IEEE International Conference on Robotics and Automation}, 2013,
  pp. 4588--4595.

\bibitem{kantaros2020reactive}
Y.~Kantaros, M.~Malencia, V.~Kumar, and G.~J. Pappas, ``Reactive temporal logic
  planning for multiple robots in unknown environments,'' Paris, France, June
  2020, pp. 11\,479--11\,485.

\bibitem{grisetti2007improved}
G.~Grisetti, C.~Stachniss, W.~Burgard \emph{et~al.}, ``Improved techniques for
  grid mapping with rao-blackwellized particle filters,'' \emph{IEEE
  transactions on Robotics}, vol.~23, no.~1, p.~34, 2007.

\bibitem{pola2008approximately}
G.~Pola, A.~Girard, and P.~Tabuada, ``Approximately bisimilar symbolic models
  for nonlinear control systems,'' \emph{Automatica}, vol.~44, no.~10, pp.
  2508--2516, 2008.

\bibitem{desaisoter}
A.~Desai, S.~Ghosh, S.~A. Seshia, N.~Shankar, and A.~Tiwari, ``Soter: A runtime
  assurance framework for programming safe robotics systems.''

\bibitem{fu2016optimal}
J.~Fu, N.~Atanasov, U.~Topcu, and G.~J. Pappas, ``Optimal temporal logic
  planning in probabilistic semantic maps,'' in \emph{IEEE International
  Conference on Robotics and Automation}, Stockholm, Sweden, 2016, pp.
  3690--3697.

\bibitem{freundlich2018distributed}
C.~Freundlich, S.~Lee, and M.~M. Zavlanos, ``Distributed active state
  estimation with user-specified accuracy,'' \emph{IEEE Transactions on
  Automatic Control}, vol.~63, no.~2, pp. 418--433, 2018.

\bibitem{redmon2016you}
J.~Redmon, S.~Divvala, R.~Girshick, and A.~Farhadi, ``You only look once:
  Unified, real-time object detection,'' in \emph{Proceedings of the IEEE
  conference on computer vision and pattern recognition}, 2016, pp. 779--788.

\bibitem{ren2016faster}
S.~Ren, K.~He, R.~Girshick, and J.~Sun, ``Faster r-cnn: towards real-time
  object detection with region proposal networks,'' \emph{IEEE transactions on
  pattern analysis and machine intelligence}, vol.~39, no.~6, pp. 1137--1149,
  2016.

\bibitem{guo2017calibration}
C.~Guo, G.~Pleiss, Y.~Sun, and K.~Q. Weinberger, ``On calibration of modern
  neural networks,'' in \emph{International Conference on Machine
  Learning}.\hskip 1em plus 0.5em minus 0.4em\relax PMLR, 2017, pp. 1321--1330.

\bibitem{atanasov2016localization}
N.~Atanasov, M.~Zhu, K.~Daniilidis, and G.~J. Pappas, ``Localization from
  semantic observations via the matrix permanent,'' \emph{The International
  Journal of Robotics Research}, vol.~35, no. 1-3, pp. 73--99, 2016.

\bibitem{luders2010chance}
B.~Luders, M.~Kothari, and J.~How, ``Chance constrained rrt for probabilistic
  robustness to environmental uncertainty,'' in \emph{AIAA guidance,
  navigation, and control conference}, 2010, p. 8160.

\bibitem{lahijanian2016specification}
M.~Lahijanian and M.~Kwiatkowska, ``Specification revision for markov decision
  processes with optimal trade-off,'' in \emph{2016 IEEE 55th Conference on
  Decision and Control (CDC)}.\hskip 1em plus 0.5em minus 0.4em\relax IEEE,
  2016, pp. 7411--7418.

\bibitem{narasimhan2020seeing}
M.~Narasimhan, E.~Wijmans, X.~Chen, T.~Darrell, D.~Batra, D.~Parikh, and
  A.~Singh, ``Seeing the un-scene: Learning amodal semantic maps for room
  navigation,'' in \emph{European Conference on Computer Vision}.\hskip 1em
  plus 0.5em minus 0.4em\relax Springer, 2020, pp. 513--529.

\bibitem{yel2020gp}
E.~Yel and N.~Bezzo, ``Gp-based runtime planning, learning, and recovery for
  safe uav operations under unforeseen disturbances,'' in \emph{IEEE/RSJ
  International Conference on Intelligent Robots and Systems (IROS)}, 2020, pp.
  2173--2180.

\bibitem{lindemann2021robust}
L.~Lindemann, M.~Cleaveland, Y.~Kantaros, and G.~J. Pappas, ``Robust motion
  planning in the presence of estimation uncertainty,'' in \emph{IEEE
  conference on Decision and Control (arXiv preprint arXiv:2108.11983)},
  December 2021.

\bibitem{ryll2019efficient}
M.~Ryll, J.~Ware, J.~Carter, and N.~Roy, ``Efficient trajectory planning for
  high speed flight in unknown environments,'' in \emph{International
  Conference on Robotics and Automation (ICRA)}, Montreal, Canada, May 2019,
  pp. 732--738.

\bibitem{elhafsi2020map}
A.~Elhafsi, B.~Ivanovic, L.~Janson, and M.~Pavone, ``Map-predictive motion
  planning in unknown environments,'' in \emph{2020 IEEE International
  Conference on Robotics and Automation (ICRA)}.\hskip 1em plus 0.5em minus
  0.4em\relax IEEE, 2020, pp. 8552--8558.

\bibitem{leung2012decentralized}
K.~Y. Leung, T.~D. Barfoot, and H.~H. Liu, ``Decentralized cooperative slam for
  sparsely-communicating robot networks: A centralized-equivalent approach,''
  \emph{Journal of Intelligent \& Robotic Systems}, vol.~66, no.~3, pp.
  321--342, 2012.

\bibitem{kantaros2019asymptotically}
Y.~Kantaros, B.~Schlotfeldt, N.~Atanasov, and G.~J. Pappas, ``Asymptotically
  optimal planning for non-myopic multi-robot information gathering,''
  \emph{Proceedings of Robotics: Science and Systems (RSS)}, pp. 22--26, June
  2019.

\bibitem{summers2018distributionally}
T.~Summers, ``Distributionally robust sampling-based motion planning under
  uncertainty,'' in \emph{2018 IEEE/RSJ International Conference on Intelligent
  Robots and Systems (IROS)}, 2018, pp. 6518--6523.

\bibitem{renganathan2020towards}
V.~Renganathan, I.~Shames, and T.~H. Summers, ``Towards integrated perception
  and motion planning with distributionally robust risk constraints,''
  \emph{IFAC-PapersOnLine}, vol.~53, no.~2, pp. 15\,530--15\,536, 2020.

\bibitem{SimSemMaps}
``Simulation videos: Safe planning over uncertain semantic maps,''
  \texttt{\url{https://youtu.be/mVW_7ZLpQgA}}.

\bibitem{schlotfeldt2018anytime}
\BIBentryALTinterwordspacing
B.~Schlotfeldt, D.~Thakur, N.~Atanasov, V.~Kumar, and G.~J. Pappas, ``Anytime
  planning for decentralized multirobot active information gathering,''
  \emph{IEEE Robotics and Automation Letters}, vol.~3, no.~2, pp. 1025--1032,
  2018. [Online]. Available: \url{https://ieeexplore.ieee.org/document/8260881}
\BIBentrySTDinterwordspacing

\bibitem{duret2004spot}
A.~Duret-Lutz and D.~Poitrenaud, ``Spot: an extensible model checking library
  using transition-based generalized bu/spl uml/chi automata,'' in \emph{The
  IEEE Computer Society's 12th Annual International Symposium on Modeling,
  Analysis, and Simulation of Computer and Telecommunications Systems,
  2004.(MASCOTS 2004). Proceedings.}\hskip 1em plus 0.5em minus 0.4em\relax
  IEEE, 2004, pp. 76--83.

\bibitem{Furrer2016}
\BIBentryALTinterwordspacing
F.~Furrer, M.~Burri, M.~Achtelik, and R.~Siegwart, \emph{Robot Operating System
  (ROS): The Complete Reference (Volume 1)}.\hskip 1em plus 0.5em minus
  0.4em\relax Cham: Springer International Publishing, 2016, ch. RotorS---A
  Modular Gazebo MAV Simulator Framework, pp. 595--625. [Online]. Available:
  \url{http://dx.doi.org/10.1007/978-3-319-26054-9_23}
\BIBentrySTDinterwordspacing

\bibitem{mellinger2011minimum}
D.~Mellinger and V.~Kumar, ``Minimum snap trajectory generation and control for
  quadrotors,'' in \emph{2011 IEEE International Conference on Robotics and
  Automation}, 2011, pp. 2520--2525.

\bibitem{grimmett2001probability}
G.~Grimmett and D.~Stirzaker, \emph{Probability and random processes}.\hskip
  1em plus 0.5em minus 0.4em\relax Oxford university press, 2001.

\bibitem{kantaros2016global}
Y.~Kantaros and M.~M. Zavlanos, ``Global planning for multi-robot communication
  networks in complex environments.'' \emph{IEEE Transactions on Robotics},
  vol.~32, no.~5, pp. 1045--1061, 2016.

\end{thebibliography}

\end{document}